\newcommand{\bc}[1]{\left\{{#1}\right\}}
\newcommand{\br}[1]{\left({#1}\right)}
\newcommand{\bs}[1]{\left[{#1}\right]}
\newcommand{\abs}[1]{\left| {#1} \right|}
\newcommand{\floor}[1]{\left\lfloor #1 \right\rfloor}
\newcommand{\norm}[1]{\left\| {#1} \right\|}
\DeclareMathOperator*{\argmin}{arg\,min}
\newcounter{protocol}
\newenvironment{protocol}[1][htb]{%
  \let\c@algorithm\c@protocol
  \renewcommand{\ALG@name}{Protocol}
  \begin{algorithm}[#1]%
  }{\end{algorithm}
}
\newtheorem{theorem}{\protect\theoremname}
  \newtheorem{lemma}{\protect\lemmaname}
  \newtheorem{defn}{\protect\definitionname}
  \newtheorem{remark}{Remark}
    \newtheorem{assumption}{\protect\assumptionname}
\providecommand{\definitionname}{Definition}
\providecommand{\examplename}{Example}
\providecommand{\lemmaname}{Lemma}
\providecommand{\corrolaryname}{Corollary}
\providecommand{\propositionname}{Proposition}
\providecommand{\conditionsname}{Conditions}
\providecommand{\theoremname}{Theorem}
\providecommand{\assumptionname}{Assumption}
\newcommand{\trans}{\intercal}
\newcommand{\mcf}{\mathcal}
\newcommand{\mbf}{\mathbf}
\newcommand{\sspace}{\mcf{S}}     
\newcommand{\aspace}{\mcf{A}}     
\newcommand{\innerprod}[2]{\left\langle{#1},{#2}\right\rangle}
\newcommand{\expert}{{\pi_{\textup{E}}}}
\newcommand{\mbs}{\boldsymbol}
\newcommand{\cost}{\mbf{c}}
\newcommand{\true}{\mbf{c_{\textup{true}}}}
\newcommand{\weight}{\mbf{w}}
\newcommand{\initial}{\mbs{\nu}_0}
\newcommand{\phim}{\mbs{\Phi}}
\title{ Imitation Learning in Discounted  Linear MDP without exploration assumptions.}
\author{
\and
\textbf{Luca Viano} \\
\texttt{luca.viano@epfl.ch} \\
LIONS \\
EPFL\and
\textbf{Stratis Skoulakis} \\
\texttt{stratis.skoulakis@epfl.ch} \\
LIONS \\
EPFL 
\and
\textbf{Volkan Cevher} \\
\texttt{volkan.cevher@epfl.ch} \\
LIONS \\
EPFL
}
\begin{document}

\maketitle

\begin{abstract}
We present a new algorithm for imitation learning in infinite horizon linear MDPs dubbed ILARL which greatly improves the bound on the number of trajectories that the learner needs to sample from the environment. 
In particular, we remove exploration assumptions required in previous works and we improve the dependence on the desired accuracy $\epsilon$ from $\mathcal{O}\br{\epsilon^{-5}}$ to $\mathcal{O}\br{\epsilon^{-4}}$.
Our result relies on a connection between imitation learning and online learning in MDPs with adversarial losses. For the latter setting, we present the first result for infinite horizon linear MDP which may be of independent interest. Moreover, we are able to provide a strengthen result for the finite horizon case where we achieve $\mathcal{O}\br{\epsilon^{-2}}$. Numerical experiments with linear function approximation shows that ILARL outperforms other commonly used algorithms.
\end{abstract}
\section{Introduction}
Imitation Learning (IL) is of extreme importance for all applications where designing a reward function is cumbersome while collecting demonstrations from an expert policy $\expert$ is easy. Examples are autonomous driving \cite{Knox:2021}, robotics \cite{Osa:2018}, and economics/finance \cite{Charpentier:2020}.
The goal is to learn a policy which competes with the expert policy under the true unknown cost function of the Markov Decision Process (MDP) \cite{Puterman:1994} with discount factor $\gamma$. In particular we consider that the cost vector and the transition dynamics are linear in some state action dependent $d$-dimensional features $\phi(s,a) \in \mathbb{R}^d$ \footnote{That is, we consider Linear MDP \cite{jin2019provably} whose formal definition is deferred to \Cref{sec:setting}}.

Imitation learning relies on two data resources: expert demonstrations collected acting with $\expert$ and data that can be collected interacting in the MDP with policies chosen by the learning algorithm. The first approach known as behavioural cloning (BC) solves the problem applying supervised learning. That is, it requires no interaction in the MDP but it requires knowledge of a class $\Pi$ such that $\expert\in\Pi$ and $\widetilde{\mathcal{O}}\br{\frac{\log \abs{\Pi}}{(1 - \gamma)^4 \epsilon_E^2}}$ expert demonstrations to ensure with high probability that the output policy is at most $\epsilon_E$-suboptimal.

The quartic dependence on the effective horizon term ($(1 - \gamma)^{-1}$)  is problematic for long horizon problems. Moreover, the dependence on $\Pi$ requires to make prior assumption on the expert policy structure to provide bounds which do not scale with the number of states in the function approximation setting. Thankfully, the dependence on the effective horizon can be improved resorting to MDP interaction.
There exists an interesting line of works achieving this goal considering an interacting setting where the learner has the possibility to query the expert policy at any state visited during the MDP interaction \cite{Ross:2010, Ross:2011} or that require a generative model to implement efficiently the moment matching procedure  \cite{swamy2022minimax}.
Another recent work requires a generative model to sample the initial state of the trajectory from the expert occupancy measure \cite{swamy2023inverse}.
In this work, we considered a different scenario which is adopted in most of applied imitation learning \cite{Ho:2016, Ho:2016b, Fu:2018, Reddy:2020, Dadashi:2021, watson2023coherent, Garg:2021, ni2021f}. In this case, the expert policy can  not be queried but only a dataset of expert demonstrations collected beforehand is available.

The setting has received scarse theoretical  attention so far. The only results we are aware of are: \cite{Shani:2021} that focus on the tabular, finite horizon case, \cite{Liu:2022} in the finite horizon linear mixture MDP setting and \cite{viano2022proximal} in the infinite horizon Linear MDP setting. 
In all these works bound the number of required expert demonstrations scale as $(1 - \gamma)^{-2}$ which improves considerably over the quartic depedence attained by BC.
However, \cite{viano2022proximal} made the following assumption 
on the features that greatly simplifies the exploration in the MDP. 

\paragraph{Assumption. Persistent excitation} \emph{ It holds that for any policy $\pi^k$ in the sequence of policies generated by
the algorithm adopted by the learner $\lambda_{\min}\br{\mathbb{E}_{s,a \sim d^{\pi^k}}\bs{\phi(s,a)\phi(s,a)^\trans}} \geq \beta > 0$.}

 Despite being commonly used in infinite horizon function approximation setting (see for example \cite{Abbasi-Yadkori:2019b, Hao:2021, Duan:2020, Lazic:2020, Abbasi-Yadkori:2019c, Agarwal:2020b}), the persistent excitation assumption is very restrictive as it can be easily violated by deterministic policies with tabular features.

\paragraph{Our contribution} We propose a new algorithm that improves the results of \cite{viano2022proximal} in two important aspects: it \textbf{bypasses the persistent excitation assumption} (i.e. $\beta = 0$ does not cause the bound to blow up) and \textbf{it improves the dependence on $\epsilon$}. In particular, the new proposed algorithm \Cref{alg:infinite_imitation} only requires $\mathcal{O}\br{\frac{d^3}{(1 - \gamma)^8 \epsilon^4}}$ MDP interactions which greatly improves upon the bound $\widetilde{\mathcal{O}}(\frac{d^2}{\beta^6 (1 - \gamma)^9 \epsilon^5})$ proven by \cite{viano2022proximal}. Moreover, it holds that $\beta \leq d^{-1}$. Therefore, the bound for 
PPIL scales at least as $\widetilde{\mathcal{O}}(\frac{d^8}{(1 - \gamma)^9 \epsilon^5})$. Therefore, the bound of ILARL is better in all the relevant parameters $d, (1 - \gamma)$ and $\epsilon$.
The design is different from \cite{viano2022proximal} and it builds on a connection between imitation learning and online learning in MDP with full information.
Therefore, \textbf{we design} as a submodule of our algorithm \textbf{the first algorithm for adversarial infinite horizon linear MDPs} which achieves $\mathcal{O}(K^{3/4})$ pseudo-regret.
We also consider the finite horizon version of this algorithm which obtains a regret bound $\widetilde{\mathcal{O}}\br{d^{3/4}H^{3/2} K^{3/4}}$, where $H$ denotes the horizon. Our bound improves by a factor $H^{1/2}$ the first result in this setting proven in \cite{zhong2023theoretical}. Concurrently to our work \cite{sherman2023rate} derived a further improvement with optimal dependence on $K$ .

Finally, \textbf{we provide a stronger result for the finite horizon setting}. Key for this result is realizing that in the regret decomposition of \cite{Shani:2021} one of the two players can in fact play the best response rather than a more conservative no regret strategy. This observations leads to \Cref{alg:BRimitation_finite_horizon} which only requires $\mathcal{O}(H^4 d^3 \epsilon^{-2})$ MDP interactions.

\paragraph{Related Works}

\begin{table}[t]
\caption{\label{tab:literature} \textbf{Comparison with related algorithms} We report the number of expert trajectories and MDP interactions needed for the various algorithms to be $\epsilon$-suboptimal according to Definition~\ref{def:alg_sub}.
Our algorithms provide guarantees for the number of expert trajectories independent on $\mathcal{S}$ and $\Pi$ without assumptions on the expert policy.  
By \textbf{Linear Expert}, me mean that the expert policy is $\pi(s) = \max_{a\in\aspace}\phi(s,a)^\trans\theta$ for some unknown vector $\theta$.}
\resizebox{\textwidth}{!}{\begin{tabular}{|c|l|l|l|}
\hline
\textbf{Algorithm}                                         & \textbf{Setting}                                 & \textbf{Expert Traj.}                                      & \textbf{MDP Traj.}                                    \\ \hline
\multicolumn{1}{|c|}{\multirow{3}{*}{Behavioural Cloning}} & Function Approximation, Offline \cite{agarwal2019reinforcement}                  & $\mathcal{O}\br{\frac{H^4  \log \abs{\Pi}}{\epsilon^2}}$                  & \multicolumn{1}{c|}{-}                                        \\ \cline{2-4} 
\multicolumn{1}{|c|}{}                                     & Tabular, Offline   \cite{rajaraman2020toward}                              & $\widetilde{\mathcal{O}}\br{\frac{H^2  \abs{\sspace}}{\epsilon}}$ & \multicolumn{1}{c|}{-}                                        \\ \cline{2-4} 
\multicolumn{1}{|c|}{}                                     & Linear Expert, Offline  \cite{rajaraman2021value}                          & $\widetilde{\mathcal{O}}\br{\frac{H^2  d}{\epsilon}}$             & \multicolumn{1}{c|}{-}                                        \\ \hline
Mimic-MD         \cite{rajaraman2020toward}                                          & Tabular, Known Transitions, Deterministic Expert & $\mathcal{O}\br{\frac{H^{3/2} \abs{\sspace}}{\epsilon}}$          & \multicolumn{1}{c|}{-}                                        \\ \hline
OAL             \cite{Shani:2021}                                           & Tabular                       & $\mathcal{O}\br{\frac{H^2 \abs{\sspace}}{\epsilon^{2}}}$             & $\mathcal{O}\br{\frac{H^4 \abs{\sspace}^2 \abs{\aspace} }{\epsilon^{2}}}$ \\ \hline
MB-TAIL   \cite{xu2023provably}                                                 & Tabular, Deterministic Expert & $\mathcal{O}\br{\frac{H^{3/2} \abs{\sspace}}{\epsilon}}$          & $\mathcal{O}\br{\frac{H^3 \abs{\sspace}^2 \abs{\aspace} }{\epsilon^{2}}}$ \\ \hline
OGAIL      \cite{Liu:2022}                                                &
 Linear Mixture MDP         & $\mathcal{O}\br{\frac{H^{3} d^2}{ \epsilon^{2}}}$                     & $\mathcal{O}\br{\frac{H^4 d^3}{\epsilon^{2}}}$                               \\ \hline 
PPIL              \cite{viano2022proximal}                                      & Linear MDP, Persistent Excitation           & $\mathcal{O}\br{\frac{d}{(1 - \gamma)^{2}\epsilon^{2}}}$                     & $\mathcal{O}\br{ \frac{d^2}{\beta^{6}(1 - \gamma)^{9}  \epsilon^{5}}}$                           \\ \hline

\textbf{ILARL} (\Cref{alg:infinite_imitation})                                               & Linear MDP        & $\mathcal{O}\br{\frac{d}{(1 - \gamma)^{2}\epsilon^{2}}}$                     & $\mathcal{O}\br{ \frac{d^3 }{(1 - \gamma)^{8} \epsilon^{4}}}$                           \\ \hline

\textbf{BRIG} (\Cref{alg:BRimitation_finite_horizon})                                               & Episodic Linear MDP       & $\mathcal{O}\br{\frac{d H^2}{\epsilon^{2}}}$                     & $\mathcal{O}\br{ \frac{d^3 H^4 }{ \epsilon^{2}}}$                           \\ \hline

\end{tabular}}
\end{table}
Early works in behavioural cloning (BC) \cite{Pomerleau:1991} popularized the framework showing its success in driving problem and \cite{Ross:2010,Ross:2011} show that the problem can be analyzed via a reduction to supervised learning which provides an expert trajectories bound of order $\frac{H^4 \log \abs{\Pi}}{\epsilon^2}$. 
In practice, it is difficult to choose a class $\Pi$ such that simultaneously contains the expert policy and is small enough to make the bound meaningful.
Other algorithms like Dagger \cite{Ross:2011} and Logger \cite{li2022efficient} need to query the expert interactively. In this case, the expert trajectories improve to $\frac{H^2 \max_{s,a} \br{A^{\star}(s,a)}^2 \log \abs{\Pi}}{\epsilon^2}$ where $A^{\star}$ is the optimal advantage. 
Recent works \cite{rajaraman2020toward} showed that in the worst case Dagger does not improve over BC but also that both can use only $\widetilde{\mathcal{O}}\br{\frac{H^2 \abs{\sspace} }{\epsilon}}$ in the tabular case. Moreover, when transitions and initial distribution are known and the expert is deterministic, the result can be improved to $\mathcal{O}\br{\frac{H^{3/2} \abs{\sspace} }{\epsilon}}$ using Mimic-MD \cite{rajaraman2020toward}. Later, \cite{xu2023provably} introduced MB-TAIL that having trajectory access to the MDP attains the same bound.  This shows that the traditional bound obtained matching occupancy measure \cite{Syed:2007} adopted in \cite{Shani:2021} is suboptimal in the tabular setting.
For the linear function approximation, the works in \cite{swamy2022minimax,rajaraman2021value} introduced algorithms that uses $\mathcal{O}\br{\frac{H^{3/2} d}{\epsilon}}$ expert trajectories with knowledge of the transitions but those require strong assumptions such as linear expert \citep[Definition 4]{rajaraman2021value}, particular choice of features, linear reward and uniform expert occupancy measure. \cite{rajaraman2021value} also proves an improved result for BC but under the linear expert assumption which implies that the expert is deterministic. While one can notice that there exists an optimal policy in a Linear MDP which is a linear expert, in our work we do not impose assumption on the expert policy and we require $\mathcal{O}\br{\frac{H^2d}{\epsilon^{2}}}$ demonstrations. Under the same setting, the best known bound for BC is $\frac{H^2 \log \abs{\Pi}}{d}$ times larger which makes our algorithm preferrable whenever $\abs{\Pi} \geq \exp (d H^{-2})$.
We report a comparison with existing IL theory work in \Cref{tab:literature}. 

All the works we mentioned so far focused on the finite horizon, however the infinite horizon setting is the most common in practice \cite{Ho:2016, Ho:2016b, Fu:2018, Reddy:2020, Dadashi:2021, watson2023coherent, Garg:2021}. The practical advantage is that in the infinite horizon setting the optimal policy can be sought  in the class of stationary policies   which are much easier to store in memory than the nonstationary ones.
Despite this fact, there are only few previous result studying IL in infinite horizon MDP and all of them operate under limiting assumptions. As mentioned, \cite{viano2022proximal} requires the persistent excitation assumption, \cite{wu2023inverse} requires a uniformly good evaluation policy evaluation error which is possible only if the policies generated by the algorithm visits every state with high probability. 
\cite{zeng2022maximum} provided the first guarantees for IL with non linear reward functions but it assumes ergodic dynamics and that the the soft action value function of every policy can be perfectly evaluated at every state action pair. The latter assumption has been relaxed later in \cite{zeng2022structural} and in \cite{zeng2023understanding} that allows for a uniformly bounded policy evaluation error. In the latter case, the policies are evaluated under the  transitions learned from expert data. All in all, their bound on the number of expert trajectories scale as $(1 - \gamma)^{-4}$ and with the number of states visited by the expert while our bound leverages online access to the MDP to obtain a better horizon and to avoid the dependence on the number of states.
Moreover, the bounds in \cite{zeng2023understanding,zeng2022structural,wu2023inverse} depend on the number of states which can be prohibitively large in the function approximation setting.
\section{Background and Notation}
In imitation learning \cite{Osa:2018}, the environment is abstracted as Markov Decision Process (MDP) \cite{Puterman:1994} 
which consists of a tuple $(\sspace, \aspace, P, c, \initial)$ where $\sspace$ is the state space, $\aspace$ is the action space, $P: \sspace\times\aspace \rightarrow \Delta_{\sspace}$ is the transition kernel, that is, 
$P(s'|s,a)$ denotes the probability of landing in state $s'$ after choosing action $a$ in state $s$. Moreover, $\initial$ is a distribution over states from which the initial state is sampled. Finally, $c: \sspace\times \aspace \rightarrow [-1,1]$ is the cost function. In the infinite horizon setting, we endow the 
MDP tuple with an additional element called the discount factor $\gamma \in [0,1)$. Alternatively, in the finite horizon setting we append to the MDP tuple
the horizon $H \in \mathbb{N}$ and we consider possibly inhomogenous transitions or costs function. That is, they depend on the stage within the episode.
The agent plays action in the environment sampled from a policy $\pi : \sspace \rightarrow \Delta_{\aspace}$. 
The learner is allowed to adopt an algorithm to update the policy across episodes given the previously observed history. 
We will see that imitation learning has a strong connection with MDPs with adversarial costs. The latter setting allows the cost function to change each time 
the learner samples a new episode in the MDP. For clarity, we include the pseudocode for the interaction in Protocol~\ref{prot:interaction} in \Cref{sec:protocol}.

\textbf{Value functions and occupancy measures} We define the state value function at state $s \in \sspace$ for the policy $\pi$ under the cost function $c$ as $V^{\pi}(s; c) \triangleq \mathbb{E}\bs{\sum^{\infty}_{h=0} \gamma^{h} c(s_h,a_h) | s_1 = s}$.
In the finite horizon case, the state value function also depends on the stage index $h$, that is $V^{\pi}_h(s; c) \triangleq \mathbb{E}\bs{\sum^{H}_{\ell=h} c(s_{\ell},a_{\ell}) | s_h = s}$\footnote{In the finite horizon case we may use $V^{\pi}(s;c)$ as a shortcut for $V^{\pi}_1(s;c)$}. In both cases, the
expectation over both the randomness of the transition dynamics and the one of the learner's policy.
Another convenient quantity is the occupancy measure of a policy $\pi$ denoted as $d^{\pi}\in \Delta_{\sspace\times\aspace}$ and defined as follows
$d^{\pi}(s,a) \triangleq (1 - \gamma)\sum^{\infty}_{h=0} \gamma^{h}\mathbb{P}\bs{s,a \text{ is visited after $h$ steps acting with $\pi$}}$. We can also define the state occupancy measure
as $d^{\pi}(s) \triangleq (1 - \gamma)\sum^{\infty}_{h=0} \gamma^{h}\mathbb{P}\bs{s\text{ is visited after $h$ steps acting with $\pi$}}$.
In the finite horizon setting, the occupancy measure depends on the stage $h$ and its defined simply as $d_h^{\pi}(s,a) \triangleq \mathbb{P}\bs{s,a \text{ is visited after $h$ steps acting with $\pi$}}$.
 The state occupancy measure is defined analogously.
\newline
\textbf{Imitation Learning} In imitation learning, the learner is given a dataset $\mathcal{D}_{\expert} \triangleq \bc{\boldsymbol{\tau}^k}^{\tau_E}_{k=1}$ containing $\tau_E$ trajectories collected in the MDP by an expert policy $\expert$ according to Protocol~\ref{prot:interaction}.
By trajectory $\boldsymbol{\tau}^k$, we mean the sequence of states and actions sampled at the $k^{\mathrm{th}}$
 iteration of Protocol~\ref{prot:interaction}, that is $\boldsymbol{\tau}^k = \bc{(s^k_h, a^h_k)}^H_{h=1}$ for finite horizon case. 
 For the infinite horizon case, the trajectories have random lenght sampled from the distribution $\mathrm{Geometric}(1 - \gamma)$.
 Given $\mathcal{D}_{\expert}$, the learner adopts an algorithm $\mathcal{A}$ to learn a policy $\pi^{\mathrm{out}}$ such that is $\epsilon$-suboptimal according to the next definition.
 \begin{defn}\label{def:alg_sub}
 An algorithm $\mathcal{A}$ is said $\boldsymbol{\epsilon}$-\textbf{suboptimal} if it outputs a policy $\pi$ whose value function with respect to the unknown true cost
 $\true$ satisfies  $\mathbb{E}_{\mathcal{A}}\mathbb{E}_{s_1 \sim \initial}\bs{V^{\pi}(s_1;\true) - V^{\expert}(s_1;\true)} \leq \epsilon$ where the first expectation is on the randomness of the algorithm $\mathcal{A}$.
 \end{defn}
 \subsection{Setting}
 \label{sec:setting}
We study imitation learning in the linear MDP setting popularized by \cite{jin2019provably} and studied in imitation learning in \cite{viano2022proximal}.
When studying finite horizon problems we consider possible inhomogeneous transition dynamics and cost function. That is, we work under the following assumptions.
\begin{assumption}
    \label{ass:EpLinMDP}\textbf{Episodic Linear MDP} There exist a feature matrix $\phim \in \mathbb{R}^{\abs{\sspace}\abs{\aspace}\times d}$ known to the learner, an unknown  sequence of vectors $\weight^k_h \in \mathbb{R}^d$ and an unknown matrix sequences $M_h \in \mathbb{R}^{d \times \abs{\sspace}}$ such that the transition matrices $P_h$ factorize as $ P_h = \phim M_h$ and the sequence of adversarial costs $c^k_h$ can be written as  $c^k_h = \phim \mathbf{w}^k_h$.
    Moreover, it holds for all $k \in [K], h \in [H]$ and for all state action pairs $s,a \in \sspace\times \aspace$ that $\norm{\phim}_{1,\infty} \leq 1$, $\norm{M_h}_{1,\infty} \leq 1$, $\norm{\mathbf{w}^k_h}_2 \leq 1$. 
\end{assumption}

\begin{assumption}
  \label{ass:LinMDP}
  \textbf{Linear MDP} There exist a feature matrix $\phim \in \mathbb{R}^{\abs{\sspace}\abs{\aspace}\times d}$ known to the learner, an unknown  sequence of vectors $\weight^k \in \mathbb{R}^d$ and an unknown matrix $M \in \mathbb{R}^{d \times \abs{\sspace}}$ such that the transition matrices $P$ factorize as $ P = \phim M$ and the sequence of adversarial costs $c^k$ can be written as  $c^k = \phim \mathbf{w}^k$.
  Moreover, it holds for all $k \in [K]$ and for all state action pairs $s,a \in \sspace\times \aspace$ that $\norm{\phim}_{1,\infty} \leq 1$, $\norm{M}_{1,\infty} \leq 1$, $\norm{\mathbf{w}^k}_2 \leq 1$.
\end{assumption}
In the context of imitation learning, we also need to assume that the true unknown cost is realizable.
\begin{assumption}
  \label{ass:cost}
\textbf{Realizable cost} The learner has access to a feature matrix $\phim \in \mathbb{R}^{\abs{\sspace}\abs{\aspace}\times d}$ such that $\true = \phim \mathbf{w}_{\mathrm{true}}$.
\end{assumption}
\if 0 
In previous work \cite{viano2022proximal}, the authors provide a bound under the following persistent assumption.
\begin{assumption}
  \label{ass:features}
\textbf{persistent excitation} It holds that for any policy $\pi^k$ in the sequence of policies generated by
the algorithm adopted by the learner $\lambda_{\min}\br{\mathbb{E}_{s,a \sim d^{\pi^k}}\bs{\phi(s,a)\phi(s,a)^\trans}} \geq \beta > 0$.
\end{assumption}
In this work, we completely remove the need for this assumption.
\fi
\section{Main Results and techniques}
We provide our main results for the infinite horizon case in \Cref{thm:1} and the stronger result for the finite horizon in \Cref{thm:2}.
\begin{theorem}\label{thm:1}
Under Assumptions \ref{ass:LinMDP},\ref{ass:cost}, there exists an algorithm, i.e. \Cref{alg:infinite_imitation}, such that after using $\widetilde{\mathcal{O}}\br{\frac{\log \abs{\aspace} d^3}{(1 - \gamma)^8 \epsilon^4}}$ 
state action pairs from the MDP and using $\widetilde{\mathcal{O}}\br{\frac{2 d \log(2d)}{(1 - \gamma)^2 \epsilon_E^2}}$ expert demonstrations is $\epsilon + \epsilon_E$-suboptimal.
\end{theorem}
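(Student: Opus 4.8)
The plan is to reduce imitation learning to a two‑player zero‑sum game on feature expectations and to drive its duality gap to zero with online no‑regret dynamics. Writing any admissible cost as $\true=\phim\wtrue$ with $\norm{\wtrue}_2\le 1$ (Assumption~\ref{ass:cost}) and letting $\FEV{\pi}=\phim^{\trans}d^{\pi}$ denote the expected (normalized) feature vector, the linearity of the value function in the occupancy measure gives, for every $\weight$,
\[
\Ee{s_1 \sim \initial}{V^{\pi}(s_1;\phim\weight) - V^{\expert}(s_1;\phim\weight)} = \tfrac{1}{1-\gamma}\ip{\weight}{\FEV{\pi} - \FEV{\expert}}.
\]
Consequently the suboptimality against the \emph{true} cost is at most $\tfrac{1}{1-\gamma}\max_{\norm{\weight}_2\le 1}\ip{\weight}{\FEV{\pi}-\FEV{\expert}}=\tfrac{1}{1-\gamma}\norm{\FEV{\pi}-\FEV{\expert}}_2$, so it suffices to produce a policy whose discounted feature expectation is close to the expert's.

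I would run the saddle‑point dynamics for $k=1,\dots,K$. The cost player updates $\weight^k$ by projected online gradient descent on the unit $\ell_2$ ball with the linear loss $\weight\mapsto-\ip{\weight}{\widehat{\rhov}^k-\fevphi}$, where $\widehat{\rhov}^k$ estimates $\FEV{\pi^k}$ from on‑policy rollouts and $\fevphi$ estimates $\FEV{\expert}$ from $\mathcal{D}_{\expert}$. The policy player receives the \emph{full‑information} cost $c^k=\phim\weight^k$ and feeds it to the adversarial infinite‑horizon linear MDP subroutine announced in the introduction, which returns $\pi^1,\dots,\pi^K$ with pseudo‑regret $\mathrm{Reg}_{\mathrm{pol}}=\widetilde{\mathcal{O}}\br{d^{3/4}(1-\gamma)^{-p}K^{3/4}}$ (against any fixed comparator policy). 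The standard regret decomposition then does the work: using the policy regret with the \emph{expert} as comparator bounds $\sum_k\ip{\weight^k}{\FEV{\pi^k}-\FEV{\expert}}$ by $(1-\gamma)\mathrm{Reg}_{\mathrm{pol}}$, while the cost‑player regret $\mathrm{Reg}_{\mathrm{cost}}=\mathcal{O}(\sqrt{K})$ upgrades the played $\weight^k$ to the worst‑case $\weight$. Taking $\pi^{\mathrm{out}}$ to be the uniform mixture of $\pi^1,\dots,\pi^K$ (so $\FEV{\pi^{\mathrm{out}}}=\tfrac1K\sum_k\FEV{\pi^k}$ by linearity of $\pi\mapsto\FEV{\pi}$) yields
\[
\max_{\norm{\weight}_2\le 1}\ip{\weight}{\FEV{\pi^{\mathrm{out}}}-\FEV{\expert}} \;\le\; \frac{(1-\gamma)\,\mathrm{Reg}_{\mathrm{pol}}+\mathrm{Reg}_{\mathrm{cost}}}{K}.
\]

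Combining with the value identity, the true‑cost suboptimality is at most $\tfrac{\mathrm{Reg}_{\mathrm{pol}}}{K}+\tfrac{\mathrm{Reg}_{\mathrm{cost}}}{(1-\gamma)K}$, where the first term dominates. Balancing $\tfrac{\mathrm{Reg}_{\mathrm{pol}}}{K}=\widetilde{\mathcal{O}}\br{d^{3/4}(1-\gamma)^{-p}K^{-1/4}}\le\epsilon$ forces $K^{1/4}\gtrsim d^{3/4}(1-\gamma)^{-p}\epsilon^{-1}$, i.e.\ $K=\widetilde{\mathcal{O}}\br{d^{3}(1-\gamma)^{-4p}\epsilon^{-4}}$; the quartic power of $d^{3/4}$ is exactly what produces the $d^3$ in the theorem, and accounting for the number of state–action pairs consumed per iteration (the truncated rollout length, of order $(1-\gamma)^{-1}\log(\cdot)$, times the $\log\abs{\aspace}$ from the entropic policy update) gives the stated $\widetilde{\mathcal{O}}\br{\log\abs{\aspace}\,d^3(1-\gamma)^{-8}\epsilon^{-4}}$ budget. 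On the expert side, I would estimate $\FEV{\expert}$ by averaging features over $\tau_E$ geometric‑length trajectories, each of which contributes an unbiased draw from $d^{\expert}$; a vector concentration inequality gives $\norm{\fevphi-\FEV{\expert}}_2=\widetilde{\mathcal{O}}\br{\sqrt{d/\tau_E}}$, and after the $\tfrac1{1-\gamma}$ value scaling this is exactly the additive $\epsilon_E$ term, fixing $\tau_E=\widetilde{\mathcal{O}}\br{d\log(2d)(1-\gamma)^{-2}\epsilon_E^{-2}}$.

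Two loose ends remain routine and one is genuinely hard. The rollout noise in $\widehat{\rhov}^k$ only perturbs the cost‑player gradients and thus contributes a lower‑order $\mathcal{O}(\sqrt{K})$ term to $\mathrm{Reg}_{\mathrm{cost}}$, which is already dominated; and the mixture output is handled by the linearity noted above. The main obstacle is entirely inside the policy subroutine: obtaining $K^{3/4}$ pseudo‑regret for adversarial \emph{infinite‑horizon} linear MDPs with \emph{no} exploration assumption (in particular without persistent excitation) and no generative model. I expect this to require online least‑squares estimation of the discounted action‑value functions equipped with exploration bonuses, a truncation of the infinite horizon at $\mathcal{O}\br{(1-\gamma)^{-1}\log(\cdot)}$ steps to control the discounting bias, and a mirror‑descent (exponential‑weights) policy update supplying the $\log\abs{\aspace}$ factor; reconciling the bias of the estimated transitions with adversarially changing costs, while avoiding any coverage assumption on the visited states, is where the bulk of the technical effort lies.
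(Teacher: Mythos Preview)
Your outer reduction is exactly the paper's: the two-term decomposition into policy regret $\mathrm{Regret}_\pi$ and cost regret $\mathrm{Regret}_{\mathbf{w}}$, projected OGD for the cost player, a no-regret adversarial-MDP routine for the policy player fed the full-information cost $c^k=\phim\mathbf{w}^k$, uniform-mixture output, and the expert-feature concentration all match. You also correctly isolate the policy subroutine as the crux and get the bookkeeping right ($K^{3/4}\Rightarrow\epsilon^{-4}$ and $(d^{3/4})^4\Rightarrow d^3$).

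Where your sketch diverges from the paper is inside that subroutine. The paper does \emph{not} truncate the infinite horizon to reduce to an episodic problem. Instead it (i) updates the policy only every $\tau$ rounds and computes each optimistic least-squares $Q$-estimate using \emph{only on-policy data from the current batch}; this is precisely what keeps the log-covering number of the value class polynomial in $d$ --- with off-policy data one would have to cover the softmax-policy class and the self-normalized bound collapses. (ii) In place of a backward recursion it bootstraps $Q^{k+1}$ from $V^k$; this injects an additional ``Shift'' term $\sum_k\langle d^{\pi^k}-d^{\pi^\star},\,Q^{k+1}-Q^k\rangle$ into the extended performance-difference expansion, and it is controlled because the entropic mirror-descent update makes consecutive policies $O(\sqrt{\eta})$-close in total variation (Pinsker plus the KL bound from the MD step) and because the policy changes only every $\tau$ rounds. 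Your truncation idea is a plausible alternative route, but the batching/on-policy device is not optional in either route: without it one cannot certify optimism with $\beta=\widetilde{\mathcal{O}}\bigl(d(1-\gamma)^{-1}\bigr)$, and that choice of $\beta$ is exactly what yields the $d^3(1-\gamma)^{-8}$ scaling you quote.
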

\begin{theorem}\label{thm:2}
Under Assumptions \ref{ass:EpLinMDP},\ref{ass:cost},there exists an algorithm, i.e. \Cref{alg:BRimitation_finite_horizon}, such that after sampling $\mathcal{O}\br{H^4 d^3 \log (d H / (\epsilon))\epsilon^{-2}}$ 
trajectories and having access to a dataset of $\tau_E = \widetilde{\mathcal{O}}\br{\frac{2 H^2 d\log(2 d)}{\epsilon_E^2}}$ expert demonstrations is $\epsilon + \epsilon_E$-suboptimal.
\end{theorem}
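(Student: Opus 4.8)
The plan is to recast imitation as a two-player zero-sum game over policies and linear costs, and to exploit the episodic linear structure so that one player plays an \emph{exact} best response, leaving only a single $\sqrt{K}$-type regret term to control. First I would use realizability (Assumption~\ref{ass:cost}) to reduce suboptimality to feature matching. Writing the stage-summed feature expectation $\FEV{\pi}\triangleq\sum_{h=1}^{H}\Ee{(s,a)\sim d_h^{\pi}}{\phi(s,a)}$, the episodic linear MDP gives $V_1^{\pi}(s_1;\phim\weight)=\ip{\weight}{\FEV{\pi}}$ for any linear cost $\phim\weight$, so that
\[
\Ee{s_1\sim\initial}{V^{\pi}(s_1;\true)-V^{\expert}(s_1;\true)}=\ip{\wtrue}{\FEV{\pi}-\FEV{\expert}}\le\max_{\norm{\weight}_2\le 1}\ip{\weight}{\FEV{\pi}-\FEV{\expert}}=\norm{\FEV{\pi}-\FEV{\expert}}_2 .
\]
Hence it suffices to output a policy whose feature expectation is $\epsilon$-close to the expert's, and the game value $\min_{\pi}\max_{\norm{\weight}_2\le 1}\ip{\weight}{\FEV{\pi}-\FEV{\expert}}$ equals $0$ since $\expert$ is feasible for the minimizer.

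Next I would run the game for $K$ rounds. The cost (maximizer) player plays the \textbf{exact best response} $\weight^{k}\propto\FEV{\pi^{k}}-\fevphi$ against the current policy, using the empirical expert feature expectation $\fevphi$ from the demonstration dataset; because the feasible set is the unit ball this is closed-form and contributes nonpositive regret. The policy (minimizer) player runs the finite-horizon no-regret policy-optimization module underlying \Cref{thm:2}, fed the full-information cost $\cost^{k}=\phim\weight^{k}$, with its per-stage action-value functions estimated from fresh trajectories via the least-squares backups afforded by the linear MDP. The point of the best-response step---the improvement over the symmetric no-regret decomposition of \cite{Shani:2021}---is that the duality gap is now bounded by the policy player's regret alone, so
\[
\frac{1}{K}\sum_{k=1}^{K}\norm{\FEV{\pi^{k}}-\fevphi}_2=\frac{1}{K}\sum_{k=1}^{K}\ip{\weight^{k}}{\FEV{\pi^{k}}-\fevphi}\le\frac{\mathrm{Reg}_K^{\pi}}{K}+\min_{\pi}\frac{1}{K}\sum_{k=1}^{K}\ip{\weight^{k}}{\FEV{\pi}-\fevphi}.
\]
Plugging $\pi=\expert$ into the last minimum bounds it by $\norm{\FEV{\expert}-\fevphi}_2$, so the averaged (mixture) iterate $\bar\pi$ satisfies $\norm{\FEV{\bar\pi}-\fevphi}_2\le\mathrm{Reg}_K^{\pi}/K+\norm{\FEV{\expert}-\fevphi}_2$ by Jensen.

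It then remains to control two error sources. For the expert term I would bound $\norm{\fevphi-\FEV{\expert}}_2\le\epsilon_E$ by a vector Bernstein/Hoeffding argument: $\fevphi$ is an empirical mean of $\tau_E$ independent trajectory feature sums, each of $\ell_2$-norm at most $H$, so $\tau_E=\widetilde{\mathcal{O}}(H^2 d\log(2d)\,\epsilon_E^{-2})$ suffices, matching the statement. For the policy regret I would invoke the finite-horizon bound of the optimization module, which is $\widetilde{\mathcal{O}}(\mathrm{poly}(H,d)\sqrt{K})$ once the sampled action-value estimates meet a per-round accuracy budget; choosing $K=\widetilde{\mathcal{O}}(\mathrm{poly}(H,d)\,\epsilon^{-2})$ forces $\mathrm{Reg}_K^{\pi}/K\le\tfrac{\epsilon}{2}$, and balancing the number of rollout trajectories per round against the estimation budget yields the stated $\mathcal{O}(H^4 d^3\log(dH/\epsilon)\,\epsilon^{-2})$ total MDP interactions. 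A final triangle inequality gives overall suboptimality $\le\epsilon+\epsilon_E$.

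The hard part, and the crux of the $\epsilon^{-2}$ (rather than $\epsilon^{-4}$) rate, is the policy player's regret analysis under \emph{sampled} value estimates and \emph{without} any exploration or persistent-excitation assumption: I must show the finite-horizon optimizer attains $\sqrt{K}$ regret, not $K^{3/4}$. The key leverage is that the best-responding cost is known exactly in closed form, so the policy player faces a genuinely full-information cost sequence and only the transition-dependent value backups need estimation; the least-squares estimator must then be shown accurate along the relevant feature directions even when the feature covariance is degenerate, with the $d^3$ and $H^4$ factors arising from the regularized covariance inversion, the horizon-length backups, and the variance of the Monte-Carlo returns.
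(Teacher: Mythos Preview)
Your decomposition inverts the roles relative to the paper, and that inversion is where the argument breaks. In BRIG the \emph{cost} player runs a no-regret update (projected OGD on $\weight^k$) and the \emph{policy} player plays the best response: since $\weight^{k+1}$ is computed before the policy update, the learner can run LSVI-UCB with that cost known one round in advance, and the greedy step makes the inner-product term $\sum_k\mathbb{E}_{s\sim d_h^{\pi^\star}}\langle\pi_h^k-\pi_h^\star,\,Q^{k-1}_h\rangle\le 0$ vanish. The $\sqrt{K}$ rate then follows from the standard optimism-plus-elliptical-potential analysis of LSVI-UCB, which is what produces the $H^2 d^{3/2}\sqrt{K}$ bound and hence the $H^4 d^3\epsilon^{-2}$ trajectory count.

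You instead let the cost player best-respond and ask the policy player to be no-regret. Two concrete gaps follow. First, your best response $\weight^{k}\propto\FEV{\pi^{k}}-\fevphi$ requires the \emph{true} feature expectation $\FEV{\pi^{k}}=\sum_h\mathbb{E}_{d_h^{\pi^k}}[\phi]$, which depends on the unknown transitions; you cannot compute it in closed form, and once you estimate it the identity $\langle\weight^k,\FEV{\pi^k}-\fevphi\rangle=\norm{\FEV{\pi^k}-\fevphi}_2$ no longer holds. Second, and more importantly, the burden you place on the policy player---$\sqrt{K}$ regret against an adaptive full-information adversary in a linear MDP with unknown transitions---is exactly the bound the paper does \emph{not} have for a no-regret policy module: Algorithm~\ref{alg:finite_horizon} already operates under full cost information and only attains $K^{3/4}$. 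Your final paragraph correctly identifies this as ``the hard part,'' but the justification you offer (``the best-responding cost is known exactly, so the policy player faces a genuinely full-information cost sequence'') is not leverage: full information was already assumed in the $K^{3/4}$ result. The $\sqrt{K}$ improvement in the paper comes precisely from making the \emph{policy} player the best-responder, so that no regret analysis over the policy space is needed at all---only the optimism bounds for LSVI-UCB.
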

\begin{remark}
The results are proven via the high probability bounds in \Cref{thm:infinite_imitation,thm:BR} respectively and apply the high probability to expectation conversion lemma in \Cref{lemma:conversion}.
\end{remark}
\subsection{Technique overview}
\paragraph{ Online-to-batch conversion}
 The core idea is to extract the policy achieving the sample complexity guarantees above via an online-to-batch conservation. 
That is the output policy is sampled uniformly from a collection of $K$ policies $\bc{\pi^k}^K_{k=1}$. 
The sample complexity result is proven, showing that the policies $\bc{\pi^k}^K_{k=1}$ produced by the algorithms 
under study have sublinear pseudo regret in high probability, that is, $$
\mathrm{Regret}(K)\triangleq\frac{1}{1 - \gamma}\sum^K_{k=1} \innerprod{\true}{d^{\pi^k} - d^{\expert}} \leq \mathcal{O}(K^{3/4}) \quad \text{w.h.p.} $$
for the infinite horizon discounted setting with \Cref{alg:infinite_imitation} and
\begin{equation}
  \mathrm{Regret}(K)\triangleq\sum^H_{h=1}\sum^K_{k=1} \innerprod{\true_{,h}}{d_h^{\pi^k} - d_h^{\expert}} \leq \mathcal{O}(\sqrt{K}) \quad \text{w.h.p.}\label{eq:fin_horizon_bound}
\end{equation}
for the finite horizon setting with \Cref{alg:BRimitation_finite_horizon}. The next section presents the regret decomposition giving the crucial insights for the design of \Cref{alg:infinite_imitation,alg:BRimitation_finite_horizon}.
\paragraph{Regret decomposition}
To obtain both regret bounds, we decompose the pseudo regret in $3$ terms. We present it for the infinite horizon case, where $(1 - \gamma)\mathrm{Regret}(K)$ can be upper bounded by
\if 0
\begin{equation}
    \underbrace{\sum^K_{k=1} \innerprod{\cost^k}{d^{\pi^k} - d^{\expert}}}_{\mathrm{Regret}_{\pi}(K; d^{\expert})} + \underbrace{\sum^K_{k=1} \innerprod{\mathbf{w}_{\mathrm{true}} - \mathbf{w}^k}{\phim^\trans d^{\pi^k} - \widehat{\phim^\trans d^{\expert}}}}_{\mathrm{Regret}_{\mathbf{w}}(K; \mathbf{w}_{\mathrm{true}})}
  + 2\norm{\phim d^{\expert} - \widehat{\phim d^{\expert}}}_{\infty} K \label{eq:dec}
\end{equation}
\fi
\begin{equation}
    \underbrace{\sum^K_{k=1} \innerprod{\cost^k}{d^{\pi^k} - d^{\expert}}}_{\mathrm{Regret}_{\pi}(K; d^{\expert})} + \underbrace{\sum^K_{k=1} \innerprod{\mathbf{w}_{\mathrm{true}} - \mathbf{w}^k}{\phim^\trans d^{\pi^k} - \phim^\trans d^{\expert}}}_{\mathrm{Regret}_{\mathbf{w}}(K; \mathbf{w}_{\mathrm{true}})} \label{eq:dec}
\end{equation}
This decomposition is inspired from \cite{Shani:2021} but it applies also to the infinite horizon setting and exploits the linear structure using Assumptions~\ref{ass:LinMDP},\ref{ass:cost} to write $c^k = \phim \mathbf{w}^k$ and $\true = \phim \mathbf{w}_{\mathrm{true}}$.

$\mathrm{Regret}_{w}(K;\mathbf{w}_{\mathrm{true}})$ is the pseudo regret of a player updating a sequence of cost functions and having $\true$ as comparator while $\mathrm{Regret}_{\pi}(K;d^{\expert})$ is the 
pseudo regret in a Linear MDP with adversarial costs $\bc{c^k}^K_{k=1}$ and having the expert occupancy measure as a comparator. 
\paragraph{Imitation Learning via no-regret algorithms.} The decomposition in \Cref{eq:dec} suggests that imitation learning algorithm can be designed chaining one algorithm that updates the sequence $\mathbf{w}^k$ to make sure that $\mathrm{Regret}_{w}(K;\mathbf{w}_{\mathrm{true}})$ grows sublinearly and a second one that updates the policy sequence to control $\mathrm{Regret}_{\pi}(K;d^{\expert})$.
Controlling $\mathrm{Regret}_{w}(K;\mathbf{w}_{\mathrm{true}})$ can be easily done via projected online gradient descent \cite{Zinkevich2003}.

Unfortunately, controlling $\mathrm{Regret}_{\pi}(K;d^{\expert})$ is way more challenging because we have no 
knowledge of the transition dynamics. Therefore, we can not project on the feasible set of occupancy measures.
To circumvent this issue we rely on the recent literature \cite{luo2021policy, sherman2023, dai2023} 
that however focuses on bandit feedback. In our case, the $\pi$ player has full information on the cost vector $c^k$. Thus, we design a simpler algorithm \Cref{alg:finite_horizon} which achieves a better regret bound in the easier full information case. 
\Cref{alg:finite_horizon} improves over the regret bound in \cite{zhong2023theoretical} 
and easily extends to the infinite horizon setting (see \Cref{alg:infinite_horizon}).

\paragraph{Improved algorithm for finite horizon}
The techniques explained so far do not allow to get the better bound of order $\mathcal{O}(\sqrt{K})$ in the finite horizon setting (see \Cref{eq:fin_horizon_bound}). The idea is to let the $\mathbf{w}$ player update first, then the $\pi$ player can update their policy knowing in advance the loss that they will suffer.
This allows to use LSVI-UCB \cite{jin2019provably} for the $\pi$-player which has been originally designed for a fixed cost but we show that it still guarantees $\mathcal{O}(\sqrt{T})$ regret against an arbitrary sequence of costs when the learner knows in advance the cost function at the next episode. On the other hand, LSVI-UCB suffers linear regret if the adversarial loss is not known in advance so letting the $\mathbf{w}$ player update first is crucial. This result is provided in \Cref{app:finite}.

\section{Warm up: Online Learning in Adversarial Linear MDP}
We start by presenting our result in full information episodic linear MDP with adversarial costs that improves over \cite{zhong2023theoretical} by a factor $H^{1/2}$.
The algorithm is quite simple. We apply a policy iteration like method with two important twist: (i) in the policy improvement step, we update the policy with a no regret algorithm rather than a greedy step.
  Moreover, the policy is updated only every $\tau$ episodes 
  using as loss vector the average $Q$ value over the last batch of collected episodes,
  (ii) in the policy evaluation step, we compute an optimistic estimate of the $Q$ function 
  for the current policy using only on-policy data.
  
The last part is crucial because the use of off-policy data makes the covering argument for Linear MDP problematic. 
Indeed, one would need to cover the space of stochastic policy when computing the covering number of the value function class but this leads to the undesirable dependence on the number of states and actions for the log covering number (see for example \cite{abbasiyadkori2013online}). An alternative bound on the covering number shown in \cite{zhong2023theoretical} would instead lead to linear regret.

Instead, using data collected on-policy allows to apply the covering argument in \cite{sherman2023}
avoiding the dependence on the number of states and actions. The first twist is at this point necessary to make the policy updates more rare 
giving the possibility to collect more on-policy episodes with a fixed policy.
The algorithm pseudocode is in \Cref{alg:finite_horizon}.
\begin{algorithm}[t]
  \caption{On-policy MDP-E with unknown transitions and adversarial costs.}
  \label{alg:finite_horizon}
  \begin{algorithmic}[1]
    \STATE {\bfseries Input:} Dataset size $\tau$, Exploration parameter $\beta$, Step size $\eta$, initialize $\pi_0$ as uniform distribution over $\aspace$
    \FOR{$j=1,\ldots \floor{K/\tau}$}
    \STATE Denote the indices interval  $ T_j \triangleq [(j-1)\floor{K/\tau}, j\floor{K/\tau})$.
      
      \STATE \textcolor{blue}{// Collect on-policy data}
      \STATE Collect $\tau$ trajectories with policy $ \pi^{(j)}$ and store them in the dataset 
      $\mathcal{D}^{(j)}_h = \bc{(s^i_h,a^i_h, \cost^i_h, s^i_{h+1})}_{i\in T_j}$.
      
      \STATE Denote global dataset $\mathcal{D}^{(j)} = \cup^H_{h=1}\mathcal{D}^{(j)}_h$.
      \FOR{$k \in T_j$}
          \STATE \textcolor{blue}{// Optimistic policy evaluation}
          \STATE Initialize $V^k_{H+1}=0$
          \FOR {$h=H, \dots, 1$}
            \STATE $\Lambda^k_h = \sum_{(s,a)\in\mathcal{D}^{(j)}_h} \phi(s,a)\phi(s,a)^\trans + I $ \quad \textcolor{blue}{// $\phi(s,a)$ is the $(s,a)^{\mathrm{th}}$ row of the matrix $\phim$.}
            \STATE $\mbf{v}^k_h = (\Lambda^k_h)^{-1} \sum_{(s,a,s')\in\mathcal{D}^{(j)}_h} \phi(s,a)V^k_{h+1}(s')$
            \STATE $b^k_h(s,a) = \beta \norm{\phi(s,a)}_{(\Lambda^k_h)^{-1}}$
            \STATE $Q^k_h = \bs{\cost^k_h + \phim \mbf{v}^k_h - b^k_h }^{H - h + 1}_{-H + h - 1}$
            \STATE $V^k_{h}(s) = \innerprod{\pi^k_h(s)}{Q^k_h(s,\cdot)}$~~~(with $\pi^k= \pi^{(j)}$).
          \ENDFOR
          \ENDFOR
      \STATE \textcolor{blue}{// Policy Improvement Step}
      \STATE Compute average $Q$ value $\bar{Q}^{(j)}_h(s,a) = \frac{1}{\tau}\sum_{k\in T_j} Q^k_h(s,a)$.
      \STATE Update policy
      $
      \pi_h^{(j+1)}(a|s) \propto \exp\br{-\eta \sum^j_{i=1} \bar{Q}^{(i)}_h(s,a)}
      $
    \ENDFOR
  \end{algorithmic}
\end{algorithm}
\subsection{Analysis}
\begin{theorem}
\label{thm:adversarial}
Under Assumption~\ref{ass:EpLinMDP}, run \Cref{alg:finite_horizon} with exploration parameter $\beta = \tilde{\mathcal{O}}\br{d H}$, 
dataset size $\tau = \frac{5 \beta}{2}\sqrt{\frac{K d}{\log \abs{\mathcal{A}}}}$ and step size $\eta = \sqrt{\frac{\tau \log \abs{A}}{K H^2}}$.
Then, it holds with probability $1 - \delta$, that
\begin{equation}
\mathrm{Regret}(K ; \pi^{\star}) = \sum^K_{k=1} V^{\pi^k,k}_1(s_1) - V^{\pi^\star,k}_1(s_1) \leq  \widetilde{\mathcal{O}}\br{d^{3/4}H^{3/2} \log^{1/4} \abs{\aspace} K^{3/4}\log\frac{K}{\delta}}
\end{equation}
where we use the compact notation $V^{\pi,k}_{h}(\cdot) \triangleq V^{\pi}_{h}(\cdot;c^k)$.
\end{theorem}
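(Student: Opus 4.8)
The plan is to control the per-episode instantaneous regret $V^{\pi^k,k}_1(s_1) - V^{\pi^\star,k}_1(s_1)$ by routing it through the optimistic estimates $V^k_h$ produced in the policy-evaluation loop, writing it as $[V^{\pi^k,k}_1(s_1) - V^k_1(s_1)] + [V^k_1(s_1) - V^{\pi^\star,k}_1(s_1)]$. Let $\iota^k_h(s,a) \triangleq c^k_h(s,a) + (P_h V^k_{h+1})(s,a) - Q^k_h(s,a)$ be the Bellman surplus of the estimate. The standard single-policy value-difference identity applied to the first bracket gives $V^{\pi^k,k}_1(s_1) - V^k_1(s_1) = \sum_{h} \mathbb{E}_{\pi^k}[\iota^k_h(s_h,a_h)]$, an \emph{on-policy} quantity, while the extended value-difference lemma (as in \cite{Shani:2021}) applied to the second bracket gives $V^k_1(s_1) - V^{\pi^\star,k}_1(s_1) = \sum_h \mathbb{E}_{\pi^\star}\innerprod{Q^k_h(s_h,\cdot)}{\pi^k_h(\cdot\mid s_h) - \pi^\star_h(\cdot\mid s_h)} - \sum_h \mathbb{E}_{\pi^\star}[\iota^k_h(s_h,a_h)]$. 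The crucial observation is that once optimism is established, the comparator Bellman term $-\sum_h\mathbb{E}_{\pi^\star}[\iota^k_h]$ has a favourable sign and can simply be dropped, so that \emph{no off-policy bonus ever appears}; this is exactly what avoids a linear-in-$K$ contribution, and it means the bonus only ever needs to be summed along the learner's own trajectories.

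The main obstacle is the optimism step: showing that on a high-probability event, uniformly over all batches and stages, $0 \le \iota^k_h(s,a) \le 2 b^k_h(s,a)$ with $b^k_h(s,a) = \beta\norm{\phi(s,a)}_{(\Lambda^k_h)^{-1}}$. The left inequality (optimism, i.e.\ $V^k_h \le V^{\pi^k,k}_h$) and the right inequality both reduce to a self-normalized bound $\abs{\phi(s,a)^\trans \mbf{v}^k_h - (P_h V^k_{h+1})(s,a)} \le b^k_h(s,a)$ on the ridge-regression estimate $\mbf{v}^k_h$ of the linear map $(s,a)\mapsto (P_h V^k_{h+1})(s,a)$. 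Since the regression targets $V^k_{h+1}$ are themselves data-dependent, this requires a uniform covering argument over the class of attainable value functions. Here the design choices of \Cref{alg:finite_horizon} pay off: the policy is held fixed throughout a batch and only on-policy data enters $\Lambda^k_h$ and $\mbf{v}^k_h$, so the value class is parametrized only by the per-stage vectors and bonus scalars and can be covered with $\log$-covering number of order $d^2$ \emph{independent of} $\abs{\sspace}$ and $\abs{\aspace}$, following the covering lemma of \cite{sherman2023}; an off-policy version would instead force a cover of the stochastic-policy space and reintroduce the state-action dependence flagged after \cite{zhong2023theoretical}. Calibrating $\beta = \widetilde{\mathcal{O}}(dH)$ to dominate the resulting concentration width, and taking a union bound over the $K/\tau$ batches, the $H$ stages, and the cover, produces the $\log(K/\delta)$ factor.

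On this good event the regret is at most an online-learning term $\sum_k\sum_h \mathbb{E}_{\pi^\star}\innerprod{Q^k_h}{\pi^k_h - \pi^\star_h}$ plus an on-policy bonus term $2\sum_k\sum_h\mathbb{E}_{\pi^k}[b^k_h]$, which I would bound separately. For the first, I would use that within batch $j$ the policy is fixed and the update is multiplicative weights on the cumulative averaged losses $\bar Q^{(i)}_h$; per state and stage this is a Hedge instance over the $J=K/\tau$ batches with losses of range $O(H)$, yielding $\tfrac{\log\abs{\aspace}}{\eta}+O(\eta H^2 J)$, so that multiplying by the $\tau$ episodes per batch, summing over the $H$ stages, and optimizing $\eta$ gives $\widetilde{\mathcal{O}}(H^2\sqrt{\tau K \log\abs{\aspace}})$. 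For the second, I would exploit that the bonus is evaluated under exactly the distribution $d^{\pi^{(j)}}_h$ that generated $\Lambda^{(j)}_h$: the on-policy identity $\sum_{(s,a)\in\mathcal{D}^{(j)}_h}\phi(s,a)^\trans(\Lambda^{(j)}_h)^{-1}\phi(s,a) = \mathrm{tr}(I - (\Lambda^{(j)}_h)^{-1}) \le d$ yields $\mathbb{E}_{d^{\pi^{(j)}}_h}\norm{\phi}^2_{(\Lambda^{(j)}_h)^{-1}} \lesssim d/\tau$, hence by Jensen each batch contributes $\beta\sqrt{d\tau}$ per stage and the bonus term is $\widetilde{\mathcal{O}}(\beta H K\sqrt{d/\tau})$. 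Finally I would substitute $\beta = \widetilde{\mathcal{O}}(dH)$ and balance the two contributions by choosing the batch size $\tau$ and step size $\eta$ as in the statement; this yields the advertised $K^{3/4}$ pseudo-regret with the stated polynomial dependence on $d$, $H$ and $\log\abs{\aspace}$, completing the proof.
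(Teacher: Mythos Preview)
Your proposal is correct and follows the same route as the paper: the same add-and-subtract decomposition through $V^k_1$, the same optimism step via self-normalized concentration and a covering argument (the paper's Lemma~\ref{lemma:conf_bounds}), and the same split into an OMD term plus an on-policy bonus term. Where you sketch the blocked-Hedge and on-policy-bonus bounds from first principles, the paper simply invokes Lemmas~F.5 and~C.5 of \cite{sherman2023}; your trace identity for the bonus is a clean variant, though it still requires the concentration step those lemmas supply to pass from the empirical sum over $\mathcal{D}^{(j)}_h$ to the population expectation under $d^{\pi^{(j)}}_h$.
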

\begin{proof} \textit{Sketch}
Adding and subtracting the term $\sum^K_{k=1} V^k_1(s_1)$ in the definition of regret, we have that defining $\delta^k_h(s,a) \triangleq \cost^k_h(s,a) + P_h V^k_{h+1}(s,a) - Q^k_h(s,a)$
\begin{align*}
  \mathrm{Regret}(K ; \pi^{\star}) &= \sum^K_{k=1} V^{\pi^k,k}_1(s_1) - V^{k}_1(s_1) + V^{k}_1(s_1) - V^{\pi^\star,k}_1(s_1) \\
  &\leq \sum^K_{k=1}\sum^H_{h=1} \mathbb{E}_{s\sim d^{\pi^\star}_h}\bs{
    \innerprod{Q^k_h(s,\cdot)}{\pi^k_h(s) - \pi^\star_h(s)}} - \sum^K_{k=1}\sum^H_{h=1} \mathbb{E}_{s,a\sim d^{\pi^\star}_h}\bs{\delta^k_h(s,a)}
    \\&\phantom{\leq}+ \sum^K_{k=1}\sum^H_{h=1} \mathbb{E}_{s,a\sim d^{\pi^k}_h}\bs{\delta^k_h(s,a)}
\end{align*}
where the last inequality holds by the extended performance difference lemma \cite{Cai:2020, shani2020optimistic}.
At this point we can invoke \Cref{lemma:conf_bounds} ( see Appendix \ref{app:technical}) to obtain
\begin{equation}
  -2 b^k_h(s,a) \leq Q^k_h(s,a) - \cost^k_h(s,a) - P_h V^k_{h+1}(s,a) \leq 0 \quad \forall (s,a) \in \sspace\times\aspace, h \in [H], k \in [K]
\end{equation}
with probability $1 - \delta$.
This implies that with probability $1 - \delta$
\begin{align*}
  \mathrm{Regret}(K ; \pi^{\star}) &\leq \sum^K_{k=1}\sum^H_{h=1} \mathbb{E}_{s\sim d^{\pi^\star}_h}\bs{
    \innerprod{Q^k_h(s,\cdot)}{\pi^k_h(s) - \pi^\star_h(s)}} + 2\sum^K_{k=1}\sum^H_{h=1} \mathbb{E}_{s,a\sim d^{\pi^k}_h}\bs{b^k_h(s,a)}
    \\ & \leq \frac{\tau \log \abs{\aspace}}{\eta} + \tau H + \eta K H^2 + 2\sum^K_{k=1}\sum^H_{h=1} \mathbb{E}_{s,a\sim d^{\pi^k}_h}\bs{b^k_h(s,a)}
\end{align*}
Notice that the last inequality follows from the mirror descent with blocking result \citep[Lemma F.5]{sherman2023}. Then, by \citep[Lemma C.5]{sherman2023}, it holds that with probability $1 - 2\delta$.
\begin{align*}
\mathrm{Regret}(K ; \pi^{\star}) \leq \frac{\tau \log \abs{\aspace}}{\eta} + \tau H + \eta K H^2 + \frac{10KH \sqrt{d}\beta \log (2\tau/\delta)}{\sqrt{\tau}}
\end{align*}

The proof is concluded plugging in the values specified in the theorem statement.
\end{proof}

\subsection{Extension to the Infinite Horizon Setting.}
We show our proposed extension to the infinite horizon in \Cref{alg:infinite_horizon}. The main difference in the analysis is to the handle the fact that in the infinite horizon setting we can not run a backward recursion to compute the optimistic value functions as done in Steps 10-16 of \Cref{alg:finite_horizon}. Instead, we use the optimistic estimate at the previous iterate $Q^k$ to build an approximate optimistic estimate at the next iterate (see Steps 10-12 in \Cref{alg:infinite_horizon}). The error introduced in this way can be controlled thanks to the regularization in the policy improvement step  as noticed in \cite{moulin2023optimistic}.
\footnote{Regularization in both the evaluation and improvement step has been proven successful in the infinite horizon linear mixture MDP setting \cite{moulin2023optimistic}. Their regularization in the evaluation step is helpful to improve the horizon dependence. In our case, we use unregularized evaluation because in the linear MDP setting our analysis presents additional leading terms that can not be bounded as $\mathcal{O}(\sqrt{K})$ with regularization in the policy evaluation routine.}
 \begin{algorithm}[t]
  \caption{Infinite Horizon Linear MDP with adversarial losses.}
  \label{alg:infinite_horizon}
  \begin{algorithmic}[1]
    \STATE {\bfseries Input:} Dataset size $\tau$, Exploration parameter $\beta$, Step size $\eta$, Initial policy $\pi_0$ (uniform  over $\aspace$), initialize $V^1 = 0$.
    \FOR{$j=1,\ldots \floor{K/\tau}$}
      \STATE \textcolor{blue}{// Collect on-policy data}
      \STATE Denote the indices interval  $ T_j \triangleq [(j-1)\floor{K/\tau}, j\floor{K/\tau})$.
      
      \STATE Sample $\mathcal{D}^{(j)} = \bc{(s^i,a^i,s^{\prime, i}, \cost^i)}_{i\in T_j} \sim d^{\pi^{(j)}} $ using \citep[Algorithm 1]{agarwal2020theory}.
      \STATE Compute $\Lambda^{(j)} = \sum_{(s,a)\in\mathcal{D}^{(j)}} \phi(s,a)\phi(s,a)^\trans + I $ .     
      \STATE Compute $b^{(j)}(s,a) = \beta \norm{\phi(s,a)}_{(\Lambda^{(j)} )^{-1}}$.
      \STATE \textcolor{blue}{// Optimistic Policy Evaluation}
      \FOR{$k \in T_j$}
          \STATE $\mbf{v}^k = (\Lambda^{(j)})^{-1} \sum_{(s,a,s')\in\mathcal{D}^{(j)}} \phi(s,a)V^k(s')$
          \STATE $Q^{k+1} = \bs{\cost^k + \gamma\phim \mbf{v}^k - b^{(j)} }^{(1 - \gamma)^{-1}}_{-(1 - \gamma)^{-1}}$
          \STATE $V^{k+1}(s) = \innerprod{\pi^{(j)}(a|s)}{Q^{k+1}(s,a)}$ 
      \ENDFOR
      \STATE \textcolor{blue}{// Policy Improvement Step}
      \STATE Compute average $Q$ value $\bar{Q}^{(j)}(s,a) = \frac{1}{\tau}\sum_{k\in T_j} Q^k(s,a)$.
      \STATE Update policy:
      $
      \pi^{(j+1)}(a|s) \propto \exp\br{-\eta \sum^j_{i=1} \bar{Q}^{(i)}(s,a)}
      $
    \ENDFOR
  \end{algorithmic}
\end{algorithm}

\begin{theorem}
\label{thm:regret_bound}
Under Assumption~\ref{ass:LinMDP}, consider $K$ iterations of \Cref{alg:infinite_horizon} run with $\tau \leq \frac{K}{\sqrt{\tau}}$ and $\beta = \widetilde{\mathcal{O}}(dH)$, then it holds for any comparator policy $\pi^\star$ that $(1 - \gamma)\mathrm{Regret}(K ; \pi^\star) \triangleq \sum^K_{k=1}\innerprod{d^{\pi^k} - d^{\pi^\star}}{\cost^k}$ is upper bounded with probability $1 - 2\delta$ by
\begin{align*}
     \frac{\tau \log \abs{\aspace}}{\eta} + \frac{\tau + 1}{1-\gamma} + \frac{\eta K}{(1-\gamma)^2} + 12\beta K \sqrt{\frac{d}{\tau}}\log\br{\frac{2 K d}{\tau\delta}} +\frac{\sqrt{2 \eta} K}{(1-\gamma)^2\tau}.
\end{align*}

\end{theorem}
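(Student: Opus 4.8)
The plan is to mirror the proof of \Cref{thm:adversarial}, adapting every step to the discounted setting and inserting one new argument to absorb the error caused by refreshing the value estimate with a single backup per episode (Steps~10--12 of \Cref{alg:infinite_horizon}) instead of a full backward recursion. First I would rewrite $(1-\gamma)\mathrm{Regret}(K;\pi^\star)=\sum_{k=1}^K\innerprod{d^{\pi^k}-d^{\pi^\star}}{\cost^k}$ as $(1-\gamma)\sum_{k=1}^K\br{\Ee{s_1\sim\initial}{V^{\pi^k}(s_1;\cost^k)}-\Ee{s_1\sim\initial}{V^{\pi^\star}(s_1;\cost^k)}}$, add and subtract the optimistic estimate $V^{k+1}(s_1)$, and invoke the infinite-horizon performance difference lemma to split the regret into a mirror-descent term $\sum_k\Ee{s\sim d^{\pi^\star}}{\innerprod{Q^{k+1}(s,\cdot)}{\pi^{(j)}(s)-\pi^\star(s)}}$ together with two Bellman-error terms evaluated along $d^{\pi^\star}$ and $d^{\pi^k}$ respectively.

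I would then isolate the pointwise Bellman error. Writing $\delta^k(s,a)\triangleq\cost^k(s,a)+\gamma(P V^{k+1})(s,a)-Q^{k+1}(s,a)$ and using the definition of $Q^{k+1}$ in Step~10, it decomposes (modulo the clipping, which only helps) as
\begin{equation*}
\delta^k = \underbrace{\gamma P\br{V^{k+1}-V^k}}_{\text{iteration error}} + \underbrace{\gamma\br{P V^k - \phim\mbf{v}^k}}_{\text{statistical error}} + b^{(j)}.
\end{equation*}
The statistical error plus bonus is handled exactly as in the finite-horizon proof: the on-policy least-squares concentration (the discounted analogue of \Cref{lemma:conf_bounds}, whose covering argument is valid precisely because $\mathcal{D}^{(j)}$ is collected on-policy) gives the one-sided optimism $0\le\gamma(PV^k-\phim\mbf{v}^k)+b^{(j)}\le 2b^{(j)}$, so this part contributes nothing along $d^{\pi^\star}$ and at most $2\sum_k\Ee{s,a\sim d^{\pi^k}}{b^{(j)}(s,a)}$ along $d^{\pi^k}$, which \citep[Lemma C.5]{sherman2023} sums to $12\beta K\sqrt{d/\tau}\log\br{2Kd/(\tau\delta)}$.

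The iteration error $\gamma P(V^{k+1}-V^k)$ is the genuinely new term and the main obstacle, since consecutive optimistic values no longer coincide when only one backup is performed. It is not sign-definite, so it survives along \emph{both} distributions and must be bounded directly. Using the occupancy identity $\Ee{s,a\sim d^\pi}{\gamma (PV)(s,a)}=\Ee{s'\sim d^\pi}{V}-(1-\gamma)\Ee{s'\sim\initial}{V}$, within a block the policy $\pi^{(j)}$ is frozen so $d^{\pi^k}=d^{\pi^{(j)}}$ is constant and $\sum_{k\in T_j}(V^{k+1}-V^k)$ telescopes to its endpoints, each bounded by the clipping $\normm{V^k}_\infty\le(1-\gamma)^{-1}$; summed over the $\floor{K/\tau}$ blocks this produces the boundary contribution absorbed into $\tfrac{\tau+1}{1-\gamma}$. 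At the block seams the policy jumps, and following \cite{moulin2023optimistic} I would bound this drift through the regularized update: the exponential-weights step gives $\mathrm{KL}\br{\pi^{(j+1)}\Vert\pi^{(j)}}=\mathcal{O}\br{\eta/(1-\gamma)}$, hence by Pinsker $\normm{\pi^{(j+1)}-\pi^{(j)}}_1=\mathcal{O}(\sqrt{\eta})$, and propagating this perturbation through the $\gamma$-contractive value recursion yields $\tfrac{\sqrt{2\eta}}{(1-\gamma)^2}$ per block, i.e. $\tfrac{\sqrt{2\eta}K}{(1-\gamma)^2\tau}$ after summing. The delicate point is that the regularization must be strong enough to damp the accumulated iteration error yet weak enough not to inflate the mirror-descent terms, which is exactly the trade-off flagged in the footnote.

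Finally I would bound the mirror-descent term by the blocking mirror-descent guarantee \citep[Lemma F.5]{sherman2023}, which (because $\pi^{(j)}$ uses the block-averaged $\bar Q^{(j)}$ and is stale by up to $\tau$ episodes, with $\normm{Q^{k}}_\infty\le(1-\gamma)^{-1}$) contributes $\tfrac{\tau\log\abs{\aspace}}{\eta}+\tfrac{\tau}{1-\gamma}+\tfrac{\eta K}{(1-\gamma)^2}$, and collect the four groups of terms. A union bound over the least-squares concentration event and the bonus-sum event gives the stated failure probability $2\delta$.
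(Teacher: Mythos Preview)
Your proposal is correct and uses the same ingredients as the paper (on-policy optimism via \Cref{lemma:infinite_optimism}, bonus sum via \citep[Lemma C.5]{sherman2023}, blocking mirror descent via \citep[Lemma F.5]{sherman2023}, and the drift control through Pinsker together with the KL stability of the exponential-weights step \`a la \cite{moulin2023optimistic}), but you organise the decomposition differently. The paper adds and subtracts $Q^{k}$ (not $Q^{k+1}$), so its OMD term is $\sum_k\Ee{s\sim d^{\pi^\star}}{\innerprod{Q^{k}(s,\cdot)}{\pi^k(s)-\pi^\star(s)}}$ and the one-step lag is extracted as two explicit ``Shift'' terms $\sum_k\langle d^{\pi^k},Q^{k+1}-Q^k\rangle$ and $\sum_k\langle d^{\pi^\star},Q^{k}-Q^{k+1}\rangle$; the first is handled by Abel summation (which collapses inside blocks) and then $\normm{d^{\pi^{(j)}}-d^{\pi^{(j-1)}}}_1\le\sqrt{2\eta}/(1-\gamma)$, the second simply telescopes. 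You instead keep the lag inside the Bellman residual as $\gamma P(V^{k+1}-V^k)$, and then run the same telescoping/Abel argument on that quantity. The two are algebraically equivalent rearrangements.

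Two small points to tighten. First, \citep[Lemma F.5]{sherman2023} applies directly to $\sum_k\innerprod{Q^k}{\pi^k-\pi^\star}$ because the policy update uses $\bar Q^{(j)}=\tau^{-1}\sum_{k\in T_j}Q^k$; your OMD term carries $Q^{k+1}$, so you need one more index shift (which produces yet another $Q^{k+1}-Q^k$ term, bounded by the same mechanism). Second, the within-block telescoping boundary is not what gives the $\tfrac{\tau+1}{1-\gamma}$ term in the statement---in the paper that term comes from the OMD staleness in \citep[Lemma F.5]{sherman2023} plus the telescoped Shift~2; the block-seam drift is precisely the $\tfrac{\sqrt{2\eta}K}{(1-\gamma)^2\tau}$ term you already identified.
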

\begin{proof}
\textit{Sketch} The proof is based on the following decomposition that holds in virtue of \Cref{lemma:infinite_extendend_pdl}. Denoting $\delta^{k}(s,a) \triangleq \cost^k(s,a) + \gamma P V^k(s,a) - Q^{k+1}(s,a)$ and $g^k(s,a) \triangleq  Q^{k+1}(s,a) - Q^{k}(s,a)$
\begin{align}
(1 - \gamma) & \mathrm{Regret}(K;\pi^\star) = \sum^K_{k=1} \mathbb{E}_{s\sim d^{\pi^\star}}\bs{
    \innerprod{Q^k(s,\cdot)}{\pi^k(s) - \pi^\star(s)}} \quad \tag{OMD} \label{eq:main_OMD}\\&\phantom{\leq}+ \sum^K_{k=1}  \sum_{s,a} \bs{d^{\pi^k}(s,a) - d^{\pi^\star}(s,a)}\cdot\bs{\delta^k(s,a)} \tag{Optimism} \label{eq:main_optimism}
    \\&\phantom{\leq}+ \sum^K_{k=1} \mathbb{E}_{s,a\sim d^{\pi^k}}\bs{g^k(s,a)} - \sum^K_{k=1} \mathbb{E}_{s,a\sim d^{\pi^\star}}\bs{g^k(s,a)} \quad \label{eq:main_shift} \tag{Shift}
\end{align}
Then, we have that \ref{eq:main_optimism} can be bounded similarly to the finite horizon case using \Cref{lemma:infinite_optimism} while for \ref{eq:main_shift} we rely on the regularization of the policy improvement step and on the fact that the policy is updated only every $\tau$ steps.
All in all, we have that the first term in \eqref{eq:main_shift} can be bounded as
$ \frac{1}{(1-\gamma)^2} \sum^{\floor{K/\tau}}_{j=2} \sqrt{2 \eta} = \frac{\sqrt{2 \eta} K}{(1-\gamma)^2\tau} $.
The second term just telescopes therefore $\eqref{eq:main_shift} \leq \frac{\sqrt{2 \eta} K}{(1-\gamma)^{2}\tau}+\frac{1}{1-\gamma}$. Finally, \eqref{eq:main_OMD} can be bounded as in the finite horizon case.
\end{proof}
\section{Imitation Learning in Infinite Horizon MDPs}
In this section, we apply \Cref{thm:regret_bound} to imitation learning. 
Indeed, we design \Cref{alg:infinite_imitation} using the insights from the decomposition in \Cref{eq:dec}: we use a no regret algorithm to update the cost at each round and we update the learner's policy using a no regret algorithm for infinite horizon full information adversarial Linear MDP, of which \Cref{alg:infinite_horizon} is the first example in the literature.
\begin{algorithm}[h]
  \caption{Imitation Learning via Adversarial Reinforcement Learning (\textbf{ILARL}) for Infinite Horizon Linear  MDPs.}
  \label{alg:infinite_imitation}
  \begin{algorithmic}[1]
    \STATE {\bfseries Input:} Access to \Cref{alg:infinite_horizon} (with inputs $\tau$,$\beta$, $\eta$, $\pi_0$) , Step size for Cost Update $\alpha$, Expert dataset $\mathcal{D}_{\expert} = \bc{\boldsymbol{\tau}_i}^{\tau_E}_{i=1}$.
    \STATE Estimate for expert feature visitation  $\widehat{\phim^\trans d^{\expert}} \triangleq \frac{(1 - \gamma)}{\tau_E}\sum_{\boldsymbol{\tau} \in \mathcal{D}_{\expert}} \sum_{s_h,a_h \in \boldsymbol{\tau}} \gamma^h \phi(s_h,a_h)$ .
    \FOR{$k=1,\ldots , K$}
      \STATE \textcolor{blue}{// Cost Update (to control $\mathrm{Regret}_{\mathbf{w}}(K; \mathbf{w}_{\mathrm{true}})$)}
          \STATE Estimate $\widehat{\phim^\trans d^{\pi^k}} \triangleq \tau^{-1}\sum_{s,a \in \mathcal{D}^{(j)}} \phi(s,a) $ where  $\mathcal{D}^{(j)}$ is defined in Step 6 in \Cref{alg:infinite_horizon}.
          \STATE $\weight^{k+1} = \Pi_{\mathcal{W}}\bs{\weight^{k} - \alpha(\widehat{\phim^\trans d^{\expert}} - \widehat{\phim^\trans d^{\pi^k}})}$ with $\mathcal{W} = \bc{ \weight : \norm{\weight}_2\leq 1}$.
          \STATE \textcolor{blue}{// Policy Update ( to control $\mathrm{Regret}_{\pi}(K; \expert)$ )}
    \STATE The cost $\Phi \weight^k$ is revealed to the learner.
      \STATE The learner updates their policy $\pi^k$ performing one iteration of \Cref{alg:infinite_horizon}.
    \ENDFOR
  \end{algorithmic}
\end{algorithm}
The guarantees for \Cref{alg:infinite_imitation} are given in the following theorem.
\begin{theorem}
\label{thm:infinite_imitation}
Under Assumptions \ref{ass:LinMDP},\ref{ass:cost}, let us consider $K$ iterations of \Cref{alg:infinite_imitation} with $K \geq \widetilde{\mathcal{O}}\br{\frac{\log \abs{\aspace} d \beta^2\log^2(1/\delta)}{(1 - \gamma)^6 \epsilon^4}}$ where $\beta$ is chosen as in \Cref{lemma:confidence_bound} (i.e. $\beta = \widetilde{\mathcal{O}}\br{d (1 - \gamma)}^{-1}$). Moreover, let consider the following choices $\alpha = \frac{1}{\sqrt{2K}}$, $\tau = \mathcal{O}\br{\frac{\beta(1 - \gamma)\sqrt{d K }\log(2dK/\delta)}{\sqrt{\log \abs{\aspace}}}}$, expert trajectories  $\tau_E = \frac{8 d \log(d/\delta)}{(1 - \gamma)^2\epsilon_E^2}$ and $\eta = \sqrt{\frac{\tau \log \abs{\aspace}(1-\gamma)^2}{K}}$.Then, the above conditions ensure $\frac{1}{1 - \gamma}\innerprod{\true}{ d^{\expert} - \frac{1}{K}\sum^K_{k=1}d^{\pi^k}} \leq \epsilon + \epsilon_E$ with probability $1 - 4 \delta$ .
\end{theorem}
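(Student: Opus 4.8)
The plan is to reduce the claim to sublinear bounds on the two terms of the regret decomposition in \Cref{eq:dec} and then divide by $K$. Since the target quantity is, up to the sign convention used in the definition of $\mathrm{Regret}(K)$, equal to $\frac{1}{K(1-\gamma)}\left[\mathrm{Regret}_{\pi}(K;d^{\expert})+\mathrm{Regret}_{\weight}(K;\weight_{\mathrm{true}})\right]$, it suffices to bound each summand at rates $\widetilde{\mathcal{O}}(K^{3/4})$ and $\mathcal{O}(\sqrt{K})$ respectively. The one wrinkle is that \Cref{alg:infinite_imitation} never accesses the exact feature expectations $\phim^\trans d^{\expert}$ and $\phim^\trans d^{\pi^k}$ but only their empirical estimates $\widehat{\phim^\trans d^{\expert}}$ and $\widehat{\phim^\trans d^{\pi^k}}$; I would therefore carry out the decomposition with the estimated quantities and collect the resulting approximation errors as separate terms to be handled by concentration.

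For the policy term, I would instantiate \Cref{thm:regret_bound} with the comparator policy $\pi^\star=\expert$, whose occupancy measure is exactly $d^{\expert}$, so that $\mathrm{Regret}_{\pi}(K;d^{\expert})=\sum_{k=1}^K\innerprod{\phim\weight^k}{d^{\pi^k}-d^{\expert}}$ is controlled by the $\widetilde{\mathcal{O}}(K^{3/4})$ expression of that theorem; this is legitimate because the cost revealed to the $\pi$-player at round $k$ is precisely $\phim\weight^k$ and the player runs one step of \Cref{alg:infinite_horizon}. For the cost term, I would recognize the update in Step~7 as projected online gradient descent on the linear losses $\ell^k(\weight)=\innerprod{\weight}{\widehat{\phim^\trans d^{\expert}}-\widehat{\phim^\trans d^{\pi^k}}}$ over the unit ball $\mathcal{W}$, so the standard guarantee of \cite{Zinkevich2003} with $\alpha=\tfrac{1}{\sqrt{2K}}$ and constant-norm gradients yields $\sum_{k=1}^K\innerprod{\widehat{\phim^\trans d^{\expert}}-\widehat{\phim^\trans d^{\pi^k}}}{\weight^k-\weight_{\mathrm{true}}}\le\mathcal{O}(\sqrt{K})$, which equals $\mathrm{Regret}_{\weight}(K;\weight_{\mathrm{true}})$ up to the estimation errors just mentioned.

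It then remains to absorb the approximation errors. Replacing each $\widehat{\phim^\trans d^{\expert}}$ by $\phim^\trans d^{\expert}$ costs at most $\tfrac{2}{1-\gamma}\norm{\widehat{\phim^\trans d^{\expert}}-\phim^\trans d^{\expert}}_2$ against a unit-norm comparator, and a vector concentration bound for the discounted, geometric-length expert trajectories shows that with $\tau_E=\tfrac{8d\log(d/\delta)}{(1-\gamma)^2\epsilon_E^2}$ demonstrations this is at most $\epsilon_E$ with probability $1-\delta$, which is exactly the source of the additive $\epsilon_E$. Similarly, replacing $\widehat{\phim^\trans d^{\pi^k}}$ by $\phim^\trans d^{\pi^k}$ incurs a per-round error $\norm{\widehat{\phim^\trans d^{\pi^k}}-\phim^\trans d^{\pi^k}}_2\lesssim\sqrt{d/\tau}\,\log(\cdot)$, and summing over $k$, dividing by $K$, and using the prescribed $\tau$ makes this contribution $\mathcal{O}(\epsilon)$. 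Combining the four high-probability events (the $1-2\delta$ event of \Cref{thm:regret_bound} and one $1-\delta$ concentration event for each of the two feature estimates) by a union bound gives the stated probability $1-4\delta$, and substituting the prescribed $\eta,\tau,\alpha,\beta$ together with the lower bound on $K$ makes every one of the $\widetilde{\mathcal{O}}(K^{-1/4})$, $\mathcal{O}(K^{-1/2})$ and estimation terms at most $\epsilon$, delivering the bound $\epsilon+\epsilon_E$.

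I expect the main obstacle to be the bookkeeping around the estimation errors rather than any single hard inequality: the $\weight$-player's iterates depend on the random estimates $\widehat{\phim^\trans d^{\pi^k}}$, so the gradients are not independent of the history, and one must argue that the per-round concentration error $\sqrt{d/\tau}$ does not accumulate faster than $K$ while keeping $\tau$ small enough that the total budget $K\tau$ matches the advertised $\widetilde{\mathcal{O}}(\epsilon^{-4})$ interactions. Balancing $\tau$ against both the $\pi$-player regret (which wants $\tau$ large to shrink the bonus term) and the interaction budget (which wants $\tau$ small) under the single condition on $K$ is the delicate step; once the parameter choices in the statement are fixed, the remaining estimates are routine.
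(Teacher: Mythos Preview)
Your proposal is correct and follows essentially the same approach as the paper: decompose via \Cref{eq:dec}, bound $\mathrm{Regret}_{\pi}$ by instantiating \Cref{thm:regret_bound} with comparator $\expert$, bound $\mathrm{Regret}_{\weight}$ by the standard projected OGD guarantee, and absorb the two estimation errors (expert via \Cref{lemma:expert_concentration}, learner via Azuma--Hoeffding over the $\tau$ on-policy samples) before optimizing $\eta,\tau,K$. The only cosmetic difference is that the paper works in $\norm{\cdot}_\infty$ for the feature-expectation concentrations and then passes to $\norm{\cdot}_1\le\sqrt{d}\norm{\cdot}_2$ on the $\weight$ side, whereas you phrase the same bounds directly in $\norm{\cdot}_2$; the resulting rates and the $1-4\delta$ union-bound accounting are identical.
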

The proof included in \Cref{app:technical} starts with the decomposition in \Cref{eq:dec}. Then, we control the term $\mathrm{Regret}_{\mathbf{w}}(K; \mathbf{w}_{\mathrm{true}})$ with the standard online gradient descent analyses and the term $\mathrm{Regret}_{\pi}(K; \expert)$ with \Cref{thm:regret_bound}.
Finally, we control the statistical estimation error for the losses seen by the $\mathbf{w}$-player 
with an application of \Cref{lemma:expert_concentration}.
\begin{remark}The resulting algorithm improves over \cite{viano2022proximal} in two ways: (i) We bypass all kind of exploration assumptions, such as the persistent excitation assumption. We remark that this is a qualitative improvement. Indeed, the persistent excitation assumption is easily violated by deterministic policies with tabular features.
(ii) Moreover, the sample complexity improves from $\mathcal{O}(\epsilon^{-5})$  to $\mathcal{O}(\epsilon^{-4})$.
\end{remark}

\section{Empirical evaluation}
\begin{figure*}[!h] 
\centering
\begin{tabular}{cccc}
\subfloat[$\tau_E=1$ \label{fig:noisysigma01}]{%
    \includegraphics[width=0.22\linewidth]{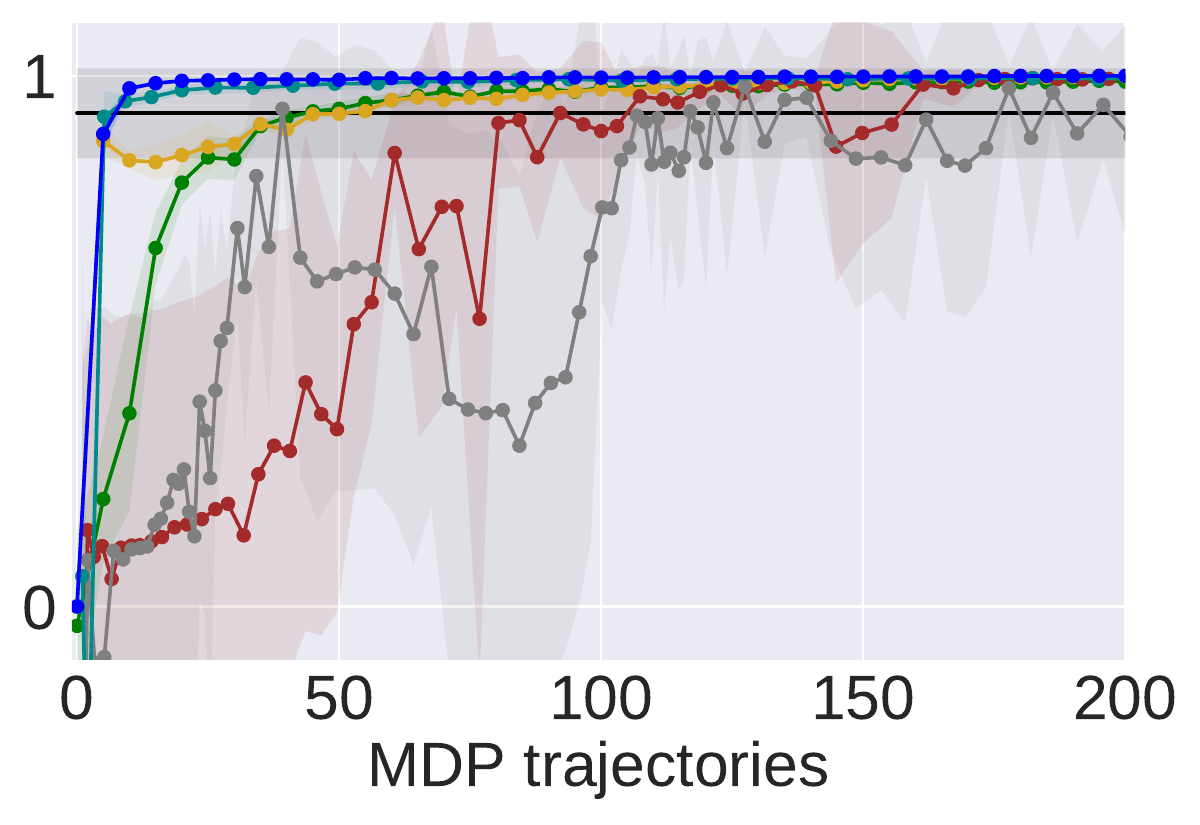}
     } &
\subfloat[Detail for $\tau_E=1$\label{fig:zoom}]{%
    \includegraphics[width=0.22\linewidth]{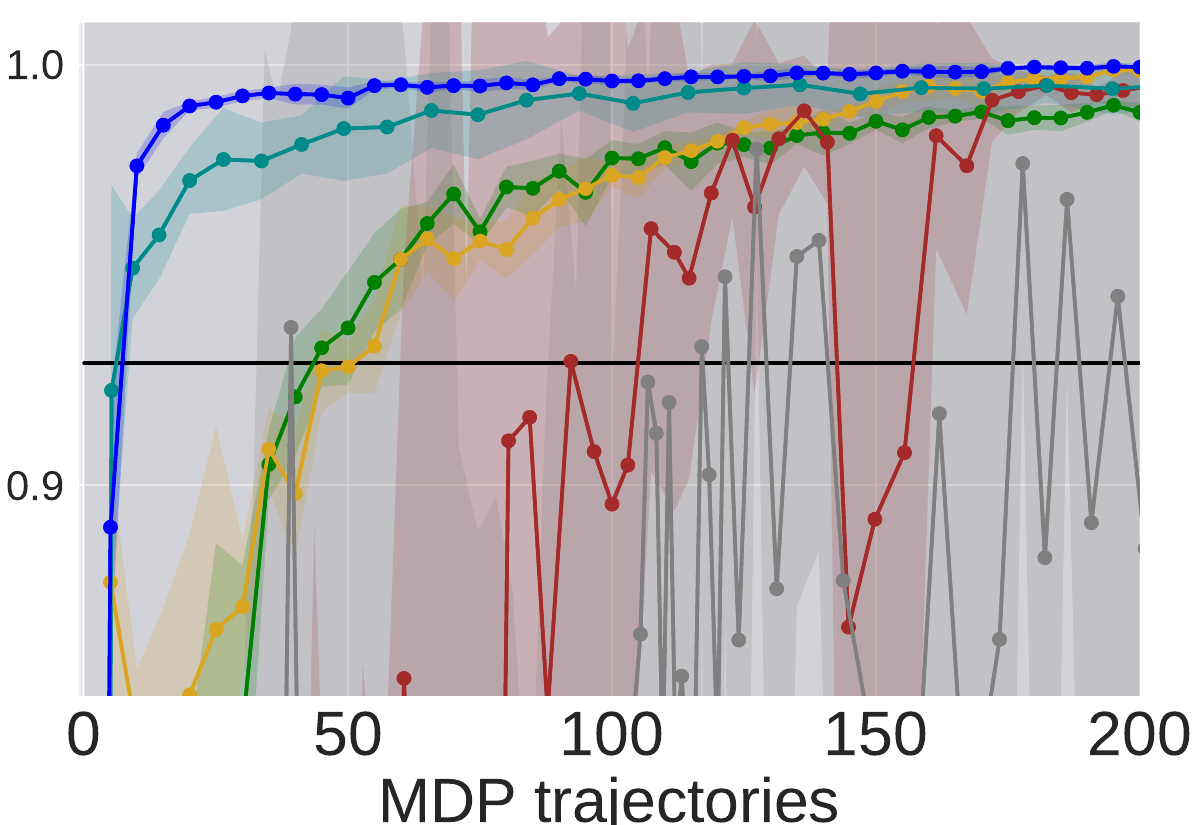}
     } &
\subfloat[$\tau_E = 2$ \label{fig:noisysigma01_n2}]{%
    \includegraphics[width=0.22\linewidth]{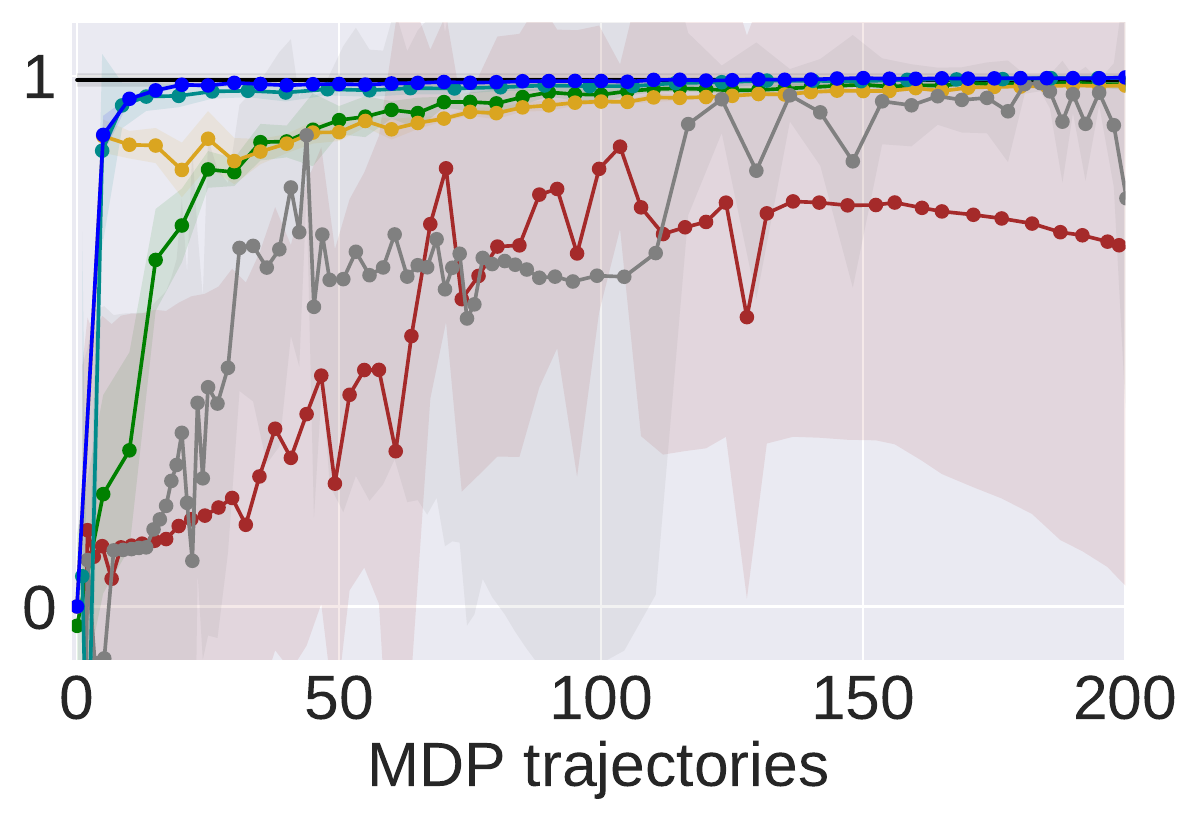}
     } &
\subfloat[Detail for $\tau_E=2$ \label{fig:zoom_2}]{%
    \includegraphics[width=0.22\linewidth]{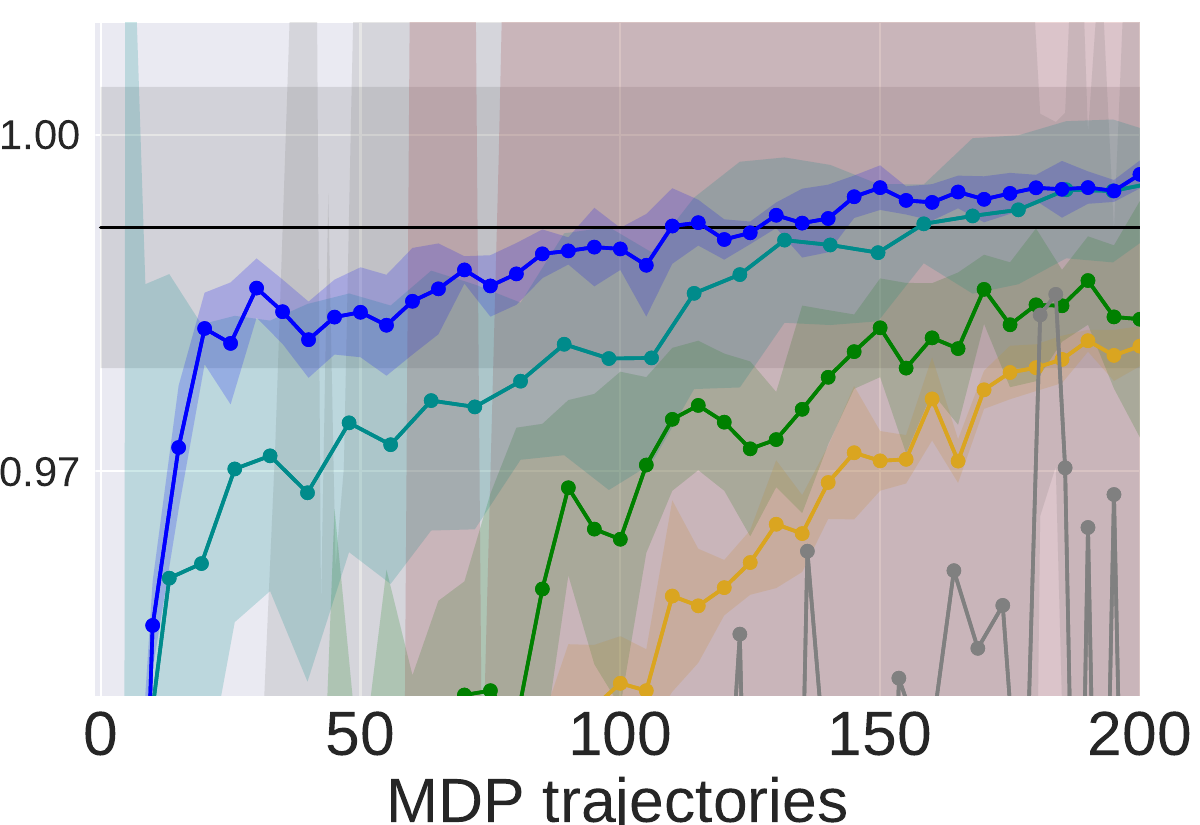}
     } 
\\
     \multicolumn{4}{c}{
       \includegraphics[scale=0.55]{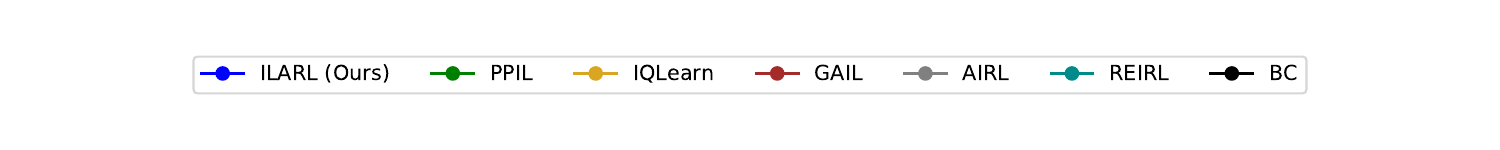}
     } 
\end{tabular}
\vspace{-2.5em}
\caption{Experiments on a continuous gridworld with a stochastic expert.The $y$-axis reports the normalized return. $1$ correpsonds to the expert performance and $0$ to the uniform policy one. }
\label{fig:noisy_lin_MDP}
\end{figure*}
We numerically verify the main theoretical insights derived in the previous sections (i) We aim to verify that for a general stochastic expert, the efficiency in terms of expert trajectories improves upon behavioural cloning. (ii) ILARL is more efficient in terms of MDP trajectories compared to PPIL \cite{viano2022proximal} which has worst theoretical guarantees and with popular algorithms that are widely used in practice but do not enjoy theoretical guarantees: GAIL \cite{Ho:2016}, AIRL \cite{Fu:2018}, REIRL \cite{boularias2011relative} and IQLearn \cite{Garg:2021} The experiments are run in a continuous state MDP explained in \Cref{app:experiments}. 
\newline
\textbf{Expert trajectory efficiency with stochastic expert} For the first claim, we use a stochastic expert obtained following with equal probability either the action taken by a deterministic experts previously trained with LSVI-UCB or an action sampled uniformly at random. We collect with such policy $\tau_E$ trajectories.
    From \Cref{fig:noisy_lin_MDP}, we observe that all imitation learning we tried have a final performance improving over behavioural cloning for the case $\tau_E=1$ while only REIRL and ILARL do so for $\tau_E=2$. In both cases, ILARL achieves the highest return that even matches the expert performance.
    \newline
\textbf{MDP trajectories efficiency} For the second claim, we can see in \Cref{fig:noisy_lin_MDP} that ILARL is the most efficient algorithm in terms of MDP trajectories for both values of $\tau_E$.
\section{An improvement for the finite horizon case}
\label{app:finite}
We notice that in \Cref{alg:infinite_imitation} we missed an opportunity. In fact, we could use prior knowledge of the cost function $\cost_h^k$ to update the policy $\pi_h^k$. In other words, in Step 8 of \Cref{alg:infinite_imitation} we could reveal $\weight^{k+1}$ to the algorithm that controls $\mathrm{Regret}_{\pi}(K; d^{\expert})$ . From an online learning perspective we can play best response to control more effectively the regret term $\mathrm{Regret}_{\pi}(K; d^{\expert})$ using LSVI-UCB \cite{jin2019provably}. This idea leads to the \Cref{alg:BRimitation_finite_horizon}.
\begin{algorithm}[h]
  \caption{Best Response Imitation learninG (\textbf{BRIG}).\label{alg:BRimitation_finite_horizon}}
  
  \begin{algorithmic}[1]
    \STATE {\bfseries Input:} Exploration parameter $\beta$, Step size $\eta$ , Reward step size $\alpha$, expert dataset $\mathcal{D}_{\expert} = \cup_{h \in [H]} \mathcal{D}_{\expert,h}$.
    \STATE Initialize $\pi_0$ as uniform distribution over $\aspace$.
    \STATE Estimate expert features expectation vectors $\widehat{\phim^\trans d^{\expert}_h} = \frac{1}{\abs{\mathcal{D}_{\expert,h}}}\sum_{s,a \in \mathcal{D}_{\expert,h}} \phi(s,a)$  for all $h \in [H]$.
    \FOR{$k=1,\ldots K$}
      \STATE Collect one episodes with policy $\pi^k$ denoted as 
      $\boldsymbol{\tau}^k = \bc{(s^k_h,a^k_h, s^k_{h+1})}^{H-1}_{h=1}$ and for every $h \in [H]$ append data $(s^k_h,a^k_h, s^k_{h+1})$ to $\mathcal{D}_h$.
          \STATE \textcolor{blue}{// Cost Update (to control $\mathrm{Regret}_{\mathbf{w}}(K; \mathbf{w}_{\mathrm{true}})$)}
          \STATE $\weight^{k+1}_h = \Pi_{\mathcal{W}}\bs{\weight_h^{k} - \alpha(\widehat{\phim^\trans d^{\expert}_h} - \phi \br{s^k_h, a^k_h})}$ with $\mathcal{W} = \bc{ \weight : 0 \leq \weight\leq 1}$  for all $h \in [H]$.
          \STATE \textcolor{blue}{// Full information LSVI-UCB  ( to control $\mathrm{Regret}_{\pi}(K; \expert)$)}
          \STATE Initialize $V^k_{H+1}=0$
          \FOR {$h=H, \dots, 1$}
            \STATE $\Lambda^k_h = \sum_{(s,a)\in\mathcal{D}_h} \phi(s,a)\phi(s,a)^\trans + I $
            \STATE $\mbf{v}^k_h = (\Lambda^k_h)^{-1} \sum_{(s,a,s')\in\mathcal{D}_h} \phi(s,a)V^k_{h+1}(s')$
            \STATE $b^k_h(s,a) = \beta \norm{\phi(s,a)}_{(\Lambda^k_h)^{-1}}$
            \STATE $Q^k_h = \bs{\phim \mbf{w}^{k+1}_h + \phim \mbf{v}^k_h - b^k_h }^{H-h+1}_{-H+h-1}$ ~~~~~ \textcolor{blue}{// We use the future loss $\mbf{w}^{k+1}_h$}.
            \STATE $\pi^{k+1}_h(s) = \argmin\br{Q^k_h(s,\cdot)}$ ~~~~~ \textcolor{blue}{ // Greedy policy update, Best Response}.
            \STATE $V^{k}_{h}(s) = \innerprod{\pi^{k+1}_h(s)}{Q^k_h(s,\cdot)}$.
          \ENDFOR
    \ENDFOR
  \end{algorithmic}
\end{algorithm}
\Cref{thm:BR} proves that \Cref{alg:BRimitation_finite_horizon} improves the required number of interaction to $K = \widetilde{\mathcal{O}}(H^4 d^3 \epsilon^{-2} \log^2( 1/ \delta))$ which greatly improves over $K = \widetilde{\mathcal{O}}(H^6 \log \abs{\aspace} d^3 \epsilon^{-4} \log^2(1/ \delta))$ achieved by \Cref{alg:infinite_imitation} applied to finite horizon problems which does not use the best response observation. A core step in the proof is to show that the regret of LSVI-UCB is still $\mathcal{O}(\sqrt{K})$ if the cost function is not fixed but it is observed in advanced by the agent.
\begin{theorem} \label{thm:BR}
Let us consider $K = \mathcal{O}\br{\frac{H^4 d^3 \log (d H / (\epsilon \delta))}{\epsilon^2}}$ iterations of \Cref{alg:BRimitation_finite_horizon} run with $\alpha=\sqrt{\frac{1}{2K}}$ 
and expert demonstrations $\tau_E = \frac{2 H^2 d\log(2 d / \delta)}{\epsilon_E^2}$. Moreover, let $\hat{k}$ be an iteration index sampled uniformly at random from $\bc{1,2, \dots, K}$, then it holds that with probability $1 - 3\delta$, it holds that
$
  \mathbb{E}_{\hat{k}}\bs{V^{\pi^{\hat{k}}}_1(s_1; \true) - V^{\expert}_1(s_1; \true)} \leq \mathcal{O}(\epsilon_E + \epsilon).
$
\end{theorem}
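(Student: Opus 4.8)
The plan is to prove the high-probability regret bound announced in \eqref{eq:fin_horizon_bound} and then pass to the stated guarantee by online-to-batch conversion. Since $\hat{k}$ is uniform on $[K]$ and $\sum_{h=1}^H\innerprod{\true_{,h}}{d_h^{\pi^k} - d_h^{\expert}} = V^{\pi^k}_1(s_1;\true) - V^{\expert}_1(s_1;\true)$, we have $\mathbb{E}_{\hat{k}}\bs{V^{\pi^{\hat{k}}}_1(s_1;\true) - V^{\expert}_1(s_1;\true)} = \tfrac{1}{K}\mathrm{Regret}(K)$, so it suffices to show $\mathrm{Regret}(K) \leq \widetilde{\mathcal{O}}(\sqrt{d^3 H^4 K}) + \mathcal{O}(K\epsilon_E)$ with probability $1-3\delta$ and divide by $K$. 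First I would apply the finite-horizon analogue of the decomposition in \eqref{eq:dec}: adding and subtracting $\sum_{k,h}\innerprod{\phim\weight^{k}_h}{d_h^{\pi^k} - d_h^{\expert}}$ and using $\true = \phim\wtrue$ splits $\mathrm{Regret}(K)$ into $\mathrm{Regret}_{\pi}(K;d^{\expert})$ and $\mathrm{Regret}_{\weight}(K;\wtrue)$.

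For $\mathrm{Regret}_{\weight}(K;\wtrue)$ I would run the standard projected online gradient ascent analysis for the $\weight$-player, whose update in Step 7 replaces the population gradient $\phim^\trans d_h^{\expert} - \phim^\trans d_h^{\pi^k}$ by $\widehat{\phim^\trans d^{\expert}_h} - \phi(s^k_h,a^k_h)$. With $\alpha = 1/\sqrt{2K}$ and bounded features the optimization regret against the observed gradients is $\mathcal{O}(H\sqrt{K})$. I would then control the two bias terms separately. The sampling bias $\sum_{k,h}\innerprod{\wtrue - \weight^k_h}{\phim^\trans d_h^{\pi^k} - \phi(s^k_h,a^k_h)}$ is a bounded martingale difference sequence, since $\weight^k_h$ is measurable with respect to the history and $\phi(s^k_h,a^k_h)$ has conditional mean $\phim^\trans d_h^{\pi^k}$, so Azuma--Hoeffding bounds it by $\mathcal{O}(\sqrt{HK\log(1/\delta)})$. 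The expert-estimation bias $\sum_{k,h}\innerprod{\wtrue - \weight^k_h}{\widehat{\phim^\trans d^{\expert}_h} - \phim^\trans d_h^{\expert}}$ is handled by \Cref{lemma:expert_concentration}: the choice $\tau_E = \tfrac{2H^2 d\log(2d/\delta)}{\epsilon_E^2}$ makes each $\norm{\widehat{\phim^\trans d^{\expert}_h} - \phim^\trans d_h^{\expert}}$ small enough that, after summing over the $K$ rounds and $H$ stages, this term is at most $\mathcal{O}(K\epsilon_E)$. Hence $\mathrm{Regret}_{\weight}(K;\wtrue) \leq \mathcal{O}(H\sqrt{K}) + \mathcal{O}(K\epsilon_E)$.

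The term $\mathrm{Regret}_{\pi}(K;d^{\expert})$ is where the best-response idea is used, and bounding it is the main obstacle. Because Step 8 reveals $\weight^{k+1}$ before the policy is recomputed, the $\pi$-player solves an optimistic planning problem for the \emph{known} cost $\cost^k_h = \phim\weight^{k+1}_h$ via greedy LSVI-UCB. I would prove that LSVI-UCB keeps its $\widetilde{\mathcal{O}}(\sqrt{d^3 H^4 K})$ regret against the fixed comparator $\expert$ even though the cost changes every episode. The crucial point is that the ridge-regression target and the bonus $b^k_h$ estimate only the transition operator $P_h V^k_{h+1}$, which is cost-independent, while the cost is added exactly; folding $\weight^{k+1}_h + \mbf{v}^k_h$ into a single bounded parameter keeps the covering number of the on-policy value class identical to the fixed-cost case. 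Thus the confidence event of \Cref{lemma:conf_bounds}, namely $-2b^k_h(s,a) \leq Q^k_h(s,a) - \cost^k_h(s,a) - P_h V^k_{h+1}(s,a) \leq 0$, holds uniformly, which gives per-round optimism $V^k_1(s_1) \leq V^{\expert}_1(s_1;\cost^k)$. Adding and subtracting $\sum_k V^k_1(s_1)$ and applying the performance-difference lemma then collapses the regret to a sum of bonuses along the trajectories of $\pi^k$, which the elliptical-potential lemma bounds by $\widetilde{\mathcal{O}}(\beta H\sqrt{dK}) = \widetilde{\mathcal{O}}(\sqrt{d^3 H^4 K})$ with $\beta = \widetilde{\mathcal{O}}(dH)$.

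Combining the two estimates yields $\mathrm{Regret}(K) \leq \widetilde{\mathcal{O}}(\sqrt{d^3 H^4 K}) + \mathcal{O}(K\epsilon_E)$ on the intersection of three good events (the LSVI-UCB confidence bounds, the expert concentration, and the martingale bound), which holds with probability $1-3\delta$. Dividing by $K$ and choosing $K = \mathcal{O}\br{H^4 d^3\log(dH/(\epsilon\delta))\epsilon^{-2}}$ forces the first term below $\epsilon$, giving $\mathbb{E}_{\hat{k}}\bs{V^{\pi^{\hat{k}}}_1(s_1;\true) - V^{\expert}_1(s_1;\true)} \leq \mathcal{O}(\epsilon + \epsilon_E)$. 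The hard part is the modified LSVI-UCB analysis: verifying that optimism and the covering argument survive when a different but known cost is used each round, so that the fixed-cost $\sqrt{K}$ regret carries over to the adversarial-known-cost setting.
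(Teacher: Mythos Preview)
Your proposal is correct and follows essentially the same route as the paper: the same two-player decomposition, projected OGD plus Azuma--Hoeffding plus \Cref{lemma:expert_concentration} for the $\weight$-player, and the observation that LSVI-UCB with a known-in-advance cost retains optimism and hence $\widetilde{\mathcal{O}}(\sqrt{d^3H^4K})$ regret for the $\pi$-player. The only step you compress is that the performance-difference argument leaves the bonuses in expectation under $d_h^{\pi^k}$, and one further Azuma--Hoeffding is needed to pass to the empirical sum before the elliptical-potential lemma applies (this is \Cref{lemma:BRelliptical} in the paper); once you account for that extra concentration event the argument is complete.
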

\begin{remark}
Unfortunately, the best response idea does not help improving the infinite horizon result because the use of greedy policies makes the term \eqref{eq:main_shift} impossible to control.
\end{remark}
\section{Conclusions}
In this paper, we proposes ILARL which greatly reduces the number of MDP trajectories in imitation learning in Linear MDP and BRIG that provides a further improvement for the finite horizon case. Both results build on the connection between imitation learning and MDPs with adversarial losses.
There is a number of exciting future directions. In particular, the estimation of $\widehat{\phim^\trans d^{\expert}}$ could be carried out with \textbf{fewer expert trajectories} using trajectory access to the MDP. This observation has been proven successful having access to the exact transitions of the MDP in the tabular case \cite{rajaraman2020toward} or under linear function approximation with further assumption on the expert policy and the feature distribution \cite{swamy2022minimax,rajaraman2021value}. Whether the same is possible for general stochastic experts in Linear MDP is an interesting open question.
Finally, a \textbf{better sample complexity } can be achieved designing better no regret algorithm for infinite horizon adversarial discounted linear MDP with full-information feedback and apply them in Step~9 of \Cref{alg:infinite_imitation} building for example on the recent result for the finite horizon case in \cite{sherman2023rate}.
\section{Acknowledgements}
Luca Viano is thankful to Antoine Moulin, Gergely Neu and Adrian Müller for insightful and enjoyable discussions.
This work is funded (in part) through a PhD fellowship of the Swiss Data Science Center, a joint venture between EPFL and ETH Zurich.
Innovation project supported by Innosuisse (contract agreement 100.960 IP-ICT).
This work was supported by Hasler Foundation Program: Hasler Responsible AI (project number 21043),
Research was sponsored by the Army Research Office and was accomplished under Grant Number W911NF-24-1-0048
This work was supported by the Swiss National Science Foundation (SNSF) under grant number 200021\_205011.
Luca Viano acknowledges travel support from ELISE (GA no 951847).
\bibliographystyle{plain}
\bibliography{neurips}
\newpage
\appendix
\section{On the $\beta$ of the persistent excitation assumption}
\begin{lemma}
Let the persistent excitation assumption holds, then it holds that $\beta \leq d^{-1}$.
\end{lemma}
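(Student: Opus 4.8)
The plan is to combine two elementary facts: that the smallest eigenvalue of a symmetric positive semidefinite (PSD) matrix never exceeds the average of its eigenvalues, and that the expected squared feature norm is bounded by the normalization in Assumption~\ref{ass:LinMDP}. Write $M_k \triangleq \mathbb{E}_{s,a\sim d^{\pi^k}}[\phi(s,a)\phi(s,a)^\trans]$. This is a $d\times d$ PSD matrix, so it has $d$ nonnegative eigenvalues $\lambda_1 \geq \cdots \geq \lambda_d = \lambda_{\min}(M_k) \geq 0$, and since the smallest of $d$ nonnegative numbers is at most their mean, $\lambda_{\min}(M_k) \leq \frac{1}{d}\sum_{i=1}^d \lambda_i = \frac{1}{d}\operatorname{tr}(M_k)$.

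Next I would bound $\operatorname{tr}(M_k)$. By linearity of trace and expectation together with the identity $\operatorname{tr}(\phi\phi^\trans) = \|\phi\|_2^2$, we have $\operatorname{tr}(M_k) = \mathbb{E}_{s,a\sim d^{\pi^k}}[\|\phi(s,a)\|_2^2]$. The normalization $\|\phim\|_{1,\infty} \leq 1$ means precisely that every row of $\phim$ has $\ell_1$-norm at most one, i.e.\ $\|\phi(s,a)\|_1 \leq 1$ for all $(s,a)$. Using the standard inequality $\|\phi\|_2 \leq \|\phi\|_1$ in $\mathbb{R}^d$ gives $\|\phi(s,a)\|_2^2 \leq \|\phi(s,a)\|_1^2 \leq 1$ pointwise, and hence $\operatorname{tr}(M_k) = \mathbb{E}[\|\phi\|_2^2] \leq 1$.

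Combining the two bounds yields $\lambda_{\min}(M_k) \leq \frac{1}{d}\operatorname{tr}(M_k) \leq \frac{1}{d}$, and since the persistent excitation assumption gives $\beta \leq \lambda_{\min}(M_k)$ for every policy $\pi^k$ in the generated sequence, we conclude $\beta \leq d^{-1}$. The argument is entirely elementary once the norm convention is pinned down; the only point requiring care is interpreting $\|\phim\|_{1,\infty}$ correctly as the maximal row $\ell_1$-norm so that it controls $\|\phi(s,a)\|_1$, after which no exploration- or MDP-specific structure is needed --- the bound holds for \emph{any} fixed distribution over state-action pairs.
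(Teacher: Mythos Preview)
Your proof is correct and follows essentially the same route as the paper: bound $\beta \leq \lambda_{\min}(M_k) \leq \tfrac{1}{d}\operatorname{tr}(M_k) = \tfrac{1}{d}\mathbb{E}\|\phi\|_2^2 \leq \tfrac{1}{d}\mathbb{E}\|\phi\|_1^2 \leq \tfrac{1}{d}$ via the normalization $\|\phim\|_{1,\infty}\leq 1$. Your interpretation of the norm convention is exactly the one the paper uses.
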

\begin{proof}

\begin{align*}
\beta &\leq \lambda_{\min}(\mathbb{E}_{s,a \sim d^{\pi^k}} \phi(s,a)\phi(s,a)^T ) \\&\leq \frac{1}{d}\mathrm{Trace} (\mathbb{E}_{s,a \sim d^{\pi^k}}\phi(s,a)\phi(s,a)^T)\\&= \frac{1}{d} \mathbb{E}_{s,a \sim d^{\pi^k}}\mathrm{Trace} (\phi(s,a)\phi(s,a)^T)
\\&= \frac{1}{d} \mathbb{E}_{s,a \sim d^{\pi^k}}\mathrm{Trace}(\phi(s,a)^T\phi(s,a))
\\&
= \frac{1}{d} \mathbb{E}_{s,a \sim d^{\pi^k}} \|\phi(s,a)\|_2^2
\\ &
\leq \frac{1}{d} \mathbb{E}_{s,a \sim d^{\pi^k}} \|\phi(s,a)\|_1^2
\\ &
\leq \frac{1}{d} \max_{s,a} \|\phi(s,a)\|_1^2
\\ &
= \frac{1}{d} \|\Phi\|_{1,\infty}^2 \leq \frac{1}{d}.
\end{align*}
where the last step follows from $\|\Phi\|_{1,\infty} \leq 1$ as assumed in Assumptions 1 and 2. 
\end{proof}
Using this result, we obtain that the dimension dependence in the bound in Viano et al. 2022 is in the best case $d^8$. Therefore, our new algorithm improves the dimension dependence as well.
\section{Interaction Protocol}
\label{sec:protocol}
\begin{protocol}[h]
  \caption{Interaction in Adversarial MDPs}
  \label{prot:interaction}
  \centering
\begin{algorithmic}[1]
\FOR{Episode index $k \in [1, K]$}
\STATE Sample initial state $s^k_1 \sim \initial$
\IF {Finite Horizon}
\FOR{stage $h \in [1,H]$}
\STATE The learner plays an action sampled from the policy $a^k_h \sim \pi^k_h(\cdot|s^k_h)$.
\STATE The environment sample next state $s^k_h \sim P_h(\cdot|s^k_h, a^k_h)$.
\STATE The agent observes the vector $c^k_h$.
\ENDFOR
\ENDIF
\IF {Infinite Horizon}
\STATE Initialize $Z = 0$, $i=1$, $s^1\sim\initial$.
\WHILE {$Z == 0$}
\STATE The learner plays an action sampled from the policy $a^i \sim \pi^k(\cdot|s^i)$.
\STATE The environment sample next state $s'^{,i} \sim P(\cdot|s^i, a^i)$, $s^{i+1} = s'^{,i}$.
\STATE \textcolor{blue}{// Restart with probability $1 - \gamma$}.
\STATE Sample $ Z \sim \mathrm{Bernoulli}(1 - \gamma)$.
\ENDWHILE
\STATE The agent observes the vector $c^k$.
\ENDIF
\STATE The learner chooses her next policy, i.e. $\pi^{k+1}$.
\ENDFOR
\end{algorithmic}
\end{protocol}
\section{Future directions}
\paragraph{Reducing the number of expert trajectories} \cite{rajaraman2021value} showed that under known transitions for a particular choice of features and when the linear expert occupancy measure is uniform over the state space the necessary expert trajectories are $\mathcal{O}\br{\frac{H^{3/2} d^2}{\epsilon}}$.

In the linear MDP case, under the same assumption on the features we can show that the same amount of expert trajectories is sufficient even for the unknown transition case.
Moreover, the algorithm we propose is computationally efficient while it is unclear how the output policy in linear Mimic-MD \cite{rajaraman2021value} can be computed with complexity independent on the number of states and actions.
We detail this in \Cref{app:better_expert}.

In addition, it is an interesting open question to see if the same improvement in terms of expert trajectory can be achieved under weaker conditions. A first step has been already made in \cite{swamy2022minimax}. They still requires a linear expert model but avoids the uniform expert occupancy measure assumption replacing it with the bounded density assumption (see \cite[Assumption 9]{swamy2022minimax}).
A natural follow up would be to investigate if the same expert trajectory bound can be obtained bringing the Linear MDP assumption but dropping the linear expert and the other assumptions used in \cite{rajaraman2021value,swamy2022minimax}. An intermediate step could benefit from using the persistent excitation assumption which is the natural counterpart of the bounded density assumption \cite[Assumption 9]{swamy2022minimax} in Linear MDPs.

\paragraph{Results for Linear Mixture MDPs}
Analogous ideas can be used for the case of Linear Mixture MDPs. In particular, one can use the same structure as in \Cref{alg:infinite_imitation} but replacing an algorithm that deals with adversarial losses in Linear Mixture MDPs. For the infinite horizon case one can use \cite{moulin2023optimistic} while for the finite horizon case one can choose \cite{zhou2021nearly}. In the latter case one can improve the Sample Complexity of OGAIL \cite{Liu:2022} to $\mathcal{O}\br{H^3 d^2 \epsilon^{-2}}$.
Moreover, we do not think that  that the Linear Expert assumption  \cite{rajaraman2021value} is meaningful in Linear Mixture MDPs because this would require the learner to know in advance the features $\int_{\sspace} \phi(s,a,s') V^\star(s') ds'$ where $V^\star$ is the optimal state value function.

\paragraph{Extension to Bilinear Classes}

The current results can be extended to Bilinear Classes \cite{du2021bilinear} at least in the finite horizon case. To this end we would need the observation that we can update the cost first and then using an algorithm which is allowed to see the next cost vector one round in advance. This is the same fact that allowed us to obtain $\mathcal{O}(\epsilon^{-2})$ sample complexity bound in the finite horizon case using LSVI-UCB. 

In the following we present an informal discussion of the proof technique that would prove polynomial sample complexity for Imitation Learning in bilinear classes.

If the cost at round $k$ is known before the learner needs to take an action the agent can form the discrepancy function $$\ell^{k}(s_h,a_h,s_{h+1}, g) = Q_{h,g}(s_h,a_h) + c^k_h - V_{h+1, g}(s_{h+1}) $$ where the upper script $k$ highlights the fact that the discrepancy function depends on the adversarial cost $c^k$. At each round, we can then compute 
\begin{equation*}
    \mathrm{argmax}_{g \in \mathcal{H}} V_{0, g}(s_1) \quad \text{s.t.} \quad \frac{1}{m}\sum^k_{\tau=1} \sum^m_{i=1}\ell^k (s^i_h,a^i_h,s^i_{h+1}, g) \leq k \epsilon^2_{\mathrm{gen}, k}(m, \delta) \quad \forall h \in [H]
\end{equation*}

Where the generalization error at round $k$ denoted $\epsilon_{\mathrm{gen}, k}$ satisfies that 
$$
\sup_{g\in \mathcal{H}} \abs{\frac{1}{m}\sum^m_{i=1} \ell^k (s^i_h,a^i_h,s^i_{h+1}, g) - \mathbb{E}_{ s,a \sim d^{\pi^k}, s' \sim P(\cdot|s,a)} \bs{\ell^k(s_h,a_h,s_{h+1}, g)}} \leq \epsilon_{\mathrm{gen}, k}(m, \delta)
$$

with probability at least $1 - \delta$.

At this point, we modify the Bilinear Classes assumption to keep into account the adversarial costs setting as follows.
We assume that all the adversarial costs belongs to a convex set $\mathcal{C}$. Then we consider an MDP for which there exists a function $f^\star \in \mathcal{H}$ such that for all $c \in \mathcal{C}$, it holds that
$$
\abs{\mathbb{E}_{s,a \sim d^{\pi_f}_h}\bs{Q_{h,f}(s_h,a_h) + c_h(s_h,a_h) - V_{h+1,f}}} \leq \abs{\innerprod{W^c_{h}(f) - W^c_{h}(f^\star)}{X^c_h(f)}}
$$

and 

$$
\abs{\mathbb{E}_{s,a \sim d^{\pi_f}_h}\bs{Q_{h,g}(s_h,a_h) + c_h(s_h,a_h) - V_{h+1,g}}} \leq \abs{\innerprod{W^c_{h}(g) - W^c_{h}(f^\star)}{X^c_h(f)}}
$$

This modified assumption for Bilinear classes with time changing rewards implies that the comparator hypothesis $f^\star$ is realizable for all $k$, that is, for all $k$ it holds that $$\frac{1}{m}\sum^k_{\tau=1} \sum^m_{i=1}\ell^k (s^i_h,a^i_h,s^i_{h+1}, f^\star) \leq k \epsilon^2_{\mathrm{gen}, k}(m, \delta)$$ so optimism holds and with the same steps in \cite{du2021bilinear} it can be proven that:

\begin{align*}
\mathrm{Regret}(K, \pi^\star) &\leq \sum^K_{k=1} \abs{\mathbb{E}_{s,a \sim d^{\pi_{f^k}}_h}\bs{Q_{h,f^k}(s_h,a_h) + c_h(s_h,a_h) - V_{h+1,f^k}}} \\&\leq \sum^K_{k=1}\abs{\innerprod{W^{r^k}_{h}(f^k) - W^{r^k}_{h}(f^\star)}{X^{r^k}_h(f^k)}}
\end{align*}

Then, using \citep[Equation 8]{du2021bilinear} and the elliptical potential lemma we obtain

$$
\frac{1}{K}\mathrm{Regret}(K, \pi^\star) = \mathcal{O}\br{\sqrt{K \max_{k\in[K]}\epsilon^2_{\mathrm{gen}, k}(m, \delta) \br{\exp{\frac{\log K}{K}} - 1}}
}
$$

Then, denoting $\epsilon_{\mathrm{gen}}(m, \delta) = \max_{k\in[K]}\epsilon_{\mathrm{gen}, k}(m, \delta) $ and choosing $K = \mathcal{O}(\log \epsilon^{-2}_{\mathrm{gen}}(m, \delta))$  gives $\frac{1}{K}\mathrm{Regret}(K, \pi^\star) \leq \mathcal{O}\br{\epsilon_{\mathrm{gen}}(m, \delta)}$.

This concludes the regret proof for the $\pi$-player. The player updating the sequences of cost can still use OGD projecting on the set $\mathcal{C}$.

Our algorithm for the finite horizon setting, BRIG, uses greedy policies. In this case, \cite{du2021bilinear} showed that the generalization error can be controlled effectively in many instances such as Linear $Q^\star / V^\star$, Low Occupancy measure models \cite{du2021bilinear}, Bellman complete models \cite{munos2003error} and finite Bellman Rank \cite{jiang2017contextual}.

However in the infinite horizon case we need to use regularization for which it is currently not known if $\mathrm{gen}(m, \delta)$ can be controlled effectively. This is again related to the issue with the covering number of softmax policies in linear MDPs ( see \cite{sherman2023rate,zhong2023theoretical} ). This is an interesting open question.

A final comment is that the algorithm proposed in \cite{du2021bilinear} for bilinear classes is not computationally efficient is general. Therefore also its adversarial extension presented above will have this drawback. In this paper we focused on the smaller class of Linear MDP for which we provide a computationally efficient algorithm.

\paragraph{Improving dependence on $\epsilon$, $d$ and $H$}
For what concerns the finite horizon case we presented BRIG which has a dependence on $\epsilon$ which can not be improved further if not bypassing the reduction to online learning in MDPs. However the dependence in $d$ and $H$ can be improved. To circumvent this problem, we could think that one could apply LSVI-UCB++ \cite{he2023nearly}. However, it turns out that LSVI-UCB++ fails if the cost changes adversarially even if the learner knows the cost at the next round. Therefore, a new algorithm design is needed to improve  by a factor $dH$ upon the MDP trajectory complexity of \Cref{alg:BRimitation_finite_horizon}.

\section{Omitted Proofs}
\subsection{Proof of \Cref{lemma:infinite_extendend_pdl}}
\begin{lemma}
\label{lemma:infinite_extendend_pdl}
Consider the MDP $M = (\sspace, \aspace, \gamma, P, \cost)$ and two policies $\pi, \pi^\prime: \sspace \rightarrow \Delta_{\aspace}$. Then consider for any $\widehat{Q} \in \mathbb{R}^{\abs{\sspace}\abs{\aspace}}$ and $\widehat{V}^{\pi}(s) = \innerprod{\pi(\cdot|s)}{\widehat{Q}(s,\cdot)}$ and $Q^{\pi^\prime}, V^{\pi^\prime}$ be respectively the state action and state value function of the policy $\pi$ in MDP $M$ and let $E \in \mathbb{R}^{\abs{\sspace}\abs{\aspace} \times \abs{\sspace}}$ be the matrix such that for an arbitrary vector $f \in \mathbb{R}^{\abs{\sspace}}$ it holds $(E f)(s,a) = f(s)$. Then, it holds that
\begin{equation*}
    \innerprod{\initial}{\widehat{V}^{\pi} - V^{\pi^\prime}}= \frac{1}{1-\gamma} \br{\innerprod{d^{\pi^\prime}}{\widehat{Q} - \cost - \gamma P \widehat{V}^{\pi}} + \innerprod{d^{\pi^\prime}}{ E \widehat{V}^{\pi} - \widehat{Q}}}
\end{equation*}
\end{lemma}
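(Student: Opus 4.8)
The plan is to reduce the claimed identity to two elementary facts about occupancy measures, after first observing that the two occurrences of $\widehat{Q}$ on the right-hand side cancel. Expanding the inner products, the right-hand side equals
$$\frac{1}{1-\gamma}\innerprod{d^{\pi^\prime}}{(E - \gamma P)\widehat{V}^{\pi} - \cost},$$
because $(\widehat{Q} - \cost - \gamma P\widehat{V}^{\pi}) + (E\widehat{V}^{\pi} - \widehat{Q}) = (E - \gamma P)\widehat{V}^{\pi} - \cost$. Hence it suffices to prove
$$\innerprod{\initial}{\widehat{V}^{\pi} - V^{\pi^\prime}} = \frac{1}{1-\gamma}\innerprod{d^{\pi^\prime}}{(E - \gamma P)\widehat{V}^{\pi}} - \frac{1}{1-\gamma}\innerprod{d^{\pi^\prime}}{\cost}.$$

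The first building block is the standard value--occupancy duality: unrolling the definitions of $V^{\pi^\prime}$ and of $d^{\pi^\prime}$ and summing the geometric series gives $\innerprod{\initial}{V^{\pi^\prime}} = \frac{1}{1-\gamma}\innerprod{d^{\pi^\prime}}{\cost}$, which exactly matches the cost term above. It therefore remains only to establish the \emph{flow identity}
$$\innerprod{\initial}{f} = \frac{1}{1-\gamma}\innerprod{d^{\pi^\prime}}{(E - \gamma P) f} \qquad \text{for every } f \in \mathbb{R}^{\abs{\sspace}},$$
and then apply it to $f = \widehat{V}^{\pi}$. Note that this identity is precisely the adjoint statement $(E - \gamma P)^\trans d^{\pi^\prime} = (1-\gamma)\initial$.

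To prove the flow identity I would use the Bellman flow equation for the state occupancy measure, $d^{\pi^\prime}(s^\prime) = (1-\gamma)\initial(s^\prime) + \gamma \sum_{s,a} d^{\pi^\prime}(s,a) P(s^\prime | s,a)$, which follows by splitting off the $h=0$ term and re-indexing the series defining $d^{\pi^\prime}$. Writing $\innerprod{d^{\pi^\prime}}{E f} = \sum_s d^{\pi^\prime}(s) f(s)$ and $\innerprod{d^{\pi^\prime}}{\gamma P f} = \gamma \sum_{s^\prime} f(s^\prime) \sum_{s,a} d^{\pi^\prime}(s,a) P(s^\prime|s,a)$, the flow equation lets me substitute $\gamma \sum_{s,a} d^{\pi^\prime}(s,a) P(s^\prime|s,a) = d^{\pi^\prime}(s^\prime) - (1-\gamma)\initial(s^\prime)$; the state-occupancy terms cancel and only $(1-\gamma)\innerprod{\initial}{f}$ survives. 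Combining the flow identity at $f = \widehat{V}^{\pi}$ with the value--occupancy duality and the $\widehat{Q}$ cancellation yields the claim.

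The computations are routine; the only point requiring care is the bookkeeping of operator conventions --- that $P$ maps functions on $\sspace$ to functions on $\sspace\times\aspace$ via $(Pf)(s,a) = \sum_{s^\prime}P(s^\prime|s,a)f(s^\prime)$, while $E$ is the lifting $(Ef)(s,a) = f(s)$ --- so that the adjoint flow equation is applied in the correct direction. I expect no genuine obstacle beyond keeping these conventions straight.
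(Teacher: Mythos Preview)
Your proof is correct and uses the same two ingredients as the paper's proof --- the value--occupancy duality $\innerprod{d^{\pi'}}{\cost}=(1-\gamma)\innerprod{\initial}{V^{\pi'}}$ and the Bellman flow identity $E^\trans d^{\pi'}=(1-\gamma)\initial+\gamma P^\trans d^{\pi'}$. The only difference is cosmetic: you cancel $\widehat{Q}$ at the outset and then split, whereas the paper starts from $\innerprod{d^{\pi'}}{\widehat{Q}}$, adds and subtracts $\cost+\gamma P\widehat V^\pi$, and then applies the same two identities before rearranging.
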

\begin{proof}
\begin{align*}
\innerprod{d^{\pi^\prime}}{\widehat{Q}} &= 
 \innerprod{d^{\pi^\prime}}{\widehat{Q} - \cost  - \gamma P \widehat{V}^{\pi}} + \innerprod{d^{\pi^\prime}}{\cost + \gamma P \widehat{V}^{\pi}}
\\ & = \innerprod{d^{\pi^\prime}}{\widehat{Q} - \cost  - \gamma P \widehat{V}^{\pi}} + (1 - \gamma)\innerprod{\initial}{V^{\pi^\prime}} + \innerprod{\gamma P^\trans d^{\pi^\prime}}{ \widehat{V}^{\pi}}
\\ & = \innerprod{d^{\pi^\prime}}{\widehat{Q} - \cost  - \gamma P \widehat{V}^{\pi}} + (1 - \gamma)\innerprod{\initial}{V^{\pi^\prime} - \widehat{V}^{\pi}} + \innerprod{E^\trans d^{\pi^\prime}}{ \widehat{V}^{\pi}}
\end{align*}
Rearranging the terms leads to the conclusion.
\end{proof}
\subsection{Proof of \Cref{thm:regret_bound}}
\begin{proof} Let $V^{\pi^k,k} \in [-(1 - \gamma)^{-1}, (1 - \gamma)^{-1}]^{\abs{\sspace}}$ denote $V^{\pi^k}(\cdot;\cost^k)$, then
\begin{align}
\mathrm{Regret}(K;\pi^\star) &\triangleq \sum^K_{k=1}\innerprod{d^{\pi^k} - d^{\pi^\star}}{\cost^k} \nonumber
\\&= (1 - \gamma)\sum^K_{k=1}\innerprod{\initial}{V^{\pi^k, k} - V^{\pi^\star, k}} \nonumber
\\&= \sum^K_{k=1} \mathbb{E}_{s\sim d^{\pi^\star}}\bs{
    \innerprod{Q^k(s,\cdot)}{\pi^k(s) - \pi^\star(s)}} \quad \tag{OMD} \label{eq:FTRL}\\&\phantom{\leq}+ \sum^K_{k=1} \mathbb{E}_{s,a\sim d^{\pi^\star}}\bs{Q^{k+1}(s,a) - \cost^k(s,a) - \gamma P V^k(s,a)} \quad \tag{Optimism 1} \label{eq:optimism1}
    \\&\phantom{\leq}+ \sum^K_{k=1} \mathbb{E}_{s,a\sim d^{\pi^k}}\bs{\cost^k(s,a) + \gamma P V^k(s,a) - Q^{k+1}(s,a) } \quad \tag{Optimism 2} \label{eq:optimism2}
    \\&\phantom{\leq}+ \sum^K_{k=1} \mathbb{E}_{s,a\sim d^{\pi^k}}\bs{Q^{k+1}(s,a) - Q^{k}(s,a)} \quad \label{eq:shift}\tag{Shift 1}
    \\&\phantom{\leq}+ \sum^K_{k=1} \mathbb{E}_{s,a\sim d^{\pi^\star}}\bs{Q^{k}(s,a) - Q^{k+1}(s,a)} \quad \label{eq:easy_shift} \tag{Shift 2}
\end{align}
Then, we have that \ref{eq:optimism1},\ref{eq:optimism2} can be bounded using \Cref{lemma:infinite_optimism} while for \ref{eq:shift} we crucially rely on the regularization of the policy improvement step and on the fact that the policy is updated only every $\tau$ steps. Both these observations allow to derive
\begin{align*}
\sum^K_{k=1} \mathbb{E}_{s,a\sim 
 d^{\pi^k}}\bs{Q^{k}(s,a) - Q^{k+1}(s,a)} &\leq \sum^K_{k=2} \sum_{s,a} Q^k(s,a) \br{d^{\pi^k}(s,a) - d^{\pi^{k-1}}(s,a)} + \sum_{s,a} Q^1(s,a) d^{\pi^1}(s,a)
 \\&\leq \sum^K_{k=2} \norm{Q^k(s,a)}_{\infty} \norm{d^{\pi^k} - d^{\pi^{k-1}}}_1
  \\&\leq \frac{1}{1-\gamma}\sum^K_{k=2}\norm{d^{\pi^k} - d^{\pi^{k-1}}}_1
    \\&= \frac{1}{1-\gamma}\sum^{\floor{K/\tau}}_{j=2}\norm{d^{\pi^{(j)}} - d^{\pi^{(j-1)}}}_1
\end{align*}
At this point applying Pinkser's inequality and \citep[Lemma A.1]{moulin2023optimistic} we obtain
\begin{align*}
\norm{d^{\pi^{(j)}} - d^{\pi^{(j-1)}}}_1 \leq \sqrt{2 D_{KL}(d^{\pi^{(j)}}, d^{\pi^{(j-1)}})} &\leq \sqrt{\frac{2}{1-\gamma} \mathbb{E}_{s \sim d^{\pi^{(j)}}} D_{KL}(\pi^{(j)}(\cdot|s), \pi^{(j-1)}(\cdot|s))} \\&\leq \sqrt{\frac{2 \eta}{1-\gamma} \mathbb{E}_{s,a \sim d^{\pi^{(j)}}} \bar{Q}^{(j-1)}} \\&\leq \frac{\sqrt{2 \eta}}{1-\gamma}
\end{align*}
All in all, we have that
\begin{equation*}
    \eqref{eq:shift} \leq \frac{1}{(1-\gamma)^2} \sum^{\floor{K/\tau}}_{j=2} \sqrt{2 \eta} = \frac{\sqrt{2 \eta} K}{(1-\gamma)^2\tau}
\end{equation*}
For the second shift term, we can use a trivial telescoping argument to obtain that $\eqref{eq:easy_shift} \leq (1-\gamma)^{-1}$.

The terms \eqref{eq:optimism1} and \eqref{eq:optimism2} can be bounded exactly as in the finite horizon case thanks to \Cref{lemma:infinite_optimism} we have that with probability $1 - \delta$
\begin{equation}
    \eqref{eq:optimism1} + \eqref{eq:optimism2} \leq 2\sum^K_{k=1} \mathbb{E}_{s,a\sim d^{\pi^k}}\bs{b^k(s,a)} \label{eq:optbound}.
\end{equation}
Therefore, we just need to adapt the argument for bounding the exploration term $2\sum^K_{k=1} \mathbb{E}_{s,a\sim d^{\pi^k}}\bs{b^k(s,a)}$ in the infinite horizon case.
We start by exploiting the fact that both $b^k$ and $\pi^k$ change in fact only every $\tau$ updates. So we have 
\begin{align*}
    \sum^K_{k=1} \mathbb{E}_{s,a\sim d^{\pi^k}}\bs{b^k(s,a)} &= \tau \sum^{K/\tau}_{j=1} \mathbb{E}_{s,a\sim d^{\pi^{(j)}}}\bs{b^{(j)}(s,a)}\\
    &= \sum^{K/\tau}_{j=1} \mathbb{E}_{\mathcal{D}^{(j)} \sim d^{\pi^{(j)}}}\bs{\sum_{s,a\in \mathcal{D}^{(j)} } b^{(j)}(s,a)}\\
    &= \sum^{K/\tau}_{j=1} \mathbb{E}_{\mathcal{D}^{(j)} \sim d^{\pi^{(j)}}}\bs{\sum_{s,a\in \mathcal{D}^{(j)} } 
    \beta \norm{\phi(s,a)}_{(\Lambda^{(j)} )^{-1}}}
\end{align*}
At this point, fixing an arbitrary order for the state action pairs in $\mathcal{D}^{(j)}$ we can define for any $i \in [1, \tau]$ the matrix
\begin{equation*}
    \Lambda^{(j)}_i = \sum^i_{\ell=1} \phi(s^{\ell}, a^{\ell})\phi(s^{\ell}, a^{\ell})^T + \lambda I
\end{equation*}
Then, it holds that
\begin{align*}
    \sum^{K/\tau}_{j=1} \mathbb{E}_{\mathcal{D}^{(j)} \sim d^{\pi^{(j)}}}\bs{\sum_{s,a\in \mathcal{D}^{(j)} } 
    \beta \norm{\phi(s,a)}_{(\Lambda^{(j)} )^{-1}}} &= \sum^{K/\tau}_{j=1} \mathbb{E}_{\mathcal{D}^{(j)} \sim d^{\pi^{(j)}}}\bs{\sum^\tau_{\ell=1} 
    \beta \norm{\phi(s^{\ell},a^{\ell})}_{(\Lambda^{(j)} )^{-1}}} \\ &\leq \sum^{K/\tau}_{j=1} \mathbb{E}_{\mathcal{D}^{(j)} \sim d^{\pi^{(j)}}}\bs{\sum^\tau_{\ell=1} 
    \beta \norm{\phi(s^{\ell},a^{\ell})}_{(\Lambda_{\ell}^{(j)} )^{-1}}}
\end{align*}
At this point, we can notice that for any index pair $\ell, j$ it holds that
\begin{align*}
\norm{\phi(s^{\ell},a^{\ell})}_{(\Lambda_{\ell}^{(j)} )^{-1}} \leq \sqrt{\lambda_{\max}((\Lambda_{\ell}^{(j)} )^{-1})} \norm{\phi(s^{\ell},a^{\ell})} \leq \sqrt{\frac{1}{\lambda_{\min}(\Lambda_{\ell}^{(j)} )}} \leq 1
\end{align*}
where the norm $\phi(s^{\ell},a^{\ell})$ is upper bounded by $1$ thanks to the Linear MDP assumption. Then, via \citep[Lemma F.1]{sherman2023} (where the random variable $X_i$ is in this context $\sum^\tau_{\ell=1} 
    \beta \norm{\phi(s^{\ell},a^{\ell})}_{(\Lambda_{\ell}^{(j)} )^{-1}}$ which is supported in $[0, \beta \tau]$) we can continue upper bounding with probability $1 - \delta$ the bonus sum as
\begin{align*}
\sum^K_{k=1} \mathbb{E}_{s,a\sim d^{\pi^k}}\bs{b^k(s,a)} &\leq 2\sum^{K/\tau}_{j=1}\sum^\tau_{\ell=1} 
    \beta \norm{\phi(s^{\ell},a^{\ell})}_{(\Lambda_{\ell}^{(j)} )^{-1}} + 4\beta\tau\log(2K/(\tau\delta))
    \\&= 2\sum^{K/\tau}_{j=1}\sum^\tau_{\ell=1} 
    \beta \sqrt{\phi(s^{\ell},a^{\ell})^T(\Lambda_{\ell}^{(j)} )^{-1}\phi(s^{\ell},a^{\ell})} + 4\beta\tau\log(2K/(\tau\delta))
    \\&\leq 2\sum^{K/\tau}_{j=1} 
    \beta \sqrt{\tau \sum^\tau_{\ell=1}\phi(s^{\ell},a^{\ell})^T(\Lambda_{\ell}^{(j)} )^{-1}\phi(s^{\ell},a^{\ell})} + 4\beta\tau\log(2K/(\tau\delta))
    \\&\leq 2\sum^{K/\tau}_{j=1} 
    \beta \sqrt{\tau \sum^d_{i=1}\log\br{ 1 + \lambda_i}} + 4\beta\tau\log(2K/(\tau\delta))
\end{align*}
where the last inequality uses \citep[Lemma 11.11]{Cesa-Bianchi:2006} and the notation $\bc{\lambda_i}^d_{i=1}$ stands for the eigenvalues of the matrix $\Lambda^{(j)} - I$ . At this point we can recognize the determinant inside the log and use the determinant trace inequality.
\begin{align*}
\sum^{K/\tau}_{j=1} 
    \beta \sqrt{\tau \sum^d_{i=1}\log\br{ 1 + \lambda_i}} &= \sum^{K/\tau}_{j=1} 
    \beta \sqrt{\tau \log\br{ \prod^d_{i=1} 1 + \lambda_i}} \\&= \sum^{K/\tau}_{j=1} 
    \beta \sqrt{\tau \log\br{ \mathrm{det}( \Lambda^{(j)} )}} \\&\leq \sum^{K/\tau}_{j=1} 
    \beta \sqrt{\tau d \log\br{ \frac{\mathrm{Trace}( \Lambda^{(j)})}{d}}} \\&\leq \sum^{K/\tau}_{j=1} 
    \beta \sqrt{\tau d \log\br{ \frac{d + \mathrm{Trace}(\sum^\tau_{\ell=1}\phi(s^{\ell}, a^{\ell})\phi(s^{\ell}, a^{\ell})^T)}{d}}} \\&\leq \sum^{K/\tau}_{j=1} 
    \beta \sqrt{\tau d \log\br{ \frac{d + \tau \max_{\ell}\mathrm{Trace}(\phi(s^{\ell}, a^{\ell})\phi(s^{\ell}, a^{\ell})^T)}{d}}}
    \\&= \sum^{K/\tau}_{j=1} 
    \beta \sqrt{\tau d \log\br{ \frac{d + \tau \max_{\ell}\phi(s^{\ell}, a^{\ell})^T\phi(s^{\ell}, a^{\ell})}{d}}} \\&\leq \sum^{K/\tau}_{j=1} 
    \beta \sqrt{\tau d \log\br{ \frac{d + \tau}{d}}}
    \\&\leq
    \beta K \sqrt{\frac{d \log 2 \tau d}{\tau}}.
\end{align*}

Hence, with probability $1 - 2 \delta$ (union bound between the  event under which \Cref{eq:optbound} holds and the the application of the concentration result \citep[Lemma F.1]{sherman2023} in bounding the exploration bonuses sum), it holds that

\begin{align*}
\mathrm{Regret}(K ; \pi^\star) &\leq \sum^K_{k=1} \mathbb{E}_{s\sim d^{\pi^\star}}\bs{
    \innerprod{Q^k(s,\cdot)}{\pi^k(s) - \pi^\star(s)}} + 4\beta K \sqrt{\frac{d \log (2\tau d)}{\tau}} +  8\beta\tau\log(2K/(\tau\delta)) \\&\phantom{=}+\frac{\sqrt{2 \eta} K}{(1-\gamma)^2\tau} + \frac{1}{(1 - \gamma)}
\end{align*}
The last step is to bound  \eqref{eq:FTRL} invoking \citep[Lemma F.5]{sherman2023} noticing that the gradient norm is upper bounded by $\frac{1}{1-\gamma}$. This gives 
\begin{equation*}
    \sum^K_{k=1} \mathbb{E}_{s\sim d^{\pi^\star}}\bs{
    \innerprod{Q^k(s,\cdot)}{\pi^k(s) - \pi^\star(s)}} \leq \frac{\tau \log \abs{\aspace}}{\eta} + \frac{\tau}{1-\gamma} + \frac{\eta K}{(1-\gamma)^2}
\end{equation*} Putting all together , we have that for $\tau \leq \frac{K}{\sqrt{\tau}}$
\begin{align*}
    \mathrm{Regret}(K ; \pi^\star)  &\leq \frac{\tau \log \abs{\aspace}}{\eta} + \frac{\tau + 1}{1-\gamma} + \frac{\eta K}{(1-\gamma)^2} + 12\beta K \sqrt{\frac{d}{\tau}}\log\br{\frac{2K d}{\tau\delta}} +\frac{\sqrt{2 \eta} K}{(1-\gamma)^2\tau}
\end{align*}
\end{proof}
\subsection{Proof of \Cref{thm:infinite_imitation}}
To improve the readability of the proof we restate \Cref{alg:infinite_imitation} hereafter.
\begin{algorithm}[h]
  \caption{\textbf{ILARL} (detailed version).}
  \label{alg:infinite_imitation_complete}
  \begin{algorithmic}[1]
    \STATE {\bfseries Input:} Dataset size $\tau$, Exploration parameter $\beta$, Step size $\eta$, Expert dataset $\mathcal{D}_{\expert} = \bc{\boldsymbol{\tau}_i}^{\tau_E}_{i=1}$.
    \STATE Estimate for expert feature visitation  $\widehat{\phim^\trans d^{\expert}} \triangleq \frac{(1 - \gamma)}{\tau_E}\sum_{\boldsymbol{\tau} \in \mathcal{D}_{\expert}} \sum_{s_h,a_h \in \boldsymbol{\tau}} \gamma^h \phi(s_h,a_h)$ .
    \STATE Initialize $\pi_0$ as uniform distribution over $\aspace$
    \STATE Initialize $V^{1}=0$
    \FOR{$j=1,\ldots \floor{K/\tau}$}
      \STATE \textcolor{blue}{// Collect on-policy data}
      \STATE Denote the indices interval  $ T_j \triangleq [(j-1)\floor{K/\tau}, j\floor{K/\tau})$.
      \STATE Sample $\mathcal{D}^{(j)} = \bc{s^i,a^i,s^{\prime, i}}_{i\in T_j} \sim d^{\pi^{(j)}} $
      \STATE Compute $\Lambda^{(j)} = \sum_{(s,a)\in\mathcal{D}^{(j)}} \phi(s,a)\phi(s,a)^\trans + I $ .     
      \STATE Compute $b^{(j)}(s,a) = \beta \norm{\phi(s,a)}_{(\Lambda^{(j)} )^{-1}}$.
      \FOR{$k \in T_j$}
      \STATE \textcolor{blue}{// Cost update}
          \STATE Estimate features expectation vector $\widehat{\phim^\trans d^{\pi^k}}$ as $\tau^{-1}\sum_{s,a \in \mathcal{D}^{(j)}} \phi(s,a) $.
          \STATE $\weight^{k+1} = \Pi_{\mathcal{W}}\bs{\weight^{k} - \alpha(\widehat{\phim^\trans d^{\expert}} - \widehat{\phim^\trans d^{\pi^k}})}$ with $\mathcal{W} = \bc{ \weight : \norm{\weight}_2 \leq 1}$.
      \STATE \textcolor{blue}{// Optimistic Policy Evaluation}
          \STATE $\mbf{v}^k = (\Lambda^{(j)})^{-1} \sum_{(s,a,s')\in\mathcal{D}^{(j)}} \phi(s,a)V^k(s')$
          \STATE $Q^{k+1} = \bs{\phim \mbf{w}^k + \gamma\phim \mbf{v}^k - b^{(j)} }^{(1 - \gamma)^{-1}}_{-(1 - \gamma)^{-1}}$
          \STATE $V^{k+1}(s) = \innerprod{\pi^{(j)}(a|s)}{Q^{k+1}(s,a)}$ (notice that $\pi^{(j)}=\pi^{k+1}$). 
      \ENDFOR
      \STATE \textcolor{blue}{// Policy Improvement Step}
      \STATE Compute average $Q$ value $\bar{Q}^{(j)}(s,a) = \frac{1}{\tau}\sum_{k\in T_j} Q^k(s,a)$.
      \STATE Update policy
      \begin{equation*}
      \pi^{(j+1)}(a|s) \propto \exp\br{-\eta \sum^j_{i=1} \bar{Q}^{(i)}(s,a)}
      \end{equation*}
    \ENDFOR
  \end{algorithmic}
\end{algorithm}
\begin{proof}
Consider the following decomposition
\begin{align}
    \sum^K_{k=1}\innerprod{\true}{d^{\expert} - d^{\pi^k}} &= \sum^K_{k=1}\innerprod{\mathbf{w}_{\mathrm{true}} - \mathbf{w}^k}{\widehat{\phim^\trans d^{\expert}} - \phim^\trans d^{\pi^k}} + \sum^K_{k=1}\innerprod{c^k}{d^{\expert} - d^{\pi^k}} \\&\phantom{=}+ \sum^K_{k=1} \innerprod{\mathbf{w}_{\mathrm{true}} - \mathbf{w}^k}{\widehat{\phim^\trans d^{\expert}} - \phim^\trans d^{\expert}} \label{eq:f}
\end{align}
For the first term we can use the following steps.
\begin{align*}\sum^K_{k=1}\innerprod{\mathbf{w}_{\mathrm{true}} - \mathbf{w}^k}{\widehat{\phim^\trans d^{\expert}} - \phim^\trans d^{\pi^k}} \leq \sum^K_{k=1}\innerprod{\mathbf{w}_{\mathrm{true}} - \mathbf{w}^k}{\widehat{\phim^\trans d^{\expert}} - \widehat{\phim^\trans d^{\pi^k}}} + \sum^K_{k=1}\innerprod{\mathbf{w}_{\mathrm{true}} - \mathbf{w}^k}{\widehat{\phim^\trans d^{\pi^k}} - \phim^\trans d^{\pi^k}}
\end{align*}
Now, using the regret bound for OMD \citep[Theorem 6.10]{orabona2023modern} we can bound the first term in the decomposition above as
\begin{align*}
  \sum^K_{k=1}\innerprod{\mathbf{w}_{\mathrm{true}} - \mathbf{w}^k}{\widehat{\phim^\trans d^{\expert}} - \widehat{\phim^\trans d^{\pi^k}}} &\leq  \frac{\max_{w\in\mathcal{W}}\norm{w - w^1}_2^2}{2 \alpha} + \frac{\alpha}{2} \sum^K_{k=1} \norm{\widehat{\phim^\trans d^{\expert}} - \widehat{\phim^\trans d^{\pi^k}}}_2^2 
  \\&\leq  \frac{\max_{c\in\mathcal{C}}\norm{c - c^1}_2^2}{2 \alpha} + \frac{\alpha}{2} \sum^K_{k=1} \norm{\widehat{\phim^\trans d^{\expert}} - \widehat{\phim^\trans d^{\pi^k}}}_1^2
  \\&\leq  \frac{1}{2 \alpha} + 2 \alpha K
\end{align*}

Then, for $\alpha = \frac{1}{2\sqrt{K}}$, then $\sum^K_{k=1}\innerprod{\mathbf{w}_{\mathrm{true}} - \mathbf{w}^k}{\widehat{\phim^\trans d^{\expert}} - \widehat{\phim^\trans d^{\pi^k}}}  \leq 2\sqrt{K}$.
For the estimation term,
\begin{align*} \sum^K_{k=1}\innerprod{\mathbf{w}_{\mathrm{true}} - \mathbf{w}^k}{\phim^\trans d^{\pi^k} - \widehat{\phim^\trans d^{\pi^k}}} &\leq \sum^K_{k=1} \norm{w - \mathbf{w}^k}_1 \norm{\phim^\trans d^{\pi^k} - \widehat{\phim^\trans d^{\pi^k}}}_{\infty}
\\ &\leq \sqrt{d}\sum^K_{k=1} \norm{w - \mathbf{w}^k}_2 \norm{\phim^\trans d^{\pi^k} - \widehat{\phim^\trans d^{\pi^k}}}_{\infty} \\ &\leq 2\sqrt{d}\sum^K_{k=1} \norm{\phim^\trans d^{\pi^k} - \widehat{\phim^\trans d^{\pi^k}}}_{\infty} 
\\ & \leq 2\sqrt{d}\sum^K_{k=1} \sqrt{\frac{2\log(2 d K /  \delta)} {\tau}}  = 2K \sqrt{\frac{2d\log(2 d K/  \delta)} {\tau}},
\end{align*}

where the last inequality holds with probability $1 - \delta$ thanks to Azuma-Hoeffding inequality.
Therefore, using \Cref{thm:regret_bound} to control the second term in \Cref{eq:f} and a union bound we obtain that with probability $1 - 3\delta$.
\begin{align*}
\sum^K_{k=1}\innerprod{\true}{d^{\expert} - d^{\pi^k}}  &\leq 2\sqrt{K} + 2K \sqrt{\frac{2d\log(2 d K /  \delta)} {\tau}} + \frac{\tau \log \abs{\aspace}}{\eta} + \frac{\tau + 1}{1-\gamma} + \frac{\eta K}{(1-\gamma)^2} \\&\phantom{=}+ 12\beta K \sqrt{\frac{d}{\tau}}\log\br{\frac{2K}{\tau\delta}} +\frac{\sqrt{2 \eta} K}{(1-\gamma)^2\tau} + + \sum^K_{k=1} \innerprod{\mathbf{w}_{\mathrm{true}} - \mathbf{w}^k}{\widehat{\phim^\trans d^{\expert}} - \phim^\trans d^{\expert}}
\end{align*}
and using that for the empirical expert an application of \Cref{lemma:expert_concentration} gives that with probability $1 - \delta$.
\begin{align*}
    \sum^K_{k=1} \innerprod{\mathbf{w}_{\mathrm{true}} - \mathbf{w}^k}{\widehat{\phim^\trans d^{\expert}} - \phim^\trans d^{\expert}} \leq 2 K \sqrt{d} \norm{\phim^\trans d^{\expert} - \widehat{\phim^\trans d^{\expert}}}_{\infty} \leq 2 K \sqrt{\frac{2 d \log(d/\delta)}{\tau_E}}
\end{align*}
Therefore, selecting $\tau_E \geq \frac{8 d \log(d/\delta)}{\epsilon_E^2}$ and using a last union bound gives that with probability $1 - 4\delta$
\begin{align*}
\innerprod{\true}{ d^{\expert} - \frac{1}{K}\sum^K_{k=1}d^{\pi^k}} &\leq \epsilon_E + \frac{2}{\sqrt{K}} + \sqrt{\frac{8d\log(2 d K /  \delta)} {\tau}} + \frac{\tau \log \abs{\aspace}}{\eta K} + \frac{\tau + 1}{(1-\gamma)K} + \frac{\eta }{(1-\gamma)^2} \\&\phantom{=}+ 12\beta \sqrt{\frac{d}{\tau}}\log\br{\frac{2K d}{\tau\delta}} +\frac{\sqrt{2 \eta} }{(1-\gamma)^2\tau}
\end{align*}
Using $\eta = \sqrt{\frac{\tau \log \abs{\aspace}(1-\gamma)^2}{K}}$,
\begin{align*}
\innerprod{\true}{ d^{\expert} - \frac{1}{K}\sum^K_{k=1}d^{\pi^k}} &\leq \epsilon_E + \frac{2}{\sqrt{K}} + \sqrt{\frac{8d\log(2 d K /  \delta)} {\tau}} + \frac{2}{(1-\gamma)}\sqrt{\frac{\tau \log \abs{\aspace}}{ K}} + \frac{\tau + 1}{(1-\gamma)K} \\&\phantom{=}+ 12\beta \sqrt{\frac{d}{\tau}}\log\br{\frac{2K}{\tau\delta}} +\frac{\sqrt{2 } }{(1-\gamma)^2\tau}\sqrt[4]{\frac{\tau \log \abs{\aspace}(1-\gamma)^2}{K}} 
\end{align*}
Neglecting lower order terms we obtain
\begin{align*}
\innerprod{\true}{ d^{\expert} - \frac{1}{K}\sum^K_{k=1}d^{\pi^k}} &\leq \epsilon_E + \mathcal{O}\br{\sqrt{\frac{8d\log(2 d K /  \delta)} {\tau}} + \frac{2}{(1-\gamma)}\sqrt{\frac{\tau \log \abs{\aspace}}{ K}} + 12\beta \sqrt{\frac{d}{\tau}}\log\br{\frac{2K}{\tau\delta}}} \\ & \leq \epsilon_E + \mathcal{O}\br{ \frac{2}{(1-\gamma)}\sqrt{\frac{\tau \log \abs{\aspace}}{ K}} + 15 \beta \sqrt{\frac{d}{\tau}}\log\br{\frac{2 d K}{\tau\delta}}}
\end{align*}
Therefore, choosing $\tau = \mathcal{O}\br{\frac{\beta(1 - \gamma)\sqrt{d K }\log(2dK/\delta)}{\sqrt{\log \abs{\aspace}}}}$, gives
\begin{align*}
\innerprod{\true}{ d^{\expert} - \frac{1}{K}\sum^K_{k=1}d^{\pi^k}} \leq \epsilon_E + \widetilde{\mathcal{O}}\br{\frac{\log^{1/4} \abs{\aspace}d^{1/4}\sqrt{\beta}}{\sqrt{1-\gamma}K^{1/4}}}
\end{align*}
Therefore, choosing $K \geq \widetilde{\mathcal{O}}\br{\frac{\log \abs{\aspace} d \beta^2\log^2(2dK/\delta)}{(1 - \gamma)^2 \epsilon^4}}$ which is attained by $K=\mathcal{O}\br{\frac{\log \abs{\aspace} d \beta^2}{(1 - \gamma)^2 \epsilon^4} \log^2(\frac{d \beta \log \abs{\aspace}}{\delta (1 - \gamma) \epsilon})}$ ensures $\innerprod{\true}{ d^{\expert} - \frac{1}{K}\sum^K_{k=1}d^{\pi^k}} \leq \epsilon + \epsilon_E$ with probability $1 - 4 \delta$.
\end{proof}
\section{Technical Lemmas}
\label{app:technical}
\begin{lemma}
    \label{lemma:conf_bounds}
    Assume that in \Cref{alg:finite_horizon}, we set $\beta = \mathcal{O}\br{d H \log (R dH \delta^{-1})} $ with $R = \tau^2 \sqrt{d} H$. Then, with probability $ 1 - \delta$ it holds that
    \begin{equation*}
        -2 b_h^k(s,a) \leq  Q_h^{k}(s,a) - \cost_h^k(s,a) - P V_{h+1}^k(s,a) \leq 0 \quad \forall s,a \in \sspace\times\aspace, k \in [K], h \in [H]
    \end{equation*}
\end{lemma}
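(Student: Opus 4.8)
The statement is the standard LSVI-UCB optimism/pessimism sandwich, specialized to the on-policy evaluation of \Cref{alg:finite_horizon}. I would first linearize the one-step look-ahead. Under \Cref{ass:EpLinMDP} we have $P_h=\phim M_h$, so for the (data-dependent) value $V^k_{h+1}$ the quantity $P_h V^k_{h+1}$ is exactly linear in the features: writing $\theta^k_h \triangleq M_h V^k_{h+1}\in\mathbb{R}^d$, one has $P_h V^k_{h+1}(s,a)=\phi(s,a)^\trans\theta^k_h$ for every $(s,a)$. Since the clipping in \Cref{alg:finite_horizon} forces $\norm{V^k_{h+1}}_\infty\le H-h\le H$ and $\norm{M_h}_{1,\infty}\le1$, a coordinatewise bound gives $\norm{\theta^k_h}_2\le H\sqrt d$. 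The pre-clip target is $\phim\mbf{v}^k_h-b^k_h-P_hV^k_{h+1}$, so everything reduces to controlling the ridge-regression error $\phi(s,a)^\trans(\mbf{v}^k_h-\theta^k_h)$ pointwise by $b^k_h(s,a)=\beta\norm{\phi(s,a)}_{(\Lambda^k_h)^{-1}}$.

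Next I would decompose this error in the usual way. Using $\Lambda^k_h=\sum_{i\in T_j}\phi_i\phi_i^\trans+I$ with $\phi_i\triangleq\phi(s^i_h,a^i_h)$ and the definition of $\mbf{v}^k_h$,
\begin{equation*}
\mbf{v}^k_h-\theta^k_h=(\Lambda^k_h)^{-1}\Big[\sum_{i\in T_j}\phi_i\big(V^k_{h+1}(s^i_{h+1})-\phi_i^\trans\theta^k_h\big)-\theta^k_h\Big],
\end{equation*}
where $\phi_i^\trans\theta^k_h=\mathbb{E}[V^k_{h+1}(s^i_{h+1})\mid s^i_h,a^i_h]$, so the bracketed sum is a martingale. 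By Cauchy--Schwarz, $\abs{\phi(s,a)^\trans(\mbf{v}^k_h-\theta^k_h)}\le\norm{\phi(s,a)}_{(\Lambda^k_h)^{-1}}\norm{\mbf{v}^k_h-\theta^k_h}_{\Lambda^k_h}$, and the triangle inequality splits $\norm{\mbf{v}^k_h-\theta^k_h}_{\Lambda^k_h}$ into a bias term $\norm{\theta^k_h}_{(\Lambda^k_h)^{-1}}\le\norm{\theta^k_h}_2\le H\sqrt d$ (since $\Lambda^k_h\succeq I$) plus the self-normalized martingale term $\big\|\sum_{i\in T_j}\phi_i(V^k_{h+1}(s^i_{h+1})-\phi_i^\trans\theta^k_h)\big\|_{(\Lambda^k_h)^{-1}}$.

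The crux, and the step I expect to be the main obstacle, is a \emph{uniform} high-probability bound on the self-normalized martingale term. Because $V^k_{h+1}$ is itself built from the same data (through $\mbf{v}^k_{h+1}$, $\Lambda^k_{h+1}$, and the batch policy $\pi^{(j)}$), one cannot apply a self-normalized inequality to a fixed summand; instead I would union-bound a self-normalized concentration inequality over a covering net of the value-function class obtained by evaluating, against the fixed batch policy $\pi^{(j)}$, the clipped linear-plus-bonus $Q$-functions indexed by the parameter pair $(\mbf{v},\Lambda)$. This is precisely where the on-policy data collection is used: the value is evaluated against the \emph{known, fixed} policy $\pi^{(j)}$, so one does not cover the simplex of stochastic policies and the log-covering number stays $\widetilde{\mathcal O}(d)$, with the relevant parameter ranges captured by $R=\tau^2\sqrt d H$; here I would invoke the covering bound of \cite{sherman2023}. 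Choosing $\beta=\mathcal O(dH\log(RdH\delta^{-1}))$ makes the sum of the bias and concentration terms at most $\beta$, yielding $\abs{\phim\mbf{v}^k_h-P_hV^k_{h+1}}\le b^k_h$ pointwise with probability $1-\delta$ (the union bound over $h\in[H]$, $k\in[K]$ being absorbed into the logarithm).

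Finally I would assemble the two-sided bound and check that clipping is harmless. On the event above, $\phim\mbf{v}^k_h-P_hV^k_{h+1}\in[-b^k_h,b^k_h]$, so the pre-clip quantity $\tilde Q^k_h\triangleq\cost^k_h+\phim\mbf{v}^k_h-b^k_h$ satisfies $\tilde Q^k_h-\cost^k_h-P_hV^k_{h+1}\in[-2b^k_h,0]$. Since $\cost^k_h\in[-1,1]$ and $\norm{V^k_{h+1}}_\infty\le H-h$, the comparison point $y\triangleq\cost^k_h+P_hV^k_{h+1}$ lies in $[-(H-h+1),H-h+1]$, the exact clipping window; because $\tilde Q^k_h\le y$ the upper clip never pushes $Q^k_h$ above $y$, while the lower clip can only raise $\tilde Q^k_h$ toward $y$, so both inequalities survive clipping and $Q^k_h-\cost^k_h-P_hV^k_{h+1}\in[-2b^k_h,0]$ for all $(s,a)$, $h$, $k$. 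This is the claim.
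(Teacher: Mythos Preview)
Your proposal is correct and follows essentially the same route as the paper: reduce to the pointwise ridge-regression bound $\abs{\phi(s,a)^\trans(\mbf{v}^k_h-\theta^k_h)}\le b^k_h$ via the bias-plus-self-normalized-martingale split, control the martingale term by a covering argument over the on-policy value class $\mathcal V^{\pi^{(j)}}$ (the paper packages this as \Cref{lemma:confidence_bound_finite_horizon}, proved through \Cref{thm:self_normalize} and \Cref{thm:covering}), and then check the sandwich survives clipping. One small correction: the log-covering number of the value class is $\widetilde{\mathcal O}(d^2)$ rather than $\widetilde{\mathcal O}(d)$ (the matrix parameter $\beta^2\Lambda^{-1}$ contributes a $d^2$ factor in \Cref{thm:covering}), but this does not affect the final $\beta=\widetilde{\mathcal O}(dH)$ since only $\sqrt{\log\mathcal N_\epsilon}$ enters the bound.
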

\begin{proof}
We first see that \Cref{lemma:confidence_bound_finite_horizon} holds for every $j$. Then, thanks to union bound we have that with probability $1 - \delta$ it holds that for any state action pairs we have that
    \begin{equation*}
        \abs{\phi(s,a)^T\mbf{v}_h^k - PV_{h+1}^k(s,a)} \leq \beta \norm{\phi(s,a)}_{(\Lambda_h^{(j)})^{-1}} = b_h^k(s,a)
    \end{equation*}
    From this fact the conclusion follows immediately if no truncation happens. That is , if $Q_h^{k} = \cost_h^k + \phim \mbf{v}_{h}^k - b_h^{k}$. Now, we consider the case where a lower truncation takes place, in this case, we have 
    \begin{equation*}
Q_h^{k} = - H + h - 1 \leq \cost_h^k + P V_{h+1}^k
    \end{equation*}
    If there a truncation from above it holds that
    \begin{equation*}
        Q_h^{k}  \leq \cost_h^k + \phim \mbf{v}_h^k - b_h^{k} \leq \cost_h^k  + P V_{h+1}^k + b_h^{k}  - b_h^{k} =  \cost_h^k  + P V_{h+1}^k
    \end{equation*}
    To show the lower bound in the lemma in case of a lower truncation, we have that
    \begin{equation*}
        Q_h^{k}  \geq \cost_h^k + \phim \mbf{v}_h^k - b_h^{k} \geq \cost_h^k  + P V_{h+1}^k - b_h^k - b_h^k =  \cost_h^k  +  P V_{h+1}^k - 2 b_h^k
    \end{equation*}
    finally, if the truncation from above is triggered, we have that
    \begin{equation*}
Q_h^{k}  = H - h +1 \geq \cost_h^k  + P V_{h+1}^k \geq \cost_h^k  + P V_{h+1}^k - 2 b_h^{k}
    \end{equation*}
\end{proof}
\begin{lemma}
    \label{lemma:infinite_optimism}
    For any $k\in[K]$, let the bonus $b^k$ be defined as in \Cref{alg:infinite_horizon} with $\beta = \mathcal{O}\br{d (1 - \gamma)^{-1} \log (R d (1 - \gamma)^{-1} \delta^{-1})} $ with $R = \tau^2 \sqrt{d} (1 - \gamma)^{-1}$. Then, with probability $1 - \delta$ it holds that
    \begin{equation*}
        -2 b^k(s,a) \leq Q^{k+1}(s,a) -\cost^k(s,a) - \gamma P V^k(s,a)  \leq 0 \quad \forall s,a \in \sspace\times\aspace, k \in [K]
    \end{equation*}
\end{lemma}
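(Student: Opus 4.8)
The plan is to mirror the structure of the finite-horizon argument in \Cref{lemma:conf_bounds}, adapting the two features that differ in the discounted setting: the scale of the value functions (now bounded by $(1-\gamma)^{-1}$ rather than $H$) and the presence of the discount factor $\gamma$ in the one-step term. The proof splits into a concentration step and a deterministic truncation step.

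First I would invoke the infinite-horizon confidence bound, i.e. \Cref{lemma:confidence_bound}, which holds for each fixed block index $j$. Since in \Cref{alg:infinite_horizon} the matrix $\Lambda^{(j)}$ and the bonus $b^{(j)}$ are recomputed only once per block, so that $b^k = b^{(j)}$ for every $k \in T_j$, a union bound over the $\floor{K/\tau}$ blocks yields, with probability $1 - \delta$,
\begin{equation*}
    \abs{\phi(s,a)^\trans \mbf{v}^k - P V^k(s,a)} \leq \beta \norm{\phi(s,a)}_{(\Lambda^{(j)})^{-1}} = b^k(s,a) \quad \forall (s,a) \in \sspace\times\aspace, \, k \in [K].
\end{equation*}
Here the choice $\beta = \mathcal{O}(d(1-\gamma)^{-1}\log(R d (1-\gamma)^{-1}\delta^{-1}))$ with $R = \tau^2\sqrt{d}(1-\gamma)^{-1}$ absorbs the self-normalized martingale tail together with the log-covering number of the class of value functions $V^k$; the factor $(1-\gamma)^{-1}$ replaces the finite-horizon factor $H$ because each $V^k$ is an average of $Q$-values clipped to $[-(1-\gamma)^{-1},(1-\gamma)^{-1}]$. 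Crucially this covering must be carried out over the on-policy value functions so that its cardinality stays independent of $\abs{\sspace}$ and $\abs{\aspace}$.

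Next I would record the deterministic boundedness facts needed for truncation. Because $\abs{V^k(s')} \leq (1-\gamma)^{-1}$ and $\abs{\cost^k} \leq 1$, we have $\abs{\cost^k(s,a) + \gamma P V^k(s,a)} \leq 1 + \gamma(1-\gamma)^{-1} = (1-\gamma)^{-1}$, so the target $\cost^k + \gamma P V^k$ always lies inside the clipping window $[-(1-\gamma)^{-1},(1-\gamma)^{-1}]$. Writing the untruncated estimate $\widetilde{Q}^{k+1} \triangleq \cost^k + \gamma \phim \mbf{v}^k - b^{(j)}$ and combining the concentration bound with $\gamma b^{(j)} \leq b^{(j)}$, I would derive
\begin{equation*}
    \cost^k + \gamma P V^k - 2 b^k \leq \widetilde{Q}^{k+1} \leq \cost^k + \gamma P V^k .
\end{equation*}

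Finally I would conclude by case analysis on the truncation $Q^{k+1} = \bs{\widetilde{Q}^{k+1}}^{(1-\gamma)^{-1}}_{-(1-\gamma)^{-1}}$, exactly as in \Cref{lemma:conf_bounds}. When no clipping occurs the displayed two-sided bound is the claim. If clipping from below is triggered then $Q^{k+1} = -(1-\gamma)^{-1} \leq \cost^k + \gamma P V^k$ gives the upper bound, while $Q^{k+1} \geq \widetilde{Q}^{k+1} \geq \cost^k + \gamma P V^k - 2 b^k$ gives the lower bound; the clip-from-above case (which in fact never fires, since $\widetilde{Q}^{k+1} \leq \cost^k + \gamma P V^k \leq (1-\gamma)^{-1}$) is handled symmetrically. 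I expect the concentration step to be the only real obstacle, and it is the part the discounted setting makes delicate: the backward recursion of the finite-horizon proof is unavailable, the bonus is not re-derived per stage, and one must apply the covering argument to on-policy value functions so that $\beta$ does not inherit a dependence on the size of the state-action space; the truncation bookkeeping is then routine.
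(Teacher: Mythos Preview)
Your proposal is correct and follows essentially the same approach as the paper: invoke \Cref{lemma:confidence_bound} with a union bound over blocks to get the per-step concentration $\abs{\phi(s,a)^\trans\mbf{v}^k - PV^k(s,a)} \leq b^k(s,a)$, then handle the three truncation cases deterministically. Your organization is slightly cleaner (you explicitly name the untruncated estimate $\widetilde{Q}^{k+1}$ and derive the two-sided sandwich for it first), and your additional observation that the clip-from-above case can never trigger on the good event, since $\widetilde{Q}^{k+1} \leq \cost^k + \gamma P V^k \leq (1-\gamma)^{-1}$, is correct and makes that branch vacuous; the paper still writes it out but the argument is otherwise identical.
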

\begin{proof}
    We first see that \Cref{lemma:confidence_bound} holds for every $j$. Then, thanks to union bound we have that with probability $1 - \delta$ it holds that for any state action pairs we have that
    \begin{equation*}
        \abs{\phi(s,a)^T\mbf{v}^k - PV^k(s,a)} \leq \beta \norm{\phi(s,a)}_{(\Lambda^{(j)})^{-1}} = b^k(s,a)
    \end{equation*}
    From this fact the conclusion follows immediately if no truncation happens. That is , if $Q^{k+1} = \cost^k + \phim \mbf{v}^k - b^{k}$. Now, we consider the case where a upper truncation takes place, in this case, we have 
    \begin{equation*}
Q^{k+1} = \frac{1}{1-\gamma} = 1 + \frac{\gamma}{1-\gamma} \geq \cost^k + \gamma P V^k - 2 b^k
    \end{equation*}
    While for the upper bound, we have that
    \begin{equation*}
        Q^{k+1}  \leq \cost^k + \gamma \phim \mbf{v}^k - b^{k} \leq \cost^k  + \gamma P V^k + b^{k} - b^{k} = \cost^k  + \gamma P V^k
    \end{equation*}
    Now, we handle the case of a lower truncation
    in this case 
    \begin{equation*}
        Q^{k+1}  \geq \cost^k + \gamma \phim \mbf{v}^k - b^{k} \geq \cost^k  + \gamma P V^k - b^k - b^k =  \cost^k  + \gamma P V^k - 2 b^k
    \end{equation*}
    for the upper bound we have that
    \begin{equation*}
Q^{k+1}  = -\frac{1}{1 - \gamma} = -1 - \frac{\gamma}{1 - \gamma} \leq \cost^k  + \gamma P V^k
    \end{equation*}
\end{proof}
\begin{lemma}
\label{lemma:confidence_bound_finite_horizon} Let $V_h^k$ be the sequence of value functions generated by \Cref{alg:finite_horizon}, fix a batch index $j$ and let $T_j$ denote the set of indices in the $j^{th}$ batch. Then, it holds that for $\beta = \widetilde{\mathcal{O}}\br{d H}$, the estimator $$\mbf{v}_h^k = (\Lambda^{(j)})^{-1} \sum_{(s,a,s')\in\mathcal{D}_h^{(j)}} \phi(s,a)V_{h+1}^k(s')$$ satisfies
\begin{equation*}
        \abs{\phi(s,a)^T\mbf{v}_h^k - PV_{h+1}^k(s,a)} \leq \beta \norm{\phi(s,a)}_{(\Lambda_h^{(j)})^{-1}} = b^k(s,a) \quad \forall k \in T_j, h \in [H], \forall s,a \in \sspace\times\aspace
    \end{equation*}
with probability $1 - \delta\tau/K$ .
\begin{proof}
    The proof is analogous to the proof of \Cref{lemma:confidence_bound} but invoking \Cref{thm:self_normalize} with $B=H$ and applying a further union bound over the set $[H]$. Thus, the proof is skipped for brevity.
\end{proof}
\end{lemma}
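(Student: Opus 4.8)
The plan is to follow the standard self-normalized concentration argument for ridge-regression estimators in linear MDPs, adapting the proof of \Cref{lemma:confidence_bound} to the finite-horizon, per-stage setting. First I would use the linear MDP factorization $P_h = \phim M_h$ from \Cref{ass:EpLinMDP} to recognize the target as a linear function of the features: for every $(s,a)$,
$$ P_h V_{h+1}^k(s,a) = \phi(s,a)^\trans \theta_h^k, \qquad \theta_h^k \triangleq M_h V_{h+1}^k \in \R^d. $$
This reduces the claim to showing that the ridge estimate $\mbf{v}_h^k$ concentrates around $\theta_h^k$ in the $\Lambda_h^{(j)}$-weighted norm, which is exactly the quantity that the bonus $b^k(s,a)=\beta\norm{\phi(s,a)}_{(\Lambda_h^{(j)})^{-1}}$ is designed to dominate.

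Next I would decompose the estimation error. Writing $\Lambda \triangleq \Lambda_h^{(j)} = \sum_i \phi_i\phi_i^\trans + I$ (sum over $\mathcal{D}_h^{(j)}$) and introducing the noise $\xi_i \triangleq V_{h+1}^k(s_i') - \phi_i^\trans\theta_h^k$, the fact that $\E[V_{h+1}^k(s_i')\mid s_i,a_i]=P_hV_{h+1}^k(s_i,a_i)=\phi_i^\trans\theta_h^k$ gives the clean identity
$$ \mbf{v}_h^k - \theta_h^k = \Lambda^{-1}\textstyle\sum_i \phi_i\xi_i - \Lambda^{-1}\theta_h^k. $$
Applying Cauchy--Schwarz in the $\Lambda^{-1}$ inner product pulls out a factor $\norm{\phi(s,a)}_{(\Lambda)^{-1}}$ and leaves two scalar quantities to control. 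The regularization/bias term is handled deterministically: $\norm{\theta_h^k}_{\Lambda^{-1}}\leq\norm{\theta_h^k}_2\leq H\sqrt{d}$, using $\Lambda\succeq I$ together with the normalization $\norm{M_h}_{1,\infty}\leq 1$ and the truncation $\abs{V_{h+1}^k}\leq H$.

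The stochastic term $\norm{\sum_i\phi_i\xi_i}_{\Lambda^{-1}}$ is the crux, and this is where I would invoke \Cref{thm:self_normalize} with range parameter $B=H$. The main obstacle is that $V_{h+1}^k$ is itself computed from $\mathcal{D}_h^{(j)}$, so $\xi_i$ is \emph{not} a martingale difference sequence in a filtration where $V_{h+1}^k$ is predictable, and a naive self-normalized bound would be invalid. I would resolve this exactly as in \Cref{lemma:confidence_bound}: bound the self-normalized deviation uniformly over an $\epsilon$-net of the value-function class, whose log-covering number the on-policy covering argument of \cite{sherman2023} controls by a polynomial in $d$ that is independent of $\abs{\sspace}$ and $\abs{\aspace}$. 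This uniform bound contributes the $\widetilde{\mathcal{O}}(\sqrt{d})$ factor (times the $\log$ terms in $R$), so that combining it with the $H\sqrt{d}$ bias term and setting $\beta=\widetilde{\mathcal{O}}(dH)$ makes $\beta\norm{\phi(s,a)}_{(\Lambda)^{-1}}$ dominate $\abs{\phi(s,a)^\trans\mbf{v}_h^k - P_hV_{h+1}^k(s,a)}$ simultaneously for all $(s,a)$ and all $k\in T_j$ (the latter two quantifiers being absorbed into the uniform covering bound rather than an explicit union bound). Finally I would take a union bound over the $H$ stages, which is the only structural difference from the infinite-horizon lemma; allocating failure budget $\delta\tau/(KH)$ per stage yields the claimed per-batch probability $1-\delta\tau/K$, which in turn is consistent with the outer union over the $\floor{K/\tau}$ batches in \Cref{lemma:conf_bounds} totalling $\delta$.
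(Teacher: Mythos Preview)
Your proposal is correct and follows essentially the same route as the paper, which defers to the proof of \Cref{lemma:confidence_bound}: linear-MDP factorization, the bias/noise split via Cauchy--Schwarz (Holder in the paper), a deterministic bound on the bias term, \Cref{thm:self_normalize} with $B=H$ together with the on-policy covering argument (\Cref{thm:covering}) for the stochastic term, and finally a union bound over $h\in[H]$. One small slip worth noting: $V_{h+1}^k$ is built from $\mathcal{D}_{h'}^{(j)}$ for $h'>h$ rather than from $\mathcal{D}_h^{(j)}$ itself, but the dependency you flag (through shared trajectories) and its resolution via the fixed-$\pi^{(j)}$ covering are exactly what the paper uses.
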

\begin{lemma}
\label{lemma:confidence_bound} Let $V^k$ be the sequence of value functions generated by \Cref{alg:infinite_horizon}, fix a batch index $j$ and let $T_j$ denote the set of indices in the $j^{th}$ batch. Then, it holds that for $\beta = \widetilde{\mathcal{O}}\br{\frac{d}{1 - \gamma}}$, the estimator $$\mbf{v}^k = (\Lambda^{(j)})^{-1} \sum_{(s,a,s')\in\mathcal{D}^{(j)}} \phi(s,a)V^k(s')$$ satisfies
\begin{equation*}
        \abs{\phi(s,a)^T\mbf{v}^k - PV^k(s,a)} \leq \beta \norm{\phi(s,a)}_{(\Lambda^{(j)})^{-1}} = b^k(s,a) \quad \forall k \in T_j, \forall s,a \in \sspace\times\aspace
    \end{equation*}
with probability $1 - \delta\tau/K$ .
\end{lemma}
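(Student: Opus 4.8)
The plan is to reduce the claim to a self-normalized concentration bound for the ridge-regression estimator $\mbf{v}^k$, the one nonstandard ingredient being a covering argument that handles the data-dependence of $V^k$ while keeping the confidence width independent of $\abs{\sspace}$ and $\abs{\aspace}$. First I would exploit the Linear MDP structure (\Cref{ass:LinMDP}): since $P = \phim M$, for any value function $PV^k(s,a) = \phi(s,a)^\trans \mbf{w}^k$ with $\mbf{w}^k \triangleq M V^k$, so the target is exactly linear in $\phi(s,a)$ and the quantity to control is $\abs{\phi(s,a)^\trans(\mbf{v}^k - \mbf{w}^k)}$. Moreover, using $\norm{M}_{1,\infty}\leq 1$ row-wise together with the truncation $\norm{V^k}_\infty \leq (1-\gamma)^{-1}$, one gets $\norm{\mbf{w}^k}_2 \leq \sqrt{d}\,\norm{V^k}_\infty \leq \sqrt{d}/(1-\gamma)$.

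Next I would expand the ridge estimator. Writing $\Lambda^{(j)} = \sum_{(s,a)\in\mathcal{D}^{(j)}} \phi(s,a)\phi(s,a)^\trans + I$ and adding and subtracting $PV^k(s,a)$ inside the sum defining $\mbf{v}^k$, the error splits into a zero-mean noise term and a regularization term:
\[
\mbf{v}^k - \mbf{w}^k = (\Lambda^{(j)})^{-1}\sum_{(s,a,s')\in\mathcal{D}^{(j)}}\phi(s,a)\,\xi(s,a,s') - (\Lambda^{(j)})^{-1}\mbf{w}^k, \qquad \xi(s,a,s') \triangleq V^k(s') - PV^k(s,a).
\]
Applying Cauchy--Schwarz in the $(\Lambda^{(j)})^{-1}$ norm yields
\[
\abs{\phi(s,a)^\trans(\mbf{v}^k - \mbf{w}^k)} \leq \norm{\phi(s,a)}_{(\Lambda^{(j)})^{-1}}\br{\norm{\textstyle\sum_{(s,a,s')\in\mathcal{D}^{(j)}}\phi(s,a)\,\xi(s,a,s')}_{(\Lambda^{(j)})^{-1}} + \norm{\mbf{w}^k}_{(\Lambda^{(j)})^{-1}}}.
\]
Since $\Lambda^{(j)} \succeq I$ the regularization term is at most $\norm{\mbf{w}^k}_2 \leq \sqrt{d}/(1-\gamma)$, which is of lower order than the target width $\widetilde{\mathcal{O}}(d/(1-\gamma))$.

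The main work is the noise term. Because the $V^k$ for $k\in T_j$ are computed from the very dataset $\mathcal{D}^{(j)}$ appearing in the sum, the increments $\xi$ are not independent of the regressors and a naive martingale bound fails. I would therefore invoke the self-normalized inequality \Cref{thm:self_normalize} uniformly over an $\epsilon$-net of the value-function class, and here lies the crux: within the batch the policy $\pi^{(j)}$ is frozen and the data is on-policy, so by the covering argument of \cite{sherman2023} the class $\bc{V^k}$ takes the parametric form $V(s) = \innerprod{\pi^{(j)}(\cdot|s)}{\bs{\gamma\phim\mbf{v}+\phim\mbf{w}-b}(s,\cdot)}$ with bounded parameters, whose log-covering number scales as $\widetilde{\mathcal{O}}(d^2)$ at covering radius $R = \tau^2\sqrt{d}(1-\gamma)^{-1}$ and \emph{crucially does not depend on} $\abs{\sspace}$ or $\abs{\aspace}$ --- this is exactly what the on-policy data buys, and precisely why off-policy data would reintroduce the state-action dependence. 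Since $\abs{\xi}\leq 2\norm{V^k}_\infty \leq 2(1-\gamma)^{-1}$, the self-normalized bound (with $\log(1/\delta)$ replaced by $\log(N_\epsilon/\delta)$) contributes $\widetilde{\mathcal{O}}\br{(1-\gamma)^{-1}\sqrt{d^2\log(RdK/\delta)}} = \widetilde{\mathcal{O}}(d/(1-\gamma))$, so choosing $\beta = \widetilde{\mathcal{O}}(d/(1-\gamma))$ absorbs both terms, with the net union bound and the $\tau$ indices in the batch yielding the failure probability $\delta\tau/K$. The main obstacle is exactly this covering step: verifying that the on-policy structure collapses the effective complexity of the value-function class to $\widetilde{\mathcal{O}}(d^2)$ uniformly over all $k\in T_j$, independent of the sizes of the state and action spaces.
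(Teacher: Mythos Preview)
Your proposal is correct and follows essentially the same route as the paper: the same ridge-regression error decomposition into a regularization term bounded via the Linear MDP assumption and a self-normalized noise term handled by \Cref{thm:self_normalize} together with the covering bound of \Cref{thm:covering}, where the key observation that the policy $\pi^{(j)}$ is frozen within the batch keeps the log-covering number of the value class at $\widetilde{\mathcal{O}}(d^2)$ independently of $\abs{\sspace},\abs{\aspace}$. The only cosmetic difference is that the paper bounds the bias term directly as $\norm{MV^k}_{(\Lambda^{(j)})^{-1}}\leq (1-\gamma)^{-1}$ rather than your $\sqrt{d}/(1-\gamma)$, and the failure probability $\delta\tau/K$ is simply the per-batch budget (so that a union bound over the $K/\tau$ batches in \Cref{lemma:infinite_optimism} gives $\delta$), not the result of a union over the $\tau$ indices within the batch---the uniform covering already handles all $k\in T_j$ simultaneously.
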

\begin{proof}
    With standard manipulation one can prove that
    \begin{equation*}
        PV^k = \phim M V^k = \phim (\Lambda^{(j)})^{-1} M V^k + \phim  (\Lambda^{(j)})^{-1} \sum_{s,a,s' \in \mathcal{D}^{(j)}} \phi(s,a) PV^k(s,a)
    \end{equation*}
    and by definition
    \begin{equation*}
        \phim \mbf{v}^k = \phim  (\Lambda^{(j)})^{-1} \sum_{s,a,s' \in \mathcal{D}^{(j)}} \phi(s,a) V^k(s')
    \end{equation*}
    Therefore,
    \begin{equation*}
        PV^k - \phim \mbf{v}^k = \phim (\Lambda^{(j)})^{-1} M V^k + \phim  (\Lambda^{(j)})^{-1} \sum_{s,a,s' \in \mathcal{D}^{(j)}} \phi(s,a) (PV^k(s,a) - V^k(s'))
    \end{equation*}
    Then, for any state action pair $(s,a)$, we have
     \begin{equation*}
       \abs{ PV^k(s,a) - \phim \mbf{v}^k (s,a)} = \phi(s,a)^T (\Lambda^{(j)})^{-1} M V^k + \phi(s,a)^T  (\Lambda^{(j)})^{-1} \sum_{s,a,s' \in \mathcal{D}^{(j)}} \phi(s,a) (PV^k(s,a) - V^k(s'))
    \end{equation*}
    by applying Holder's inequality,
    \begin{align*}
       \abs{ PV^k(s,a) - \phim \mbf{v}^k (s,a)} &\leq \norm{\phi(s,a)}_{(\Lambda^{(j)})^{-1}} \norm{(\Lambda^{(j)})^{-1} M V^k}_{(\Lambda^{(j)})}\\&\phantom{=}+ \norm{\phi(s,a)}_{(\Lambda^{(j)})^{-1}}  \norm{(\Lambda^{(j)})^{-1} \sum_{s,a,s' \in \mathcal{D}^{(j)}} \phi(s,a) (PV^k(s,a) - V^k(s'))}_{(\Lambda^{(j)})}
       \\&= \norm{\phi(s,a)}_{(\Lambda^{(j)})^{-1}} \norm{ M V^k}_{(\Lambda^{(j)})^{-1}}\\&\phantom{=}+ \norm{\phi(s,a)}_{(\Lambda^{(j)})^{-1}}  \norm{\sum_{s,a,s' \in \mathcal{D}^{(j)}} \phi(s,a) (PV^k(s,a) - V^k(s'))}_{(\Lambda^{(j)})^{-1}}
    \end{align*}
    where in the equality we used that for a symmetric matrix $A$ we have that $\norm{A x}_{A^{-1}} = \norm{x}_A$. Then, we can use that $\norm{M V^k}_{(\Lambda^{(j)})^{-1}}\leq \frac{1}{1-\gamma}$ to obtain
    \begin{equation*}
\abs{ PV^k(s,a) - \phim \mbf{v}^k (s,a)} \leq \norm{\phi(s,a)}_{(\Lambda^{(j)})^{-1}}  \br{\frac{1}{1-\gamma} + \norm{\sum_{s,a,s' \in \mathcal{D}^{(j)}} \phi(s,a) (PV^k(s,a) - V^k(s'))}_{(\Lambda^{(j)})^{-1}}}
    \end{equation*}
    To handle the second term in brackets we use that $V^k \in \mathcal{V}^{\pi^{(j)}}$ defined as
    $$\mathcal{V}^{\pi^{(j)}} = \bc{ \innerprod{\pi^{(j)}(\cdot|s)}{Q(s,\cdot)} | Q(s,a) \in \mathcal{Q}(\beta, \Lambda, \weight, \mbf{v})}
    $$ where $\mathcal{Q}(\beta, \Lambda, \weight, \mbf{v})$ is defined as in \Cref{thm:covering}.
    Denote as $\mathcal{N}^j_{\epsilon}$ the $\norm{\cdot}_{\infty}$-covering number of the class $\mathcal{V}^{\pi^{(j)}}$ and notice that $\mathcal{V}^{\pi^{(j)}}$ and $\mathcal{D}^{(j)}$  are conditionally independent given $\pi^{(j)}$. 
    
    Under this setting we can use \Cref{thm:self_normalize} with $B=(1 - \gamma)^{-1}$ to obtain that with probability $1 - \delta \tau / K$
    \begin{align*}
\abs{ PV^k(s,a) - \phim \mbf{v}^k (s,a)} &\leq \norm{\phi(s,a)}_{(\Lambda^{(j)})^{-1}}  \br{\frac{1}{1-\gamma} + \sqrt{\frac{2d}{(1 - \gamma)^2}\log\br{\frac{K (\tau+1)}{\delta \tau}} + \frac{4}{(1-\gamma)^2}\log \mathcal{N}^j_{\epsilon} + 8 \tau^2\epsilon^2}} \\ &\leq \frac{\norm{\phi(s,a)}_{(\Lambda^{(j)})^{-1}}}{1-\gamma}\br{1 + \sqrt{2d\log\br{\frac{K(\tau+1)}{\delta \tau}}} + 2 \sqrt{\log \mathcal{N}^j_{\epsilon}}+ 2\sqrt{2} \tau\epsilon}.
    \end{align*}
Then, we can conclude that the covering number is upper bounded by \Cref{thm:covering} with $L = \frac{\tau}{1 - \gamma}$ since
\begin{equation*}
\norm{\mbf{v}^k} \leq \frac{1}{(1 - \gamma)} \norm{(\Lambda^{(j)})^{(-1)}} \norm{\sum_{s,a,s' \in \mathcal{D}^{(j)}} \phi(s,a)} \leq \frac{\tau}{(1 - \gamma)}
\end{equation*}
we obtain that
\begin{equation*}
    \log \mathcal{N}^j_{\epsilon} \leq d \log\br{1 + \frac{4}{\epsilon}\sqrt{1 + \frac{\gamma^2 \tau^2}{(1 - \gamma)^2}}} + d^2 \log (1 + 8\sqrt{d} \beta^2 \epsilon^{-2})
\end{equation*}
Therefore,
\begin{align*}
    \abs{ PV^k(s,a) - \phim \mbf{v}^k (s,a)}  &\leq \frac{\norm{\phi(s,a)}_{(\Lambda^{(j)})^{-1}}}{1-\gamma}\bigg[1 + \sqrt{2d\log\br{\frac{K(\tau+1)}{\delta \tau}}} + \sqrt{2d \log\br{1 + \frac{4}{\epsilon}\sqrt{1 + \frac{\gamma^2 \tau^2}{(1 - \gamma)^2}}}} \\&\phantom{=}+ \sqrt{2}d \sqrt{\log (1 + 8\sqrt{d} \beta^2 \epsilon^{-2}}) + 2\sqrt{2} \tau\epsilon\bigg]
\end{align*}
At this point, using $\epsilon = \tau^{-1}$, we obtain
\begin{align*}
    \abs{ PV^k(s,a) - \phim \mbf{v}^k (s,a)}  &\leq \frac{\norm{\phi(s,a)}_{(\Lambda^{(j)})^{-1}}}{1-\gamma}\bigg[1 + \sqrt{2d\log\br{\frac{K(\tau+1)}{\delta\tau}}} + \sqrt{2d \log\br{1 + 4\tau\sqrt{1 + \frac{\gamma^2 \tau^2}{(1 - \gamma)^2}}}} \\&\phantom{=}+ \sqrt{2}d \sqrt{\log (1 + 8\sqrt{d} \beta^2 \tau^2}) + 2\sqrt{2}\bigg]
\end{align*}
To simplify the above expression, we notice that there exists a constant $c$ such that 

\begin{align*}
    \abs{ PV^k(s,a) - \phim \mbf{v}^k (s,a)}  &\leq c \frac{\norm{\phi(s,a)}_{(\Lambda^{(j)})^{-1}}}{1-\gamma} d \sqrt{ \log \br{\frac{\tau K \beta \sqrt{d}}{\delta (1 - \gamma)}}}
\end{align*}
Now, using \citep[Lemma D.2]{sherman2023}, we have that $\beta = \mathcal{O}(d (1 - \gamma)^{-1} \log \bs{ R d (1 - \gamma)^{-1}})$ with $R = \tau K \sqrt{d} \delta^{-1} (1 - \gamma)^{-1}$ ensures
\begin{align*}
\beta \geq c \frac{d}{(1 - \gamma)} \log \br{\frac{\tau^2 \beta \sqrt{d}}{\delta (1 - \gamma)}},
\end{align*}
and therefore
\begin{align*}
    \abs{ PV^k(s,a) - \phim \mbf{v}^k (s,a)}  &\leq \beta \norm{\phi(s,a)}_{(\Lambda^{(j)})^{-1}}.
\end{align*}
\end{proof}
\begin{theorem}\label{thm:self_normalize}
For a fixed policy $\pi$ consider a function class $\mathcal{V}$ and a state action pair dataset $\mathcal{D}$ collected with a fixed policy $\pi$ such that $\mathcal{D}$ and $\mathcal{V}$ are conditionally independent given $\pi$ . Then, for any $f \in \mathcal{V}$ such that $\norm{f}_{\infty} \leq B$, it holds with probability $1 - \delta\tau/K$
\begin{equation*}
\norm{\sum_{s,a,s' \in \mathcal{D}} \phi(s,a) (P f(s,a) - f(s'))}^2_{(\Lambda^{(j)})^{-1}} \leq 2dB^2\log\br{\frac{K (\tau+1)}{\delta \tau}} + 4 B^2\log \mathcal{N}_{\epsilon} + 8 \tau^2\epsilon^2
\end{equation*}
where $N_{\epsilon}$ is the the $(\epsilon,\norm{\cdot}_{\infty})$- covering number of the class $\mathcal{V}^{\pi}$.
\end{theorem}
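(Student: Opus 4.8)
The plan is to reduce the bound to the standard self-normalized concentration inequality for a single, data-independent function, and then pay a covering cost to handle the fact that the function $f$ of interest is itself a function of the data.

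First I would fix a function $f$ that does not depend on the dataset $\mathcal{D}$. Writing $\mathcal{D} = \bc{(s_i, a_i, s_i')}_{i=1}^{\tau}$ and setting $\eta_i \triangleq f(s_i') - P f(s_i,a_i)$, the sequence $\bc{\eta_i}$ is a martingale difference sequence with respect to the filtration generated by the trajectory collected under the fixed policy $\pi$: since $\mathbb{E}\bs{f(s_i') \mid s_i,a_i} = P f(s_i,a_i)$ each $\eta_i$ is conditionally mean zero, and $\norm{f}_\infty \leq B$ forces $\eta_i \in [-2B, 2B]$ so that it is conditionally sub-Gaussian with parameter of order $B$. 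Applying the standard self-normalized bound for the vector-valued martingale $\sum_i \phi(s_i,a_i)\eta_i$ with regularizer $\Lambda_0 = I$ then gives, for confidence $1 - \delta'$, a bound of the form $\mathcal{O}\br{B^2 \log\br{\det(\Lambda^{(j)})^{1/2}/\delta'}}$. The determinant factor is controlled by the determinant--trace inequality: with at most $\tau$ feature vectors, each satisfying $\norm{\phi}_2 \leq 1$, one has $\det(\Lambda^{(j)}) \leq (1 + \tau/d)^{d}$, so $\log \det(\Lambda^{(j)})^{1/2} \leq \tfrac{d}{2}\log(1+\tau)$, which is the source of the $d\log(\tau+1)$ contribution.

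Next I would drop the assumption that $f$ is fixed. Because $\mathcal{D}$ and $\mathcal{V}^{\pi}$ are conditionally independent given $\pi$, an $\epsilon$-net $\mathcal{N}_\epsilon$ of $\mathcal{V}^{\pi}$ in $\norm{\cdot}_\infty$ consists of functions that are, conditionally on $\pi$, independent of the data, so the fixed-function bound applies verbatim to each net element. A union bound over the $\mathcal{N}_\epsilon$ elements replaces $\log(1/\delta')$ by $\log \mathcal{N}_\epsilon + \log(1/\delta')$; taking $\delta' = \delta\tau/K$ produces both the $\log \mathcal{N}_\epsilon$ term and the $\log\br{K(\tau+1)/(\delta\tau)}$ term. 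For the actual $f \in \mathcal{V}^{\pi}$ I would select its nearest net element $f_0$, with $\norm{f - f_0}_\infty \leq \epsilon$, and split using $\norm{x+y}^2_{(\Lambda^{(j)})^{-1}} \leq 2\norm{x}^2_{(\Lambda^{(j)})^{-1}} + 2\norm{y}^2_{(\Lambda^{(j)})^{-1}}$. In the discrepancy term $\sum_i \phi(s_i,a_i)\bs{(P f - f(s_i')) - (P f_0 - f_0(s_i'))}$ each scalar coefficient is bounded in absolute value by $2\epsilon$, and since $\Lambda^{(j)} \succeq I$ gives $\norm{\phi(s_i,a_i)}_{(\Lambda^{(j)})^{-1}} \leq \norm{\phi(s_i,a_i)}_2 \leq 1$, the triangle inequality bounds this norm by $2\tau\epsilon$, hence by $8\tau^2\epsilon^2$ after squaring and including the splitting factor; the same factor applied to the net term yields the stated $4B^2 \log \mathcal{N}_\epsilon$.

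The main obstacle is the careful bookkeeping around the conditional-independence structure rather than any single hard estimate. The self-normalized inequality is legitimate only if $\eta_i$ is a genuine martingale difference adapted to the data-generating filtration, while the covering net must be frozen relative to that same filtration when the union bound is taken; both requirements hold precisely because $\mathcal{D}$ is generated by a single fixed policy $\pi$ and $\mathcal{V}^{\pi}$ is conditionally independent of $\mathcal{D}$ given $\pi$, which is exactly the hypothesis of the theorem. The remainder is the routine tracking of constants, so that the sub-Gaussian parameter, the determinant bound, and the two factor-$2$ splittings combine into the precise numerical factors $2dB^2$, $4B^2$, and $8\tau^2\epsilon^2$ stated in the claim.
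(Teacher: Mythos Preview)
Your proposal is correct and follows essentially the same route as the paper: both arguments split $f$ into a nearby net element plus a discrepancy, bound the discrepancy deterministically by $8\tau^2\epsilon^2$ using $\Lambda^{(j)}\succeq I$ and $\norm{\phi}\leq 1$, and handle the net term by invoking the self-normalized martingale bound (the paper cites \cite[Theorem~D.3]{jin2019provably}) together with a union bound over the covering set, relying on the conditional independence of $\mathcal{V}^{\pi}$ and $\mathcal{D}$ given $\pi$ to justify that net elements are data-independent. The only cosmetic difference is that the paper presents the covering step first and then applies the self-normalized bound directly to net elements, whereas you first isolate the fixed-function case and then union-bound; the content is identical.
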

\begin{proof}
    Consider the decomposition in \citep[Lemma D.4]{jin2019provably}. In particular, let $\mathcal{C}_{\epsilon}(\mathcal{V})$ denote the $(\epsilon, \norm{\cdot}_{\infty})$-covering set of $\mathcal{V}$ and pick $\tilde{f} \in \mathcal{C}_{\epsilon}(\mathcal{V})$ such that $\norm{f - \tilde{f}}_{\infty} \leq \epsilon$. The existence of $\tilde{f}$ is guaranteed by the properties of covering sets. Then, we have
    \begin{align*}
        \norm{\sum_{s,a,s' \in \mathcal{D}} \phi(s,a) (P f(s,a) - f(s'))}^2_{(\Lambda^{(j)})^{-1}} &\leq 2\norm{\sum_{s,a,s' \in \mathcal{D}} \phi(s,a) (P \tilde{f}(s,a) - \tilde{f}(s'))}^2_{(\Lambda^{(j)})^{-1}}\\&\phantom{=} + 2\norm{\sum_{s,a,s' \in \mathcal{D}} \phi(s,a) (P (f - \tilde{f})(s,a) - (f -\tilde{f})(s'))}^2_{(\Lambda^{(j)})^{-1}} 
    \end{align*}
    The second term can be bounded by $8 \tau^2 \epsilon^2$ as in \cite{jin2019provably} so now we focus on the first term via a uniform bound over the set $\mathcal{C}_{\epsilon}(\mathcal{V})$.
    We need to index the dataset $\mathcal{D}$, i.e. $\mathcal{D}=\bc{(s^{\ell},a^\ell)}_{\ell=1}^{\abs{\mathcal{D}}}$ and consider the filtration $\mathcal{F}_{j} = \bc{(s^{\ell},a^\ell)}_{\ell=1}^{j}$. Since the features mapping is deterministic, $\phi(s^\ell,a^\ell)$ is $\mathcal{F}_{\ell}$-measurable.
    Then, notice that by assumption $\mathcal{D}$ and $\mathcal{V}^{\pi}$ are conditionally independent given $\pi$. Therefore, we also have that $\mathcal{D}$ and $\mathcal{C}_{\epsilon}(\mathcal{V}^{\pi})$ are conditionally independent given $\pi$. So for any function $\bar{f}\in\mathcal{C}_{\epsilon}(\mathcal{V}^{\pi})$ we have that $\mathbb{E}[\bar{f}(s^{\ell+1}) | \mathcal{F}_\ell] = P \bar{f} (s^\ell, a^\ell)$. Finally, from the assumption $\norm{\bar{f}}_{\infty} \leq B$ we have that $\bar{f}$ is $B^2$-subgaussian. Therefore, all the conditions of \citep[Theorem D.3]{jin2019provably} are met and via a union bound over the covering set allows to conclude that with probability $1 - \delta\tau/K$
    \begin{equation*}
        2\norm{\sum_{s,a,s' \in \mathcal{D}} \phi(s,a) (P \bar{f}(s,a) - \bar{f}(s'))}^2_{(\Lambda^{(j)})^{-1}} \leq 2dB^2\log\br{\frac{K (\tau+1)}{\delta \tau}} + 4 B^2\log \mathcal{N}_{\epsilon} \quad \forall \bar{f} \in \mathcal{C}_{\epsilon}(\mathcal{V}),
    \end{equation*}
    and since $\tilde{f}\in \mathcal{C}_{\epsilon}(\mathcal{V})$,
    \begin{equation*}
        2\norm{\sum_{s,a,s' \in \mathcal{D}} \phi(s,a) (P \tilde{f}(s,a) - \tilde{f}(s'))}^2_{(\Lambda^{(j)})^{-1}} \leq 2dB^2\log\br{\frac{K (\tau+1)}{\delta \tau}} + 4 B^2\log \mathcal{N}_{\epsilon}.
    \end{equation*}
\end{proof}
\begin{theorem}
\label{thm:covering}
Let us consider the function class $\mathcal{Q}$ defined as follows
\begin{align*}
\mathcal{Q}(\beta, \Lambda, \weight, \mbf{v}) = \bc{Q(s,a; \beta, \Lambda, \weight, \mbf{v}) | \beta \in \mathbb{R}, \lambda_{\min}(\Lambda) \geq 1, \norm{\weight} \leq 1, \norm{\mbf{v}}\leq L} \\ \text{where} \quad
    Q(s,a; \beta, \Lambda, \weight, \mbf{v}) = \bs{\phi(s,a)^\trans(\weight + \gamma \mbf{v}) + \beta \norm{\phi(s,a)}_{\Lambda^{-1}}}_0^{(1 - \gamma)^{-1}}
\end{align*}
and the classes 
\begin{equation*}
\mathcal{V}^{\pi} = \bc{ \innerprod{\pi(\cdot|s)}{Q(s,\cdot)} | Q(s,a) \in \mathcal{Q}(\beta, \Lambda, \weight, \mbf{v})}
\end{equation*}
for any $\pi : \sspace \rightarrow \Delta_{\aspace}$. 

Then, it holds that for any $\pi: \sspace \rightarrow \Delta_{\aspace}$
$$\mathcal{N}_\epsilon(\mathcal{V}^{\pi}) \leq \mathcal{N}_\epsilon(\mathcal{Q}) = (1 + 4\sqrt{1 + \gamma^2 L^2}/\epsilon )^d (1 + 8\sqrt{d} \beta^2 \epsilon^{-2})^{d^2}$$ 
\end{theorem}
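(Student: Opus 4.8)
The plan is to follow the standard covering argument for linear MDPs \cite{jin2019provably}, adapted to the present truncation levels, in two stages: first reduce the cover of $\mathcal{V}^\pi$ to that of $\mathcal{Q}$, then cover $\mathcal{Q}$ through its parameters.

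\textbf{Stage 1 (reduction to $\mathcal{Q}$).} For fixed $\pi$ and any $Q,Q'\in\mathcal{Q}$ with induced $V(s)=\innerprod{\pi(\cdot|s)}{Q(s,\cdot)}$ and $V'(s)=\innerprod{\pi(\cdot|s)}{Q'(s,\cdot)}$, the map $Q\mapsto V$ is non-expansive in $\norm{\cdot}_\infty$ because $\pi(\cdot|s)$ is a probability vector: $\abs{V(s)-V'(s)}\le\sum_a\pi(a|s)\abs{Q(s,a)-Q'(s,a)}\le\norm{Q-Q'}_\infty$. Hence any $\epsilon$-cover of $\mathcal{Q}$ induces an $\epsilon$-cover of $\mathcal{V}^\pi$, which gives $\mathcal{N}_\epsilon(\mathcal{V}^\pi)\le\mathcal{N}_\epsilon(\mathcal{Q})$, the first inequality of the claim. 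It then suffices to bound $\mathcal{N}_\epsilon(\mathcal{Q})$.

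\textbf{Stage 2 (reparametrize and Lipschitz-ify).} I would fold the free parameters into a single weight $w=\weight+\gamma\mbf v\in\R^d$ and a single matrix $A=\beta^2\Lambda^{-1}\in\R^{d\times d}$, so that $Q(s,a)=\bs{\phi(s,a)^\trans w+\sqrt{\phi(s,a)^\trans A\,\phi(s,a)}}_0^{(1-\gamma)^{-1}}$; in particular $\beta$ is absorbed into $A$ and need not be covered on its own. Since truncation to $[0,(1-\gamma)^{-1}]$ is $1$-Lipschitz and $\abs{\sqrt x-\sqrt y}\le\sqrt{\abs{x-y}}$, for two parameter pairs one gets the pointwise estimate $\abs{Q(s,a;w,A)-Q(s,a;w',A')}\le\abs{\phi^\trans(w-w')}+\sqrt{\abs{\phi^\trans(A-A')\phi}}\le\norm{w-w'}_2+\sqrt{\norm{A-A'}_F}$, using $\norm{\phi(s,a)}_2\le\norm{\phi(s,a)}_1\le\norm{\phim}_{1,\infty}\le1$ from Assumption~\ref{ass:LinMDP} and $\abs{\phi^\trans(A-A')\phi}\le\norm{A-A'}_F\,\norm{\phi\phi^\trans}_F\le\norm{A-A'}_F$ by Cauchy–Schwarz in the Frobenius inner product.

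\textbf{Stage 3 (count the nets).} The weight $w=\weight+\gamma\mbf v$ ranges over a Euclidean ball of radius $\sqrt{1+\gamma^2L^2}$ (controlled by $\norm{\weight}\le1$ and $\norm{\mbf v}\le L$), which admits an $(\epsilon/2)$-net in $\norm{\cdot}_2$ of cardinality at most $(1+4\sqrt{1+\gamma^2L^2}/\epsilon)^d$. The matrix $A=\beta^2\Lambda^{-1}$ satisfies $\norm{A}_F\le\sqrt d\,\norm{A}_{\mathrm{op}}\le\sqrt d\,\beta^2$, since $\lambda_{\min}(\Lambda)\ge1$, so it ranges over a Frobenius ball of radius $\sqrt d\,\beta^2$ in $\R^{d\times d}$, which admits an $(\epsilon^2/4)$-net of cardinality at most $(1+8\sqrt d\,\beta^2/\epsilon^2)^{d^2}$. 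Choosing the nearest net points makes both terms in the Stage~2 bound at most $\epsilon/2$, so the product of the two nets is an $\epsilon$-cover of $\mathcal{Q}$; multiplying the cardinalities gives the stated bound. The one delicate point is the bonus term: because $\sqrt{\cdot}$ only supplies a Hölder-$\tfrac12$ modulus, $A$ must be covered at resolution $\epsilon^2$ rather than $\epsilon$, which is exactly what produces the $\epsilon^{-2}$ and the extra $\sqrt d$ in the second factor, while everything else reduces to routine volumetric net counting.
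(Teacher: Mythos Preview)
Your proposal is correct and follows essentially the same argument as the paper: reparametrize via $w=\weight+\gamma\mbf v$ and $A=\beta^2\Lambda^{-1}$, use the $1$-Lipschitzness of clipping together with $\abs{\sqrt{x}-\sqrt{y}}\le\sqrt{\abs{x-y}}$ to obtain the parameter-space Lipschitz bound, then count nets on the two balls and finish with the non-expansiveness of $Q\mapsto\innerprod{\pi(\cdot|s)}{Q(s,\cdot)}$ for fixed $\pi$. The only cosmetic difference is that the paper bounds $\abs{\phi^\trans(A-A')\phi}$ via the operator norm first and then by the Frobenius norm, whereas you go directly through the Frobenius inner product; both are fine.
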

\begin{proof}
    Let us remove clipping that can only decreasing the covering number of the function class and let us consider the matrix $A=\beta^2 \Lambda^{-1}$, then we can rewrite the function class of interest as parameterized only by $A$ rather then $\beta$ and $\Lambda$ separately. In addition, let us consider a vector $\mbf{z} = \weight + \gamma \mbf{v}$
    \begin{equation*}
        \mathcal{Q}(A, \mbf{z}) = \bc{Q(s,a; A, \weight, \mbf{v}) | \lambda_{\min}(\Lambda) \geq \beta^2, \norm{\mbf{z}}^2\leq 2 + 2\gamma^2 L^2}
    \end{equation*}
    with
    \begin{equation*}
Q(s,a; A, \mbf{z}) = \phi(s,a)^\trans\mbf{z} + \norm{\phi(s,a)}_{A}
    \end{equation*}
Then, we have that 
\begin{align*}
    \abs{Q(s,a; A_1, \mbf{z}_1) - Q(s,a; A_2, \mbf{z}_2)} &\leq \norm{\phi(s,a)}\norm{\mbf{z}_1 - \mbf{z}_2} + \abs{\sqrt{\phi(s,a)^\trans A_1 \phi(s,a)} - \sqrt{\phi(s,a)^\trans A_2 \phi(s,a)}} \\
    &\leq \norm{\phi(s,a)}\norm{\mbf{z}_1 - \mbf{z}_2} + \sqrt{\abs{\phi(s,a)^\trans (A_1 - A_2) \phi(s,a)}} \\
    &\leq \norm{\mbf{z}_1 - \mbf{z}_2} + \sqrt{\sup_{\phi : \norm{\phi} \leq 1 }\abs{\phi(s,a)^\trans (A_1 - A_2) \phi(s,a)}} \\
    &\leq \norm{\mbf{z}_1 - \mbf{z}_2} + \sqrt{\norm{A_1 - A_2}} \\
    &\leq \norm{\mbf{z}_1 - \mbf{z}_2} + \sqrt{\norm{A_1 - A_2}_F} \\
\end{align*}
where $\norm{A_1 - A_2}$ is the spectral norm of the matrix $A_1 - A_2$ and $\norm{A_1 - A_2}_F$ is the Frobenius norm. We also used the inequality $\abs{\sqrt{x} - \sqrt{y}} \leq \sqrt{\abs{x - y}}$ that holds for any $x,y \geq 0$. At this point we can constructing an $\epsilon$-covering set for $\mathcal{Q}(A,\mbf{z})$ as product of the $\epsilon^2/4$ covering set for the set $\mathcal{Y} = \bc{A \in \mathbb{R}^{d\times d} | \norm{A}_F \leq \sqrt{d}\beta^{-2}}$ which has cardinality $\mathcal{N}_{\epsilon}(\mathcal{Y}) = (1 + 8\sqrt{d} \beta^2 \epsilon^{-2})^{d^2}$ while the covering for the set $\mathcal{Z}=\bc{\mbf{z}\in\mathbb{R}^d:\norm{\mbf{z}}^2 \leq 1 + \gamma^2L^2}$ satisfies $\mathcal{N}_\epsilon(\mathcal{Z}) = (1 + 4\sqrt{1 + \gamma^2 L^2}/\epsilon )^d$. Hence, taking the product, we have that $$\mathcal{N}_\epsilon(\mathcal{Q}) = (1 + 4\sqrt{1 + \gamma^2 L^2}/\epsilon )^d (1 + 8\sqrt{d} \beta^2 \epsilon^{-2})^{d^2}$$.

At this point, let us consider the set
\begin{equation*}
\mathcal{V}^{\pi} = \bc{ \innerprod{\pi(\cdot|s)}{Q(s,\cdot)} | Q(s,a) \in \mathcal{Q}(A,\mbf{z})}
\end{equation*}
Since the policy $\pi$ is fixed and averaging is a non expansive operation, we have that $\mathcal{N}_{\epsilon}(\mathcal{V}^{\pi}) \leq \mathcal{N}_{\epsilon}(\mathcal{Q}) $.

However, for the set \begin{equation*}
\mathcal{V} = \bc{ \innerprod{\pi(\cdot|s)}{Q(s,\cdot)} | \pi \in \Pi, Q(s,a) \in \mathcal{Q}(\beta, \Lambda, \weight, \mbf{v})}
\end{equation*}
the averaging is not wrt to a fixed distribution therefore we would need to proceed as follow
\begin{align*}
\abs{\innerprod{\pi_1(\cdot|s)}{Q_1(s,\cdot)} - \innerprod{\pi_2(\cdot|s)}{Q_2(s,\cdot)}} &\leq \frac{1}{(1 - \gamma)} \norm{\pi_1(\cdot|s) - \pi_2(\cdot|s)}_1 + 2 \norm{Q_1(s,\cdot) - Q_2(s,\cdot)}_{\infty} \\
&\leq \frac{1}{(1 - \gamma)} \norm{\pi_1(\cdot|s) - \pi_2(\cdot|s)}_1 + 2 \norm{Q_1(s,\cdot) - Q_2(s,\cdot)}_{\infty} \\
&\leq \frac{1}{(1 - \gamma)} \max_{s\in \mathcal{S}}\norm{\pi_1(\cdot|s) - \pi_2(\cdot|s)}_1 + 2 \norm{\mbf{z}_1 - \mbf{z}_2} + \sqrt{\norm{A_1 - A_2}_F} \\
&\leq \frac{1}{(1 - \gamma)} \norm{\pi_1(\cdot|s) - \pi_2(\cdot|s)}_{\infty,1} + 2 \norm{\mbf{z}_1 - \mbf{z}_2} + \sqrt{\norm{A_1 - A_2}_F} 
\end{align*}
Therefore, we can conclude that $\mathcal{N}_{\epsilon}(\mathcal{V}) = \mathcal{N}_{\epsilon}(\Pi, \norm{\cdot}_{\infty,1})\mathcal{N}_{\epsilon}(\mathcal{Q})$.
\end{proof}
Next, we prove the Lemma that we use to state \Cref{thm:1,thm:2} using $\delta=\epsilon$.
\begin{lemma}
\label{lemma:conversion} \textbf{High probability to expectation conversion for a bounded random variable} Let us consider a random variable $X$ such that $ - X_{\max} \leq X \leq X_{\max}$ almost surely and that $\mathbb{P}\bs{X \geq \mu} \leq \delta$, then it holds that 
    $$
    \mathbb{E}\bs{X} \leq \mu + \delta (X_{\max} - \mu)
    $$
\end{lemma}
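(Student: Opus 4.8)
The plan is to prove the bound by decomposing the expectation of $X$ according to whether the high-probability event $\{X \geq \mu\}$ occurs. I would write $\E{X} = \E{X\,\mathds{1}\{X < \mu\}} + \E{X\,\mathds{1}\{X \geq \mu\}}$ and bound the two contributions separately, using the only three facts available: the pointwise inequality defining each event, the almost sure bound $-X_{\max}\leq X\leq X_{\max}$, and the tail hypothesis $\P{X\geq\mu}\leq\delta$.

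On the event $\{X<\mu\}$ I would use the pointwise inequality $X\,\mathds{1}\{X<\mu\}\leq \mu\,\mathds{1}\{X<\mu\}$, which holds irrespective of the sign of $\mu$, to get $\E{X\,\mathds{1}\{X<\mu\}}\leq \mu\,\P{X<\mu}=\mu(1-p)$, where I abbreviate $p\triangleq\P{X\geq\mu}$. On the complementary event I would use the almost sure upper bound $X\leq X_{\max}$ to obtain $\E{X\,\mathds{1}\{X\geq\mu\}}\leq X_{\max}\,\P{X\geq\mu}=X_{\max}\,p$. Adding the two pieces and regrouping gives the clean intermediate bound
\[
\E{X}\leq \mu(1-p)+X_{\max}\,p=\mu+p\,(X_{\max}-\mu).
\]

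The final step is to replace $p$ by its upper bound $\delta$, and this is the one place where the argument requires care: the substitution is legitimate only when the coefficient $X_{\max}-\mu$ is nonnegative, so enlarging $p$ can only enlarge the right-hand side. When $\mu\leq X_{\max}$ this holds and directly yields $\E{X}\leq\mu+\delta(X_{\max}-\mu)$, the claim. The sign of $X_{\max}-\mu$ is therefore the main (and essentially only) obstacle. In the degenerate regime $\mu>X_{\max}$ I would not use the intermediate bound but instead argue directly from $X\leq X_{\max}$ almost surely: then $\E{X}\leq X_{\max}$, and since $1-\delta>0$ while $X_{\max}-\mu<0$ we have $(1-\delta)(X_{\max}-\mu)\leq 0$, i.e. $X_{\max}\leq\mu+\delta(X_{\max}-\mu)$, so the bound again holds. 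This covers every value of $\mu$ and completes the proof.
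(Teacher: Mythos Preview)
Your proof is correct and follows essentially the same decomposition as the paper, splitting $\E{X}$ according to the event $\{X\geq\mu\}$ and bounding by $\mu$ on one piece and $X_{\max}$ on the other. You are in fact more careful than the paper's one-line argument: you first obtain $\E{X}\leq\mu+p(X_{\max}-\mu)$ with the true tail probability $p$ and then justify the replacement $p\to\delta$ via the sign of $X_{\max}-\mu$, also treating the degenerate case $\mu>X_{\max}$, whereas the paper writes $(1-\delta)$ and $\delta$ directly as if they were the probabilities.
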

\begin{proof}
\begin{align*}
    \mathbb{E}\bs{X} = (1 - \delta) \mathbb{E}\bs{X | X \leq \mu}  + \delta \mathbb{E}\bs{X | X \geq \mu} \leq (1 - \delta) \mu + \delta X_{\max}
\end{align*}
\end{proof}
\begin{lemma}\textbf{Expert concentration}
\citep[Theorem 1]{Syed:2007}
\label{lemma:expert_concentration} Let $\mathcal{D}_{\expert}\triangleq\{(s_0^\ell,a_0^\ell,s_1^\ell,a_1^\ell,\ldots,s_H^\ell,a_H^\ell)\}_{\ell=1}^{n_{\textup{E}}}$ be a finite set of \textrm{i.i.d.} truncated sample trajectories collected with an expert policy $\expert$. We consider the empirical expert feature expectation vector $\phim^\trans d^{\expert}$ by taking sample averages, i.e., 
		$$
		 \widehat{\phim^\trans d^{\expert}}\triangleq (1-\gamma)\frac{1}{n_{\textup{E}}} \sum_{t=0}^H \sum_{\ell=1}^{N} \gamma^t \phi_i(s_t^\ell,a_t^\ell), \; \forall \; i\in[d].
		$$
		Suppose the trajectory length is $H\geq\frac{1}{1-\gamma}\log(\frac{1}{\varepsilon})$, and the number of of expert trajectories is $n_{\textup{E}}\geq\frac{2\log(\frac{2d}{\delta})}{\varepsilon^2}$. Then, with probability at least $1-\delta$, it holds that 
		$\norm{\phim^\trans d^{\expert} - \widehat{\phim^\trans d^{\expert}}}_\infty\le\varepsilon.$
\end{lemma}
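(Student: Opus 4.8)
The plan is to decompose the estimation error coordinatewise into a \emph{truncation bias}, arising from cutting the infinite discounted sum at horizon $H$, and a \emph{statistical error}, arising from averaging over only $n_{\textup{E}}$ trajectories; I would then control the first term deterministically and the second by Hoeffding's inequality followed by a union bound over the $d$ coordinates.

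First I would fix a coordinate $i\in[d]$ and write the true feature expectation as the infinite discounted sum $(\phim^\trans d^{\expert})_i = (1-\gamma)\sum_{t=0}^{\infty}\gamma^t \mathbb{E}\bs{\phi_i(s_t,a_t)}$ under $\expert$, and for each trajectory $\ell$ introduce the per-trajectory estimator $X_i^\ell \triangleq (1-\gamma)\sum_{t=0}^{H}\gamma^t \phi_i(s_t^\ell,a_t^\ell)$, so that $\widehat{\phim^\trans d^{\expert}}_i = \frac{1}{n_{\textup{E}}}\sum_{\ell=1}^{n_{\textup{E}}} X_i^\ell$. Because the trajectories are i.i.d., the $X_i^\ell$ are i.i.d.\ with common mean $\mu_i \triangleq (1-\gamma)\sum_{t=0}^{H}\gamma^t \mathbb{E}\bs{\phi_i(s_t,a_t)}$, which is exactly the horizon-$H$ truncation of $(\phim^\trans d^{\expert})_i$. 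The triangle inequality then splits the target into the bias $\abs{(\phim^\trans d^{\expert})_i-\mu_i}$ plus the statistical deviation $\abs{\mu_i - \frac{1}{n_{\textup{E}}}\sum_\ell X_i^\ell}$.

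For the bias I would use $\norm{\phim}_{1,\infty}\leq 1$, which forces $\abs{\phi_i(s,a)}\leq\norm{\phi(s,a)}_1\leq 1$, so the discarded tail is at most the geometric series $(1-\gamma)\sum_{t=H+1}^{\infty}\gamma^t=\gamma^{H+1}$; combined with $\gamma^{H}\leq e^{-H(1-\gamma)}$ (from $\log(1/\gamma)\geq 1-\gamma$) and the hypothesis $H\geq\frac{1}{1-\gamma}\log(1/\varepsilon)$, this makes the bias a controlled fraction of $\varepsilon$. For the statistical term the same boundedness gives $X_i^\ell\in[-1,1]$, so Hoeffding yields $\mathbb{P}\bs{\abs{\mu_i-\frac{1}{n_{\textup{E}}}\sum_\ell X_i^\ell}\geq t}\leq 2\exp(-n_{\textup{E}}t^2/2)$; a union bound over the $d$ coordinates together with the choice $n_{\textup{E}}\geq\frac{2\log(2d/\delta)}{\varepsilon^2}$ then bounds the $\ell_\infty$ statistical error by (a fraction of) $\varepsilon$ with probability at least $1-\delta$. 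Adding the two contributions and maximizing over $i$ would deliver the claimed $\ell_\infty$ bound, with the constants matching the cited statement of \cite{Syed:2007} after distributing $\varepsilon$ between the two error sources.

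The one conceptually important point — rather than a genuine obstacle — is to keep the two error sources cleanly separated: the truncated empirical average is \emph{biased} for the infinite-horizon quantity, and Hoeffding concentrates it only around its own truncated mean $\mu_i$, so it is precisely the horizon condition on $H$ that closes the remaining gap between $\mu_i$ and $(\phim^\trans d^{\expert})_i$. The boundedness of the summands, the i.i.d.\ structure across trajectories, and the geometric tail estimate are all routine.
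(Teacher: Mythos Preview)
Your proposal is correct and follows the standard argument. The paper does not supply its own proof of this lemma; it simply cites \cite{Syed:2007}, whose proof proceeds exactly as you outline: split each coordinate into the deterministic truncation bias (controlled by the geometric tail and the horizon condition on $H$) and the statistical fluctuation of the truncated average (controlled by Hoeffding plus a union bound over the $d$ coordinates, using the boundedness of the features). There is nothing to add beyond your sketch.
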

\section{Omitted proofs for Best Response Imitation Learning}
\subsection{Proof of \Cref{thm:BR}}
\begin{proof}
\begin{align*}
\mathrm{Regret}(K, \pi^\star) &= \sum^K_{k=1} V^{\pi^{k},k}_1(s_1) - V^{\pi^\star, k}_1 \\
& =  \sum^K_{k=1} V^{\pi^{k},k}_1(s_1) - V^{k-1}_1(s_1) + V^{k-1}_1(s_1) - V^{\pi^\star, k}_1 \\
& = \sum^K_{k=1} \sum^H_{h=1} \mathbb{E}_{s,a \sim d_h^{\pi^k}} \bs{\cost^k_h(s,a) + P_h V^{k-1}_{h+1}(s,a) - Q^{k-1}_{h}(s,a) } \\&\phantom{=} + \sum^K_{k=1} \sum^H_{h=1} \mathbb{E}_{s,a \sim d_h^{\pi^\star}} \bs{Q^{k-1}_{h}(s,a)-\cost^k_h(s,a) - P_h V^{k-1}_{h+1}(s,a) } \\ &\phantom{=} + \sum^H_{h=1} \mathbb{E}_{s \sim d_h^{\pi^\star}} \bs{\sum^K_{k=1} \innerprod{\pi_h^k(\cdot | s) - \pi_h^\star(\cdot|s)}{Q^{k-1}_h(s,a)}}
\\
& \leq \sum^K_{k=1} \sum^H_{h=1} \mathbb{E}_{s,a \sim d_h^{\pi^k}} \bs{\cost^k_h(s,a) + P_h V^{k-1}_{h+1}(s,a) - Q^{k-1}_{h}(s,a) } \\&\phantom{=} + \sum^K_{k=1} \sum^H_{h=1} \mathbb{E}_{s,a \sim d_h^{\pi^\star}} \bs{Q^{k-1}_{h}(s,a)-\cost^k_h(s,a) - P_h V^{k-1}_{h+1}(s,a) }
\end{align*}
where the last inequality is due to the use of the best response (greedy policy) in Step 17 of \Cref{alg:BRimitation_finite_horizon}.
At this point we can prove the optimistic properties of the estimator (that follows combining \Cref{lemma:infinite_optimism,lemma:confidenceBR}),i.e. for any $h=H, \dots, 1$, it holds that
$$\cost^k_h(s,a) + P_h V^{k-1}_{h+1}(s,a) - 2b^{k-1}_h(s,a) \leq Q^{k-1}_{h}(s,a) \leq \cost^k_h(s,a) + P_h V^{k-1}_{h+1}(s,a) \quad \forall s,a \in \sspace \times \aspace \quad w.p.\quad 1 - \delta.$$
Thus, it holds with probability $1 - \delta$ that
\begin{align*}
\mathrm{Regret}(K, \pi^\star) \leq 2\sum^K_{k=1}\sum^H_{h=1} \mathbb{E}_{s,a\sim d_h^{\pi^k}} \bs{b^{k-1}_h(s,a)}
\end{align*}
and then, using Cauchy-Schwartz and the elliptical potential lemma ( see \Cref{lemma:BRelliptical}), we obtain that with probability $1 - 2\delta$,
\begin{align*}
\mathrm{Regret}(K, \pi^\star)  \leq \mathcal{O}\br{H^2 d^{3/2} \sqrt{K \log(K\delta^{-1})}}
\end{align*}
Then, we apply this result in the imitation learning setting. We start with our usual decomposition
\begin{align*}
\sum^K_{k=1}\sum^H_{h=1} \innerprod{\true_{,h}}{d_h^{\pi^k} - d_h^{\widehat{\expert}}} &= \sum^K_{k=1}\sum^H_{h=1} \innerprod{\cost_h^k}{d_h^{\pi^k} - d_h^{\widehat{\expert}}} + \sum^K_{k=1}\sum^H_{h=1} \innerprod{\true_{,h} - \cost_h^k}{d_h^{\pi^k} - d_h^{\widehat{\expert}}} \\&= \sum^K_{k=1}\sum^H_{h=1} \innerprod{\cost_h^k}{d_h^{\pi^k} - d_h^{\widehat{\expert}}} + \sum^K_{k=1}\sum^H_{h=1} \innerprod{\true_{,h} - \cost_h^k}{\mathds{1}\bs{s_k^h, a^h_k} - d_h^{\widehat{\expert}}} \\&\phantom{=}- \sum^H_{h=1}\sum^K_{k=1} \innerprod{\true_{,h} - \cost_h^k}{\mathds{1}\bs{s_k^h, a^h_k} - d_h^{\pi^k}} 
\end{align*}
Then, notice that $z^k_h = - \innerprod{\true_{,h} - \cost_h^k}{\mathds{1}\bs{s_k^h, a^h_k} - d_h^{\pi^k}} $ is a martingale difference sequence adapted to the filtration $\mathcal{F}^k_h = \bc{\br{\tau^k, \cost_h^k}}$ almost surely bounded by $4$. Therefore, by Azuma Hoeffding inequality, we obtain $\sum^H_{h=1}\sum^K_{k=1} z^k_h \leq H \sqrt{8 K \log \delta^{-1}}$.
Therefore, via a union bound, we have that with probability $1 - 3 \delta$, it holds
\begin{align*}
\sum^K_{k=1}\sum^H_{h=1} \innerprod{\true_{,h}}{d_h^{\pi^k} - d_h^{\widehat{\expert}}} &\leq \widetilde{\mathcal{O}}\br{H^2 d^{3/2} \sqrt{K \log(K\delta^{-1})}} + \frac{H}{\alpha} + 2 \alpha K H + H \sqrt{8 K \log \delta^{-1}}
\\ &= \widetilde{\mathcal{O}}\br{H^2 d^{3/2}\sqrt{K \log(K\delta^{-1})}} + 4 H \sqrt{ K} + H \sqrt{8 K \log \delta^{-1}}
\end{align*}
where last step follows from choosing $\alpha = \frac{1}{\sqrt{2K}}$.
At this point, the conclusion holds plugging in the value for $K$ in the statement of the main theorem which is $K = \mathcal{O}\br{\frac{H^4 d^3 \log (d H / (\epsilon \delta))}{\epsilon^2}}$. 
Finally, we need to control the error in the estimation of the expert occupancy measure that can be done as in the proof for \Cref{alg:finite_horizon}.
\begin{align*}
  V^{\expert}_1(s_1; \true)- V^{\widehat{\expert}}_1(s_1; \true) 
  &=
  \sum^H_{h=1}\innerprod{\phim^\trans d^{\expert}_h - \phim^\trans d^{\widehat{\expert}}_h}{w_{\mathrm{true},h}} 
  \\&\leq H \sqrt{d}\max_{h\in[H]}\norm{\phim^\trans d^{\expert}_h - \phim^\trans d^{\widehat{\expert}}_h}_{\infty} 
  \\&\leq H \sqrt{\frac{2d\log(2 d /  \delta)}{\tau_E}}
\end{align*}
where the last inequality holds with probability $1 - \delta$. Therefore, the choice of $\tau_E$ in the theorem statement ensures that $V^{\expert}_1(s_1; \true)- V^{\widehat{\expert}}_1(s_1; \true) \leq \epsilon_E$.
\end{proof}
\begin{lemma} \label{lemma:confidenceBR}
For $\beta = \mathcal{O}\br{d H \log (\frac{dT}{\delta})}$, the estimator used in \Cref{alg:BRimitation_finite_horizon} 
$$
\mbf{v}^k_h = \br{\Lambda^k_h}^{-1} \sum^k_{l=1}\phi(s_h^l, a_h^l) V^k_h(s^l_{h+1})
$$
satisfies for any $h,k \in [H]\times[K]$ and for any state action pair $(s,a) \in \sspace\times \aspace$.
\begin{equation}
\abs{\phi(s,a)^\trans \mbf{v}^k_h - P_hV^k_{h+1}(s,a)} \leq \beta \norm{\phi(s,a)}_{\br{\Lambda^k_h}^{-1}}
\end{equation}
with probability $1 - \delta$.
\end{lemma}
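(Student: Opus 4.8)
The proof follows the classical LSVI-UCB confidence bound template \citep{jin2019provably} and reuses the deterministic part of the argument in \Cref{lemma:confidence_bound}, with one structural change that I flag below. Write $\phi_l \triangleq \phi(s_h^l,a_h^l)$ and recall from \Cref{ass:EpLinMDP} that $P_h V = \phim M_h V$, so that $P_h V_{h+1}^k(s,a) = \phi(s,a)^\trans M_h V_{h+1}^k$ and $\phi_l^\trans M_h V_{h+1}^k = P_h V_{h+1}^k(s_h^l,a_h^l)$. Since $\Lambda_h^k = \sum_{l=1}^k \phi_l\phi_l^\trans + I$, I would write $M_h V_{h+1}^k = (\Lambda_h^k)^{-1}\big(\sum_l \phi_l P_h V_{h+1}^k(s_h^l,a_h^l) + M_h V_{h+1}^k\big)$ and subtract, obtaining
\begin{align*}
\phi(s,a)^\trans \mbf{v}_h^k - P_h V_{h+1}^k(s,a) = \phi(s,a)^\trans (\Lambda_h^k)^{-1}\big(\Xi_h^k - M_h V_{h+1}^k\big),\quad \Xi_h^k \triangleq \sum_{l=1}^k \phi_l\big(V_{h+1}^k(s_{h+1}^l) - P_h V_{h+1}^k(s_h^l,a_h^l)\big).
\end{align*}
Cauchy--Schwarz in the $(\Lambda_h^k)^{-1}$ geometry factors out $\norm{\phi(s,a)}_{(\Lambda_h^k)^{-1}}$, reducing the claim to $\norm{\Xi_h^k}_{(\Lambda_h^k)^{-1}} + \norm{M_h V_{h+1}^k}_{(\Lambda_h^k)^{-1}} \le \beta$. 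The regularization term is deterministic: as $\Lambda_h^k \succeq I$, $\norm{M_h}_{1,\infty} \le 1$ and $\norm{V_{h+1}^k}_\infty \le H$, we get $\norm{M_h V_{h+1}^k}_{(\Lambda_h^k)^{-1}} \le \norm{M_h V_{h+1}^k}_2 \le \sqrt d\, H$.

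The difficulty is $\Xi_h^k$. For a fixed deterministic $\bar V$ with $\norm{\bar V}_\infty \le H$, each summand $\phi_l(\bar V(s_{h+1}^l) - P_h\bar V(s_h^l,a_h^l))$ is a conditionally mean-zero, bounded vector martingale difference with respect to the per-episode filtration, so the self-normalized bound \citep[Theorem D.3]{jin2019provably} controls $\norm{\sum_l \phi_l(\bar V(s_{h+1}^l)-P_h\bar V(s_h^l,a_h^l))}^2_{(\Lambda_h^k)^{-1}}$ by $O(H^2 d\log(k/\delta))$. The obstacle is that BRIG collects trajectories under the adaptive sequence of greedy policies $\pi^1,\dots,\pi^k$, so the realized $V_{h+1}^k$ depends on the very data in the sum; I therefore cannot use the conditional-independence device (\Cref{thm:self_normalize}) that underlies the on-policy \Cref{lemma:confidence_bound}.

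I would resolve this with the standard net argument: take a uniform bound over an $\epsilon$-cover of the value-function class, union bound over the cover, and absorb the discretization error through $\abs{V(s)-\bar V(s)}\le\epsilon$. The point that keeps this free of $\abs{\sspace},\abs{\aspace}$ is that BRIG plays the best response, so $V_{h+1}^k(s) = \min_a Q_{h+1}^k(s,a)$ lies in the class obtained by applying $\min_a$ to the clipped $Q$-class $\mathcal{Q}(\beta,\Lambda,\weight,\mbf{v})$; since $\min_a$ is non-expansive in $\norm{\cdot}_\infty$ exactly like the fixed-$\pi$ averaging there, the finite-horizon counterpart of \Cref{thm:covering} gives $\log\mathcal{N}_\epsilon \le O(d^2\log(1+\ldots))$, with $\norm{\mbf{v}_h^k} = O(H\sqrt{dK})$ fixing $L$. (Had the improvement step produced stochastic softmax policies, the policy-covering factor in \Cref{thm:covering} would reappear and destroy the bound -- this is precisely why best response is compatible with the finite-horizon analysis while it is not in the infinite-horizon one.) Taking $\epsilon = 1/T$ and a union bound over $h\in[H]$, $k\in[K]$ then yields, simultaneously for all $(s,a,h,k)$ with probability $1-\delta$,
\begin{align*}
\abs{\phi(s,a)^\trans \mbf{v}_h^k - P_h V_{h+1}^k(s,a)} \le c\,\norm{\phi(s,a)}_{(\Lambda_h^k)^{-1}}\, dH\sqrt{\log\!\big(dTH\beta/\delta\big)}
\end{align*}
for an absolute constant $c$.

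Because $\beta$ sits inside the logarithm through $\log\mathcal{N}_\epsilon$, I would close the bound with the fixed-point argument of \citep[Lemma D.2]{sherman2023}: $\beta = O(dH\log(dT/\delta))$ satisfies $\beta \ge c\,dH\sqrt{\log(dTH\beta/\delta)}$, so the displayed right-hand side is at most $\beta\norm{\phi(s,a)}_{(\Lambda_h^k)^{-1}}$, which is the claim. The anticipated main obstacle is exactly the adaptivity flagged above, and its resolution rests on the greedy (deterministic) updates keeping the value-function covering number at $\widetilde{O}(d^2)$ via \Cref{thm:covering}; the remaining manipulations are routine and parallel those of \Cref{lemma:confidence_bound}.
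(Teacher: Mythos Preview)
Your proof is correct and follows essentially the same route as the paper: the paper's own argument is a terse two-line invocation of \citep[Lemma~B.3]{jin2019provably} after the same Cauchy--Schwarz decomposition you give, and that lemma is precisely the self-normalized bound plus $\epsilon$-net over the value class that you spell out. Your explicit observation that the greedy update makes $V_{h+1}^k = \min_a Q_{h+1}^k(\cdot,a)$ a non-expansive image of the $Q$-class---hence inheriting the $\widetilde{O}(d^2)$ log-covering of \Cref{thm:covering} without a policy-covering factor---is exactly what Lemma~B.3 relies on, and the paper leaves this implicit in the citation.
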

\begin{proof}
    With analogous steps to the proof of \Cref{lemma:confidence_bound} that
    \begin{align*}
\abs{\phi(s,a)^\trans\mbf{v}^k_h - P_h V^k_{h+1}(s,a)} &\leq \norm{M V^k_{h+1}}_{\br{\Lambda^k_h}^{-1}} \norm{\phi(s,a)}_{(\Lambda^k_h)^{-1}} \\&\phantom{=}+ \norm{\sum^k_{l=1} \phi(s^l_h, a^l_h) \br{V_{h+1}^k(s^l_{h+1}) - P_h V^k_{h+1} (s^l_h,a^l_h)}}_{\br{\Lambda^k_h}^{-1}} \norm{\phi(s,a)}_{(\Lambda^k_h)^{-1}} 
    \end{align*}
    Then, using the fact that by assumption on $M$ and by the clipping of the value function, we have that $\norm{M V^k_{h+1}}_{\br{\Lambda^k_h}^{-1}} \leq H$. Then, using \citep[Lemma B.3]{jin2019provably} it holds that with probability $1 - \delta$
    $$ \norm{\sum^k_{l=1} \phi(s^l_h, a^l_h) \br{V_{h+1}^k(s^l_{h+1}) - P_h V^k_{h+1} (s^l_h,a^l_h)}}_{\br{\Lambda^k_h}^{-1}} \leq \mathcal{O}\br{d H \log \br{\frac{dK}{\delta}}}$$
\end{proof}
Then, noticing that this is the main term we conclude that
\begin{align*}
\abs{\phi(s,a)^\trans\mbf{v}^k_h - P_h V^k_{h+1}(s,a)} &\leq \mathcal{O}\br{d H \log \br{\frac{dK}{\delta}}} \norm{\phi(s,a)}_{(\Lambda^k_h)^{-1}}.
\end{align*}
\begin{lemma}
\label{lemma:BRelliptical} It holds that with probability $1 - \delta$
\begin{equation*}
    \sum^K_{k=1}\sum^H_{h=1} \mathbb{E}_{s,a\sim d_h^{\pi^k}} \bs{b^{k-1}_h(s,a)} \leq \mathcal{O}\br{d^{3/2} H^2 \sqrt{K \log \br{\frac{2 K}{\delta}}} }
\end{equation*}
\end{lemma}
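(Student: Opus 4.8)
The plan is to split the on-policy expectation of the bonus into the bonus evaluated at the trajectory actually sampled in round $k$ plus a martingale remainder, then control the first piece by an elliptical potential argument and the second by Azuma--Hoeffding.

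First I would write, for each $k$ and $h$,
$$\mathbb{E}_{s,a\sim d_h^{\pi^k}}\bs{b^{k-1}_h(s,a)} = b^{k-1}_h(s^k_h,a^k_h) + Y^k_h, \qquad Y^k_h \triangleq \mathbb{E}_{s,a\sim d_h^{\pi^k}}\bs{b^{k-1}_h(s,a)} - b^{k-1}_h(s^k_h,a^k_h),$$
where $(s^k_h,a^k_h)$ is the pair visited at stage $h$ of the episode collected with $\pi^k$ in round $k$. Let $\mathcal{F}_{k-1}$ be the $\sigma$-algebra generated by the first $k-1$ episodes. The crucial observation is that both $\pi^k$ and the matrix $\Lambda^{k-1}_h = I + \sum_{l<k}\phi(s^l_h,a^l_h)\phi(s^l_h,a^l_h)^\trans$ (hence the whole bonus function $b^{k-1}_h$) are $\mathcal{F}_{k-1}$-measurable, whereas the round-$k$ trajectory is drawn afterwards with $(s^k_h,a^k_h)\sim d_h^{\pi^k}$ marginally. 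Consequently $\mathbb{E}\bs{b^{k-1}_h(s^k_h,a^k_h)\mid \mathcal{F}_{k-1}} = \mathbb{E}_{s,a\sim d_h^{\pi^k}}\bs{b^{k-1}_h(s,a)}$, so $Z^k \triangleq \sum_{h=1}^H Y^k_h$ is a martingale difference sequence adapted to $\bc{\mathcal{F}_{k-1}}$. Since $\lambda_{\min}(\Lambda^{k-1}_h)\ge 1$ and $\norm{\phi(s,a)}\le 1$ give $0\le b^{k-1}_h\le\beta$, we have $\abs{Z^k}\le H\beta$, and Azuma--Hoeffding yields $\sum_{k=1}^K Z^k \le H\beta\sqrt{2K\log(1/\delta)}$ with probability $1-\delta$.

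It then remains to bound the realized-bonus sum $\sum_{k,h} b^{k-1}_h(s^k_h,a^k_h) = \beta\sum_{h=1}^H\sum_{k=1}^K \norm{\phi(s^k_h,a^k_h)}_{(\Lambda^{k-1}_h)^{-1}}$. For each fixed $h$, Cauchy--Schwarz over $k$ gives $\sum_k \norm{\phi(s^k_h,a^k_h)}_{(\Lambda^{k-1}_h)^{-1}} \le \sqrt{K\sum_k \norm{\phi(s^k_h,a^k_h)}^2_{(\Lambda^{k-1}_h)^{-1}}}$. Because $\Lambda^k_h = \Lambda^{k-1}_h + \phi(s^k_h,a^k_h)\phi(s^k_h,a^k_h)^\trans$ and each $\norm{\phi(s^k_h,a^k_h)}^2_{(\Lambda^{k-1}_h)^{-1}}\le 1$, the standard elliptical-potential/log-determinant bound (as in \citep[Lemma 11.11]{Cesa-Bianchi:2006} or \citep[Lemma D.2]{jin2019provably}) gives $\sum_k \norm{\phi(s^k_h,a^k_h)}^2_{(\Lambda^{k-1}_h)^{-1}} \le 2d\log(1+K/d)$. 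Summing the resulting $\sqrt{2dK\log(1+K/d)}$ over the $H$ stages produces $H\sqrt{2dK\log(1+K/d)}$.

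Finally I would combine the two pieces, substitute $\beta = \mathcal{O}(dH\log(dK/\delta))$, and observe that the realized term $\beta H\sqrt{2dK\log(1+K/d)} = \mathcal{O}(d^{3/2}H^2\sqrt{K}\log(dK/\delta))$ dominates the martingale term $H\beta\sqrt{2K\log(1/\delta)} = \mathcal{O}(dH^2\sqrt{K\log(1/\delta)})$ by a factor $\sqrt{d}$; a union bound over the two high-probability events then yields the claimed $\mathcal{O}(d^{3/2}H^2\sqrt{K\log(2K/\delta)})$ after collecting the logarithmic factors. The only delicate point is the martingale identity: one must ensure the bonus $b^{k-1}_h$ uses the Gram matrix built strictly from rounds before $k$ (it does, since $\Lambda^{k-1}_h$ excludes the round-$k$ sample), so that the realized bonus at $(s^k_h,a^k_h)$ has the on-policy expectation as its conditional mean \emph{and} the elliptical potential lemma applies in its standard one-step-ahead form.
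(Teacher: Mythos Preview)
Your proposal is correct and follows essentially the same approach as the paper: replace the on-policy expectation of the bonus by its realized value via Azuma--Hoeffding (using that $b^{k-1}_h$ and $\pi^k$ are $\mathcal{F}_{k-1}$-measurable), then apply Cauchy--Schwarz and the elliptical potential lemma per stage $h$, and finally substitute $\beta = \mathcal{O}(dH\log(dK/\delta))$. The only minor remark is that there is really just one high-probability event here (Azuma--Hoeffding), since the elliptical potential bound is deterministic, so no union bound is needed.
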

\begin{proof}
We have that 
\begin{align*}
2\sum^K_{k=1}\sum^H_{h=1} \mathbb{E}_{s,a\sim d_h^{\pi^k}} \bs{b^{k-1}_h(s,a)} 
 &\leq 2\sum^K_{k=1}\sum^H_{h=1}  b^{k-1}_h(s^k_h,a^k_h) + \beta H \sqrt{K \log \delta^{-1}} \quad \text{(Azuma-Hoeffding)}
\\
& \leq 2\sum^H_{h=1}  \sqrt{ K \sum^K_{k=1} (b^{k-1}_h(s^k_h,a^k_h))^2} + \beta H \sqrt{K \log \delta^{-1}} \quad \text{(Cauchy-Schwartz)}
\\
& = 2\beta\sum^H_{h=1}  \sqrt{ K \sum^K_{k=1} \phi(s^k_h,a^k_h)^\trans (\Lambda^{k-1}_h) \phi(s^k_h,a^k_h) } + \beta H \sqrt{K \log \delta^{-1}}
\\
& \leq 2\beta\sum^H_{h=1}  \sqrt{ d K \log (2 K)} + \beta H \sqrt{K \log \delta^{-1}} \\
& = \mathcal{O}\br{d^{3/2} H^2 \sqrt{K \log \br{\frac{2 K}{\delta}}} }
\end{align*}
\end{proof}
\section{Experiments}
\label{app:experiments}
\subsection{Experiments with deterministic expert}
\begin{figure*}[!h] 
\centering
\begin{tabular}{ccc}
\subfloat[$\sigma=0$ \label{fig:sigma0}]{%
    \includegraphics[width=0.3\linewidth]{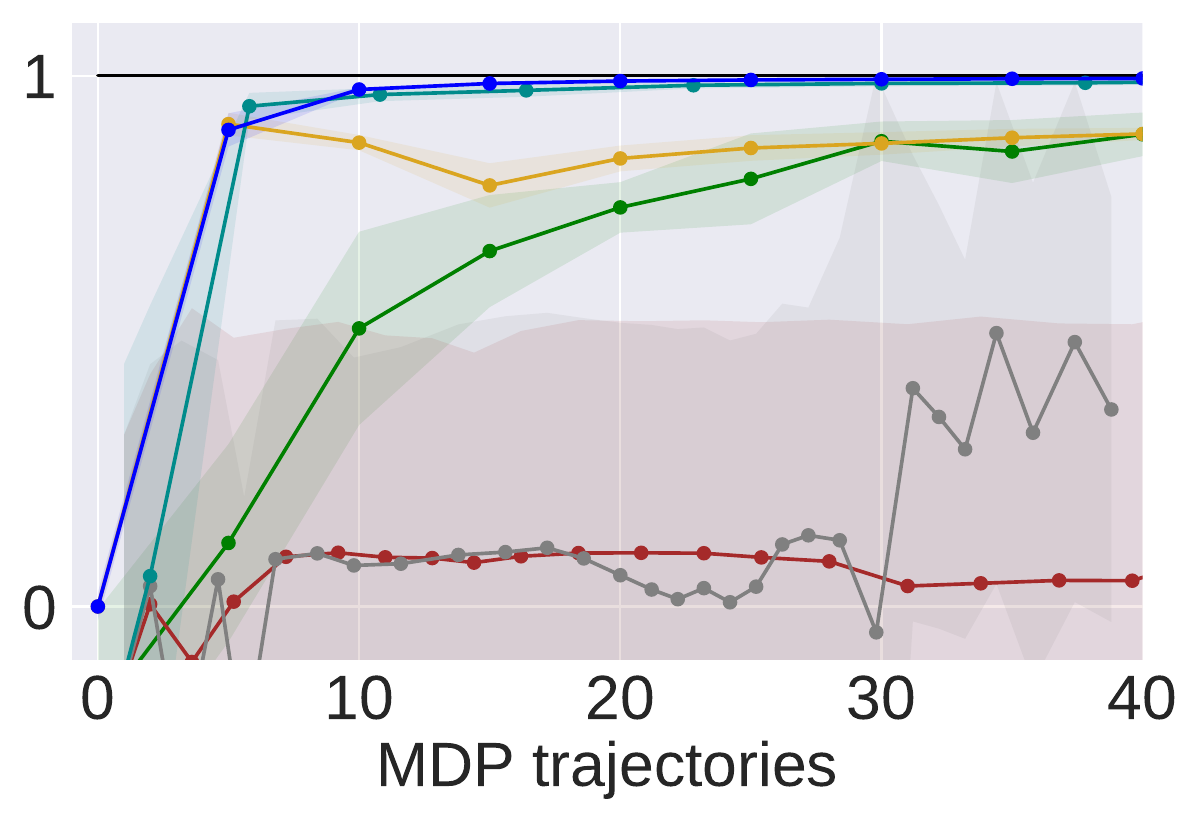}
     } &
\subfloat[$\sigma=0.05$ \label{fig:sigma005}]{%
    \includegraphics[width=0.3\linewidth]{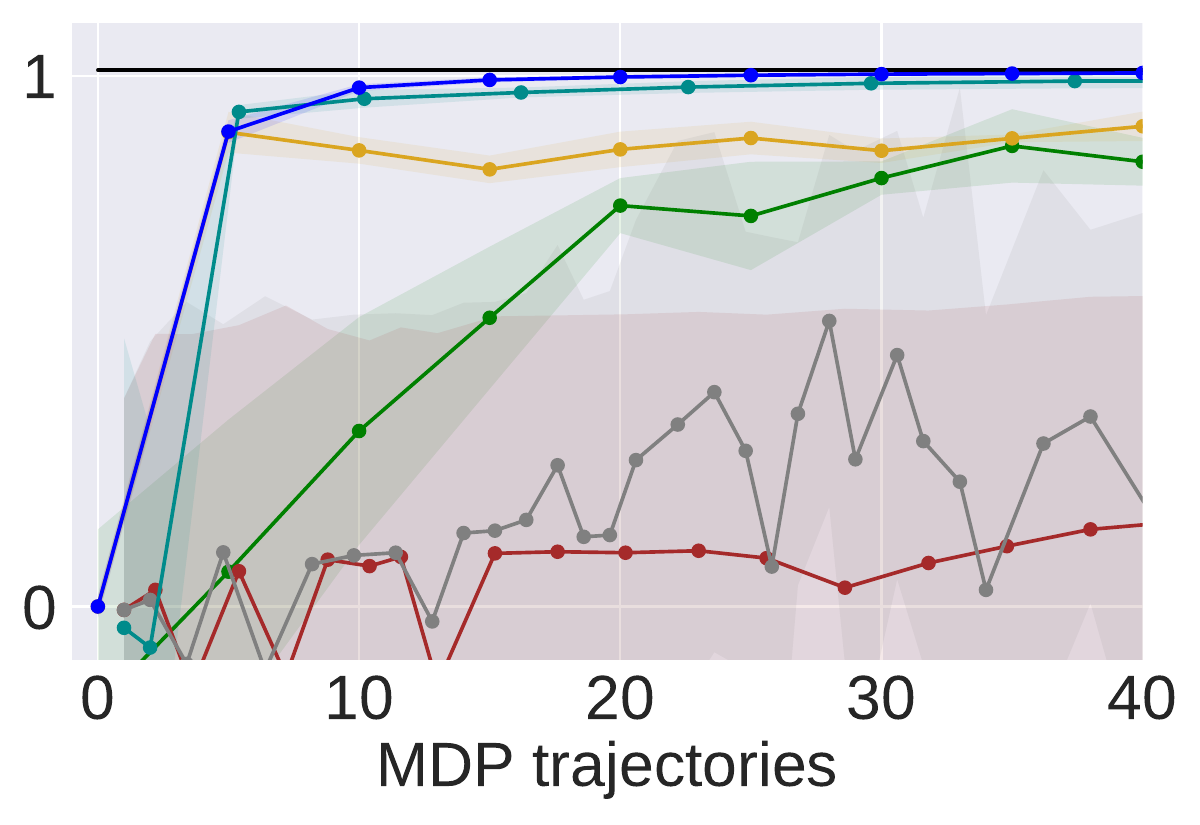}
     } & \hspace{-1.5cm}
\subfloat[$\sigma=0.1$ \label{fig:sigma01}]{%
    \includegraphics[width=0.3\linewidth]{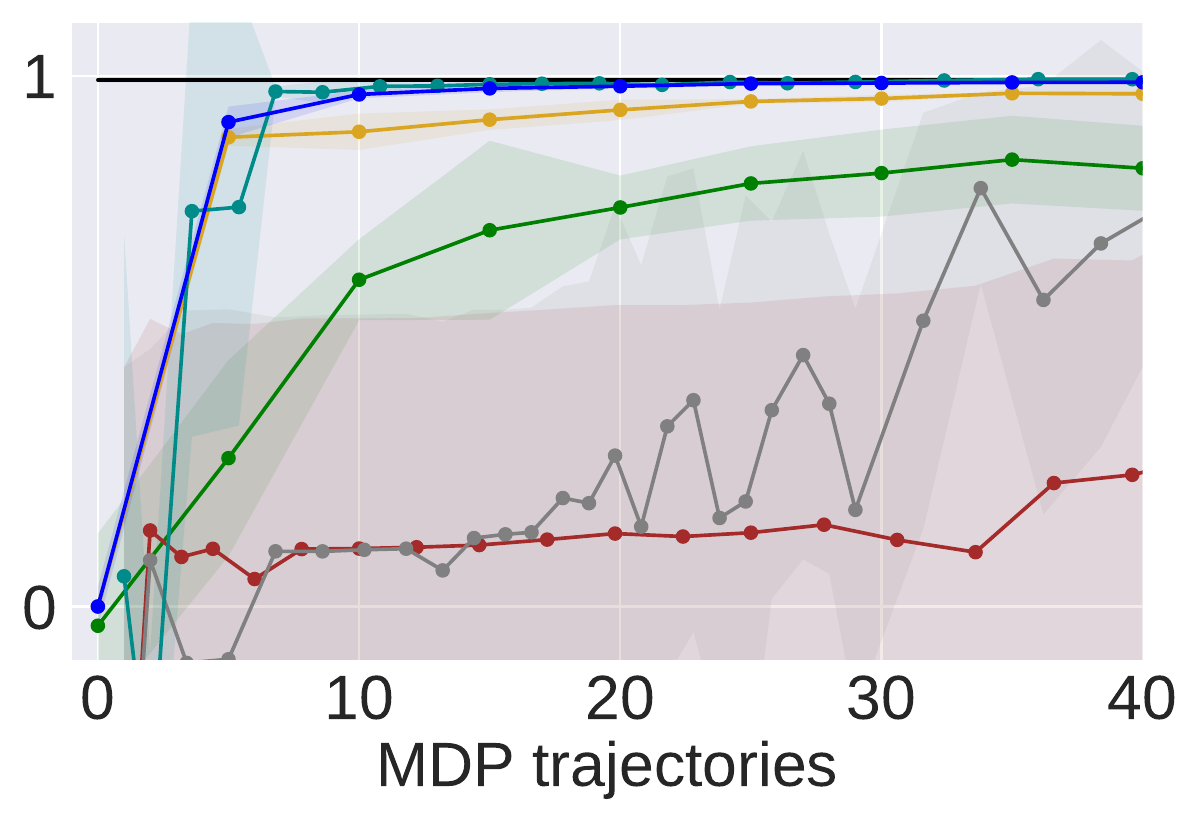}
     } \\
     \multicolumn{3}{c}{
       \includegraphics[scale=0.6]{Figs/legend_horizontal.pdf}
     }
\end{tabular}
\caption{Experiments on the continuous gridworld with one trajectory from a deterministic expert.}
\label{fig:linMDP}
\end{figure*}
We also run an experiment where the expert is deterministic and see if ILARL can compete with BC in this setting. The results are provided in \Cref{fig:linMDP}. The parameter $\sigma$ is the probability at which the system does  not evolve according to the agent's action but in an adversarial way. We experiment with $\sigma=\bc{0,0.05,0.1}$.
The details about the transition  dynamics are given in \Cref{env_descr}. Form \Cref{fig:linMDP}, we can see that ILARL and REIRL are again the most efficient algorithms in terms of MDP trajectories and they are able to match the performance of behavioural cloning despite the fact it has better guarantees for the case of deterministic experts. For \Cref{fig:noisy_lin_MDP}, we used $\sigma=0.1$.
\subsection{Environment description }
\label{env_descr}
We run the experiment in the following MDP with continuous states space.
We consider a 2D environment, where we denote the horizontal coordinate as $x \in [-1,1]$ and vertical one as $y \in [-1,1]$. The agent starts in the upper left corner, i.e., the coordinate $[-1,1]^\trans$ and should learn to reach the opposite corner (i.e. $[1,-1]^\trans$) while avoiding the central high cost area depicted in \Cref{fig:linMDPenv}. The reward function is given by: $\true(s,a) = \true([x,y]^\trans, a) = (x-1)^2 + (y+1)^2 + 80 \cdot e^{-8(x^2 + y^2)} - 100 \cdot \mathds{1}\{x\in [0.95, 1], y \in [-1, -0.95]\}$.  The action space for the agent is given by $\mathcal{A} = \bc{\underbrace{[0.01,0]^\trans}_{\triangleq A_1},\underbrace{[0,0.01]^\trans}_{\triangleq A_2},\underbrace{[-0.01,0]^\trans}_{\triangleq A_3},\underbrace{[0, -0.01]^\trans}_{\triangleq A_4}}$, and the transition dynamics are given by:
\begin{equation*}
s_{t+1} = \begin{cases}
& \Pi_{[-1,1]^2}\bs{s_t + \frac{a_t}{10}} \quad \text{w.p.} \quad 1 - \sigma \\
& \Pi_{[-1,1]^2}\bs{s_t - \frac{s_t}{10 \norm{s_t}_2}} \quad \text{w.p.} \quad \sigma 
\end{cases}
\end{equation*}
\begin{figure}[h]
\centering
\includegraphics[width=0.5\linewidth]{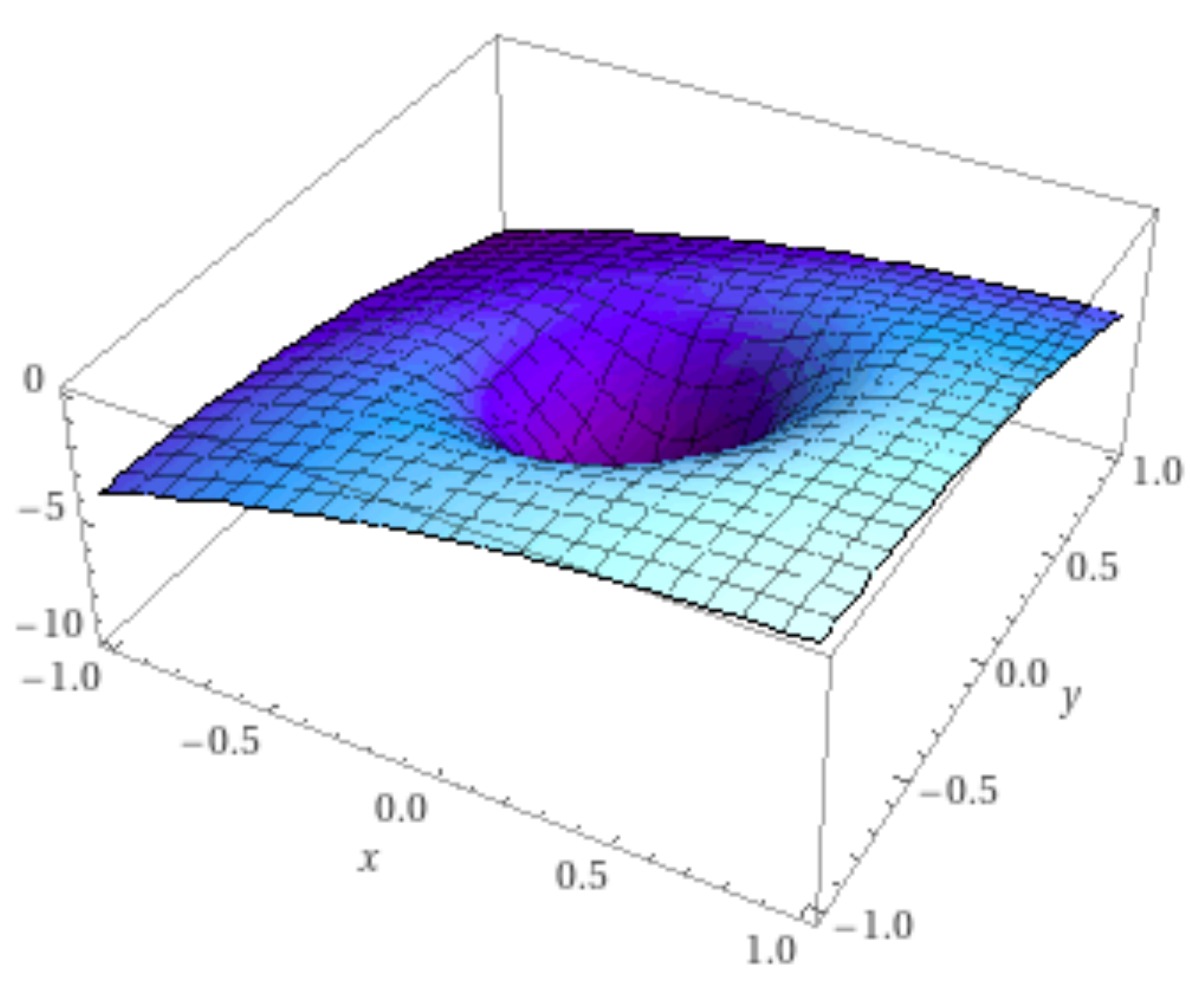}
\caption{\label{fig:linMDPenv} Graphical representation of $- \true$ of the linear MDP used in \Cref{fig:linMDP,fig:noisy_lin_MDP}.}
\end{figure}
Thus, with probability $\sigma$, the environment does not respond to the action taken by the agent, but it takes a step towards the low reward area centered at the origin, i.e., $- \frac{s_t}{10 \norm{s_t}_2}$. The agent should therefore pass far enough from the origin. Consider $$\phi(s,a) = \phi([x,y],a) = \bs{x^2, y^2, x, y, e^{-8(x^2 + y^2)}, \mathds{1}\bc{x\in [0.95, 1], y \in [-1, -0.95]}, \mathbf{e}^\trans_a}$$ with $$ \mathbf{e}_a = \bs{\mathds{1}\bc{a = A_1}, \mathds{1}\bc{a = A_2}, \mathds{1}\bc{a = A_3}, \mathds{1}\bc{a = A_4}}^\trans.$$

Notice that Assumption~\ref{ass:LinMDP} holds only for the cost $\true = \phim [1,1,-2,-2,80, -100,2,2,2,2]^\trans$ while for the dynamics the linearity assumption does not hold.
\subsection{Numerical verification of the finite horizon improvement.}
\label{exp:finite_horizon}
We test BRIG (\Cref{alg:BRimitation_finite_horizon}) in a toy finite horizon problem. In particular, we consider a linear bandits problem ($H=1$) with true cost function $\true = \phim \mathbf{w}_{\mathrm{true}}$ where $\phim$ entries are sampled from a normal distribution. For $\mathbf{w}_{\mathrm{true}}$ we choose $\mathbf{w}_{\mathrm{true}}(i) = 0$ if $i$ is odd and $\mathbf{w}_{\mathrm{true}}(i) = 1$ otherwise.
We generate the expert dataset sampling $10$ actions from a softmax expert. The results are shown in \Cref{fig:bandits_test}. They confirm the theoretical findings that BRIG outperforms ILARL for finite horizon problems in terms of MDP trajectories.
\begin{figure}[h]
\centering
\includegraphics[width=0.5\linewidth]{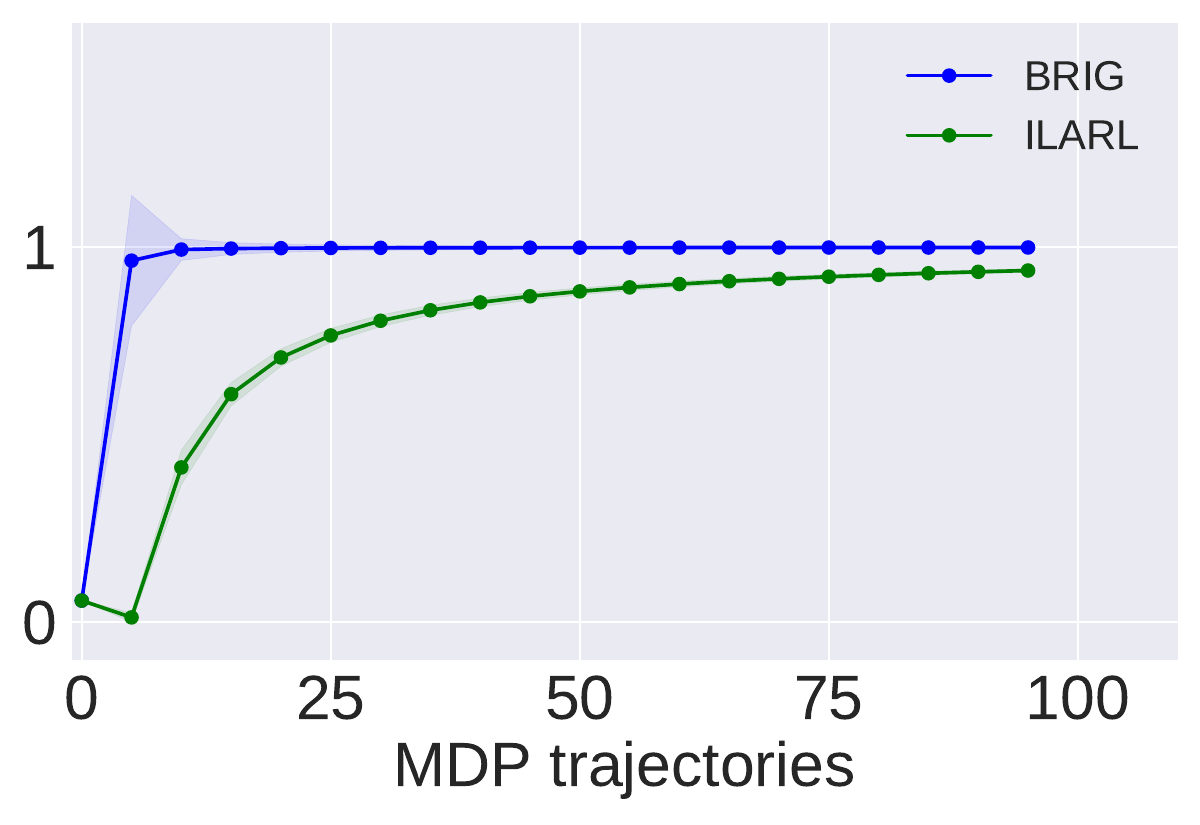}
\caption{\label{fig:bandits_test} Experiment in finite horizon setting to assess the better efficiency of BRIG.}
\end{figure}

\if 0
\section{Provable efficient imitation learning in finite horizon linear MDP.}
We show that the result in the previous section has an important application in the context of imitation learning.
\begin{algorithm}[t]
  \caption{Imitation Learning Algorithm.\label{alg:imitation_finite_horizon}}
  
  \begin{algorithmic}[1]
    \STATE {\bfseries Input:} Dataset size $\tau$, Exploration parameter $\beta$, Step size $\eta$ , Reward step size $\alpha$.
    \STATE Initialize $\pi_0$ as uniform distribution over $\aspace$.
    \STATE Estimate expert features expectation vectors $\phim^\trans d^{\widehat{\expert}}_h$.
    \FOR{$j=1,\ldots \floor{K/\tau}$}
    \STATE Denote the indices interval  $ T_j \triangleq [(j-1)\floor{K/\tau}, j\floor{K/\tau})$.
      
      \STATE \textcolor{blue}{// Collect on-policy data}
      \STATE Collect $\tau$ episodes with policy $\pi^k= \pi^{(j)}$ and store them in the dataset 
      $\mathcal{D}^{(j)}_h = \bc{(s^i_h,a^i_h, \cost^i_h, s^i_{h+1})}_{i\in T_j}$.
      \STATE Denote global dataset $\mathcal{D}^{(j)} = \cup^H_{h=1}\mathcal{D}_h$.
      \FOR{$k \in T_j$}
          \STATE \textcolor{red}{// Cost update}
          \STATE Estimate features expectation vector $\widehat{\phim^\trans d^{\pi^k}_h}$.
          \STATE $\weight^{k+1}_h = \Pi_{\mathcal{W}}\bs{\weight_h^{k} - \alpha(\phim^\trans d^{\widehat{\expert}}_h - \widehat{\phim^\trans d^{\pi^k}_h})}$ with $\mathcal{W} = \bc{ \weight : \norm{\weight}\leq 1}$
          \STATE \textcolor{blue}{// Optimistic policy evaluation}
          \STATE Initialize $V^k_{H+1}=0$
          \FOR {$h=H, \dots, 1$}
            \STATE $\Lambda^k_h = \sum_{(s,a)\in\mathcal{D}^{(j)}_h} \phi(s,a)\phi(s,a)^\trans + I $
            \STATE $\mbf{v}^k_h = (\Lambda^k_h)^{-1} \sum_{(s,a,s')\in\mathcal{D}^{(j)}_h} \phi(s,a)V^k_{h+1}(s')$
            \STATE $b^k_h(s,a) = \beta \norm{\phi(s,a)}_{(\Lambda^k_h)^{-1}}$
            \STATE $Q^k_h = \bs{\phim \mbf{w}^k_h + \phim \mbf{v}^k_h + b^k_h }^H_0$
            \STATE $V^k_{h}(s) = \innerprod{\pi^k_h(s)}{Q^k_h(s,\cdot)}$.
          \ENDFOR
          \ENDFOR
      \STATE \textcolor{blue}{// Policy Improvement Step}
      \STATE Compute average $Q$ value $\bar{Q}^{(j)}_h(s,a) = \frac{1}{\tau}\sum_{k\in T_j} Q^k_h(s,a)$.
      \STATE Update policy
      \begin{equation*}
      \pi_h^{(j+1)}(a|s) \propto \exp\br{-\eta \sum^j_{i=1} \bar{Q}^{(i)}_h(s,a)}
      \end{equation*}
      
    \ENDFOR
  \end{algorithmic}
\end{algorithm}
\subsection{Analysis}
\begin{theorem}
Let us consider $K = \mathcal{O}(\frac{H^6 \log \abs{\aspace} d^3}{\epsilon^4}  \log^2(\frac{H d \log \abs{\aspace}}{\epsilon \delta} )) $ iterations of \Cref{alg:imitation_finite_horizon} run with $\eta  = \sqrt{\frac{\tau \log \abs{A}}{K H^2}}$, $\alpha=\sqrt{\frac{H}{2K}}$ 
and expert demonstrations $\tau_E = \frac{2 H^2 d\log(2 d / \delta)}{\epsilon_E^2}$ and $\tau = \mathcal{O}\br{\frac{H d^{3/2}\sqrt{K}\log(2 d / \delta)}{\sqrt{\log \abs{\aspace}}}}$. Moreover, let $\hat{k}$ be an iteration index sampled uniformely at random from $\bc{1,2, \dots, K}$, then it holds that with probability $1 - 3\delta$, it holds that
\begin{equation*}
  \mathbb{E}_{\hat{k}}\bs{V^{\pi^{\hat{k}}}_1(s_1; \true) - V^{\expert}_1(s_1; \true)} \leq \mathcal{O}(\epsilon_E + \epsilon).
\end{equation*}
\end{theorem}
\begin{proof}
\begin{equation*}
\mathbb{E}_{\hat{k}}\bs{ V^{\pi^{\hat{k}}}_1(s_1; \true) - V^{\widehat{\expert}}_1(s_1; \true)} \leq 
\max_{\cost\in\mathcal{C}}\mathbb{E}_{\hat{k}}\bs{V^{\pi^{\hat{k}}}_1(s_1; \cost) - V^{\widehat{\expert}}_1(s_1; \cost)} 
\end{equation*}
At this point recalling that $\mathcal{C} = \bc{ \cost | \cost_h = \phim^\trans\weight_h, \norm{\weight_h} \leq 1}$, we can rewrite the last display as
\begin{align*}
  \mathbb{E}_{\hat{k}}\bs{V^{\pi^{\hat{k}}}_1(s_1; \true) - V^{\widehat{\expert}}_1(s_1; \true)} 
  &\leq 
  \max_{\weight: \norm{\weight_h}\leq 1} \mathbb{E}_{\hat{k}}\bs{ V^{\pi^{\hat{k}}}_1(s_1; \phim^\trans\weight) - V^{\widehat{\expert}}_1(s_1; \phim^\trans\weight)} 
\\ & = \max_{\weight: \norm{\weight_h}\leq 1} \frac{1}{K}\sum^K_{k=1}\bs{V^{\pi^k}_1(s_1; \phim^\trans\weight) - V^{\widehat{\expert}}_1(s_1; \phim^\trans\weight)} 
\\ & \quad \quad - \bs{V^{\pi^k}_1(s_1; \phim^\trans\weight^k) - V^{\widehat{\expert}}_1(s_1; \phim^\trans\weight^k)} 
\\ & \phantom{=}+ \frac{\mathrm{Regret}(T;\widehat{\expert})}{K}
\\ & =  \frac{1}{K}\sum^H_{h=1} \max_{\weight_h: \norm{\weight_h}\leq 1} \sum^K_{k=1}\innerprod{\phim^\trans( d^{\widehat{\expert}}_h - d^{\pi^k}_h)}{\mathbf{w}^k_h - w_h} + \frac{\mathrm{Regret}(T;\widehat{\expert})}{K}
\end{align*}
The first term is the regret for the cost update and the 
difference between the features expectation vectors 
$\phim^\trans( d^{\widehat{\expert}}_h - d^{\pi^k}_h)$ is used to update the cost. 
This explains the update in Step 10 of \Cref{alg:imitation_finite_horizon}.
In particular, it can be bounded as follows
\begin{align*}
  \sum^H_{h=1} \max_{\weight_h: \norm{\weight_h}\leq 1} \sum^K_{k=1}\innerprod{\phim^\trans( d^{\widehat{\expert}}_h - d^{\pi^k}_h)}{\mathbf{w}^k_h - w_h} &= 
  \sum^H_{h=1} \max_{\weight_h: \norm{\weight_h}\leq 1} \sum^K_{k=1}\innerprod{\phim^\trans d^{\widehat{\expert}}_h - \widehat{\phim^\trans d^{\pi^k}_h}}{\mathbf{w}^k_h - w_h}
  \\&\phantom{=}+ 
  \sum^H_{h=1} \max_{\weight_h: \norm{\weight_h}\leq 1} \sum^K_{k=1}\innerprod{\widehat{\phim^\trans d^{\pi^k}_h} - \phim^\trans d^{\pi^k}_h}{\mathbf{w}^k_h - w_h}
  \\&\leq \frac{H}{\alpha} + 2 \alpha K H + \sqrt{d}
  \sum^H_{h=1} \sum^K_{k=1}\norm{\widehat{\phim^\trans d^{\pi^k}_h} - \phim^\trans d^{\pi^k}_h}_{\infty}
\\&\leq \frac{H}{\alpha} + 2 \alpha K H + KH\sqrt{\frac{2d\log(2 d /  \delta)}{\tau}}
\end{align*}
\end{proof}
where the second last inequality uses standard results from mirror descent analysis on the first term. 
Finally the last inequality holds with probability $1 - \delta$ and it is proven thanks to
the property of the features expectation vector estimators \cite{Syed:2007} adapted to the finite horizon case.
 In the end, we have to control the error in estimating the expert value function. This can be easily done leveraging
 Azuma-Hoeffding inequality. That is,
\begin{align*}
  V^{\expert}_1(s_1; \true)- V^{\widehat{\expert}}_1(s_1; \true) 
  &=
  \sum^H_{h=1}\innerprod{\phim^\trans d^{\expert}_h - \phim^\trans d^{\widehat{\expert}}_h}{w_{\mathrm{true},h}} 
  \\&\leq H \sqrt{d}\max_{h\in[H]}\norm{\phim^\trans d^{\expert}_h - \phim^\trans d^{\widehat{\expert}}_h}_{\infty} 
  \\&\leq H \sqrt{\frac{2d\log(2 d /  \delta)}{\tau_E}}
\end{align*}
where the last inequality holds with probability $1 - \delta$.
Then, using \Cref{thm:adversarial} selecting $\eta  = \sqrt{\frac{\tau \log \abs{A}}{K H^2}}$, $\alpha=\sqrt{\frac{H}{2K}}$ 
and $\tau_E = \frac{H^2 2\log(2 d / \delta)}{\epsilon_E^2}$ and using a union bound gives that with probability $1 - 3 \delta$
\begin{align*}
  \mathbb{E}_{\hat{k}}\bs{V^{\pi^{\hat{k}}}_1(s_1; \true) - V^{\expert}_1(s_1; \true)} &\leq 
   2\sqrt{\frac{2 H}{K} }
  + \epsilon_E
  \\&\phantom{=}+ \frac{1}{K}\br{2 H \sqrt{\tau K \log \abs{\aspace}} + \tau H + \frac{KH (5 \beta \sqrt{d}\log (2 \tau / \delta) + \sqrt{2 d\log(2 d /\delta)})}{\sqrt{\tau}}}
  \\ & \leq  2\sqrt{\frac{2 H}{K} }
  + \epsilon_E + \frac{1}{K}\br{2 H \sqrt{\tau K \log \abs{\aspace}} + \tau H + \mathcal{O}\br{\frac{KH^2 d^{3/2}\log(2Kd/\delta)}{\sqrt{\tau}}}}.
\end{align*}
Therefore, selecting $\tau = \mathcal{O}\br{\frac{H d^{3/2} \sqrt{K}\log(2 K d / \delta)}{\sqrt{\log \abs{\aspace}}}}$, we obtain that with probability $1 - 3 \delta$
\begin{align*}
  \mathbb{E}_{\hat{k}}\bs{V^{\pi^{\hat{k}}}_1(s_1; \true) - V^{\expert}_1(s_1; \true)} \leq  2\sqrt{\frac{2 H}{K} }
  + \epsilon_E + \mathcal{O}\br{\frac{H^{3/2}\log^{1/4}\abs{\aspace}d^{3/4} \sqrt{\log(2 d K / \delta)}}{K^{1/4}}}
\end{align*}
Then, selecting $K \geq \mathcal{O}(H^6 \log \abs{\aspace} d^3 \epsilon^{-4} \log^2(2 d K / \delta))$ which is attained by $K = \mathcal{O}(\frac{H^6 \log \abs{\aspace} d^2}{\epsilon^4}  \log^2(\frac{H d \log \abs{\aspace}}{\epsilon \delta} )) $ in virtue \citep[Lemma D.2]{sherman2023} ensures that with probability $1 - 3 \delta$
\begin{equation*}
  \mathbb{E}_{\hat{k}}\bs{V^{\pi^{\hat{k}}}_1(s_1; \true) - V^{\expert}_1(s_1; \true)} \leq \mathcal{O}(\epsilon_E + \epsilon).
\end{equation*}
\fi
\subsection{Hyperparameters}
For the experiments in \Cref{fig:linMDP,fig:noisy_lin_MDP} we used $\eta = 1$, $\tau = 5$ and $\beta = 8$. For IQlearn, we also collect $5$ trajectory to perform each update on the $Q$-function, and we use again $\eta = 1$ and $0.005$ as stepsize for the $Q$-function weights. For PPIL, we use batches of $5$ trajectories, $20$ gradient updates between each batch collection, $\eta = 1$ and and $0.005$ as stepsize for the $Q$-function weights. 
For GAIL and AIRL, we use the default hyperparameters in \url{https://github.com/Khrylx/PyTorch-RL} but we obtained a better prerformance with a larger batch size of $6144$ states and we use linear models rather than neural networks. For REIRL, we used the implementation in \cite{viano2021robust} but again we increased the batch size equal to $6144$ states for achieving a better performance.
\section{Reducing the number of expert trajectories. }
In this section, we show that the number of required expert trajectories can be further reduced at the price of additional assumption on the expert policy, features and expert occupancy measure.
The estimator we use is build on the ideas underling Mimic-MD in the linear case \cite{rajaraman2021value}. 
\begin{remark}Using such an estimator in ILARL or BRIG allows to improve upon Mimic-MD in two ways. Indeed ILARL and BRIG are provably computationally efficient algorithms and do not require knowledge of the dynamics. On the other hand, Mimic-MD requires perfect knowledge of the transition dynamics and it is unclear if the output policy can be computed efficiently in Linear MDPs.
\end{remark}
To this goal, we need to consider the following estimator for $\phim^\trans d^{\expert}$, where we denote via $(s_h^\tau, a_h^\tau)$ the state action pair encountered at step $h$ in the trajectory $\tau$
\begin{align}
    \widetilde{\phim^\trans d^{\expert}} &= (1-\gamma)\mathbb{E}_{\tau\sim\expert}\bs{\sum^{\infty}_{h=1} \gamma^h \phi(s^\tau_h,a^\tau_h) \mathds{1}\bs{s_t \in \mathcal{K} \quad \forall s_t\in\tau}} \nonumber \\&\phantom{=} + (1-\gamma)\mathbb{E}_{\tau \sim\mathrm{Unif}( D_1)}\bs{\sum^{\infty}_{h=1} \gamma^h \phi(s^\tau_h,a^\tau_h) \mathds{1}\bs{\exists s_h \in \tau ~s.t.~s_h \notin \mathcal{K}}} 
\label{eq:alternative_estimator}
\end{align}
where we split the expert dataset $\mathcal{D}_{\expert}$ in two disjoint  halves $ D_0, D_1$. The first $D_0$ is used to compute the set $\mathcal{K}$ which is according to \citep[Definition 7]{rajaraman2021value} the set where the policy $\pi_{BC}$ learned via Behavioural Cloning on the input dataset $D_0$ equals the expert policy. That is, $\mathcal{K}=\bc{s \in \mathcal{S}~~~s.t.~~~ \pi_{BC}(s) = \expert(s)}$\footnote{Notice that we consider a deterministic expert in this section as done in \cite{rajaraman2021value}. Therefore, we consider policies as mapping from states to actions, i.e. $\pi: \sspace \rightarrow \aspace$}. The other half denoted via $D_1$ is used for the second term in \ref{eq:alternative_estimator}.
In the analysis of \cite{rajaraman2021value} the first term can be computed thanks to the perfect knowledge of the dynamics. In our case, we have only trajectory access so we use the estimator
\begin{align}
    \underline{\phim^\trans d^{\expert}} &= (1-\gamma)\mathbb{E}_{\tau\sim\mathrm{Unif}(\mathcal{D}_{\pi_{BC}})}\bs{\sum^{\infty}_{h=1} \gamma^h \phi(s^\tau_h,a^\tau_h) \mathds{1}\bs{s_t \in \mathcal{K} \quad \forall s_t \in \tau}} \nonumber\\&\phantom{=}+ (1-\gamma)\mathbb{E}_{\tau \sim\mathrm{Unif}( D_1)}\bs{\sum^{\infty}_{h=1} \gamma^h \phi(s^\tau_h,a^\tau_h) \mathds{1}\bs{\exists s_h \in \tau ~~~ s.t.~~~s_h \notin \mathcal{K}}} 
\label{eq:alternative_estimator_2}
\end{align}
where the dataset $\mathcal{D}_{\pi_{BC}}$ contains trajectories sampled according to $\pi_{BC}$.
\begin{lemma}
Let us consider the estimator $\underline{\phim^\trans d^{\expert}}$ with the set $\mathcal{K}$ be the confidence set for a binary linear classifier as defined in \citep[Section 4.1]{rajaraman2021value}, let the expert policy be deterministic ans satisfy the Linear Expert Assumption \citep[Definition 4 ]{rajaraman2021value}.Moreover consider features that satisfy $-\phi(s,1) = \phi(s,0) = s/2$ for all $s \in \sspace$ where the state space is chosen to be $\mathbb{R}^d$. Finally, let consider that $\sum_{a \in \aspace}d^{\expert}(\cdot, a)$ is the uniform distribution $\mathrm{Unif}(\sspace)$, then it holds that for any $\delta > 0$
\begin{equation}
    \mathbb{E}\norm{\underline{\phim^\trans d^{\expert}} - \phim^\trans d^{\expert}}_{\infty} \leq \frac{1}{1-\gamma}\sqrt{\frac{\log (d/\delta)}{2\abs{\mathcal{D}_{\pi_{BC}}}}} + \frac{\delta}{1 - \gamma} + \mathcal{O}\br{\frac{d^{5/4} \log d }{(1 - \gamma)^{3/2} \abs{\mathcal{D}_{\expert}}}}
\end{equation}
\end{lemma}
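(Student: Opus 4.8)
The plan is to introduce the population-level Mimic-MD estimator $\widetilde{\phim^\trans d^{\expert}}$ of \eqref{eq:alternative_estimator} as a pivot and split the error by the triangle inequality,
\begin{equation*}
\mathbb{E}\norm{\underline{\phim^\trans d^{\expert}} - \phim^\trans d^{\expert}}_{\infty} \leq \underbrace{\mathbb{E}\norm{\underline{\phim^\trans d^{\expert}} - \widetilde{\phim^\trans d^{\expert}}}_{\infty}}_{(A)} + \underbrace{\mathbb{E}\norm{\widetilde{\phim^\trans d^{\expert}} - \phim^\trans d^{\expert}}_{\infty}}_{(B)}.
\end{equation*}
Since $\underline{\phim^\trans d^{\expert}}$ and $\widetilde{\phim^\trans d^{\expert}}$ share the same second ($D_1$) summand verbatim, $(A)$ reduces to the difference of the two \emph{first} terms, whereas $(B)$ is the bias of the Mimic-MD construction with respect to the true feature expectation. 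I expect $(A)$ to produce the first two summands of the claim and $(B)$ the last one.

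For $(A)$, the key structural fact is that $\pi_{BC}$ and $\expert$ coincide on $\mathcal{K}$, since $\mathcal{K}=\bc{s:\pi_{BC}(s)=\expert(s)}$ by definition. Hence any trajectory all of whose states lie in $\mathcal{K}$ has \emph{identical} law under $\pi_{BC}$ and under $\expert$, and the indicator $\mathds{1}\bs{s_t\in\mathcal{K}\ \forall\, s_t\in\tau}$ makes the population version of the first term of $\underline{\phim^\trans d^{\expert}}$ (over $\tau\sim\pi_{BC}$) exactly equal to the first term of $\widetilde{\phim^\trans d^{\expert}}$ (over $\tau\sim\expert$). Thus $(A)$ is purely the deviation of an empirical average over the $\abs{\mathcal{D}_{\pi_{BC}}}$ i.i.d.\ trajectories from its mean. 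Each per-trajectory summand $\sum_{h\geq1}\gamma^h\phi(s_h,a_h)\mathds{1}\bs{\cdots}$ is bounded coordinatewise by $(1-\gamma)^{-1}$, so a coordinatewise Hoeffding inequality and a union bound over the $d$ features give, with probability $1-\delta$, a deviation at most $\tfrac{1}{1-\gamma}\sqrt{\tfrac{\log(d/\delta)}{2\abs{\mathcal{D}_{\pi_{BC}}}}}$; converting this to an expectation bound with \Cref{lemma:conversion} (the $\infty$-norm deviation being itself at most $(1-\gamma)^{-1}$) adds the $\tfrac{\delta}{1-\gamma}$ term, matching the first two summands.

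For $(B)$, I would exploit that, conditionally on $\mathcal{K}$ (a function of $D_0$ only), the second term of $\widetilde{\phim^\trans d^{\expert}}$ is an \emph{unbiased} estimator, because $D_1$ consists of fresh i.i.d.\ expert trajectories independent of $D_0$. Decomposing the true $\phim^\trans d^{\expert}$ according to whether the expert trajectory stays in $\mathcal{K}$, the in-$\mathcal{K}$ parts cancel against the first term of $\widetilde{\phim^\trans d^{\expert}}$, leaving only the fluctuation of the $D_1$ average on the rare exit event $\bc{\exists\, s_h\in\tau:\ s_h\notin\mathcal{K}}$. This fluctuation is governed by its variance, and the per-trajectory summand vanishes unless the trajectory leaves $\mathcal{K}$, so its second moment is at most $(1-\gamma)^{-2}$ times the exit probability $p\triangleq\mathbb{P}_{\expert}\bs{\exists\, h:\ s_h\notin\mathcal{K}}$. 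The remaining task is to bound $\mathbb{E}_{D_0}[p]$: under the Linear Expert Assumption \citep[Definition 4]{rajaraman2021value}, the feature choice $-\phi(s,1)=\phi(s,0)=s/2$, and the uniform expert state occupancy, $\mathcal{K}$ is the confident region of a realizable binary linear classifier trained on $D_0$, whose disagreement region has uniform measure $\widetilde{\mathcal{O}}(d/\abs{D_0})$; the effective horizon $(1-\gamma)^{-1}$ then amplifies this single-state measure into the trajectory exit probability. Feeding these estimates into the variance bound and taking the $\infty$-norm over the $d$ coordinates, with $\abs{D_0},\abs{D_1}=\Theta(\abs{\mathcal{D}_{\expert}})$, yields $(B)=\mathcal{O}\br{\tfrac{d^{5/4}\log d}{(1-\gamma)^{3/2}\abs{\mathcal{D}_{\expert}}}}$.

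The routine part is $(A)$; the main obstacle is $(B)$, and specifically the quantitative control of the measure of $\sspace\setminus\mathcal{K}$ under the expert occupancy and its translation into the fast $1/\abs{\mathcal{D}_{\expert}}$ rate rather than $1/\sqrt{\abs{\mathcal{D}_{\expert}}}$. This is delicate because it requires the linear-classifier confidence set of \citep[Section 4.1]{rajaraman2021value} to shrink its uncertain region at rate $\widetilde{\mathcal{O}}(d/\abs{D_0})$ under the uniform-occupancy assumption, and the small exit probability must \emph{multiply} the $D_1$ estimation error instead of adding to it. Tracking the precise $d$-dependence through the version-space disagreement-region measure, the effective-horizon amplification, and the coordinatewise $\infty$-norm concentration is exactly what produces the $d^{5/4}\log d$ and $(1-\gamma)^{-3/2}$ factors, and this is the single step that genuinely relies on all three structural assumptions (linear expert, the explicit features, and uniform expert occupancy).
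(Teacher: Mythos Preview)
Your decomposition into (A) and (B) is exactly the paper's split into the first-term error $E$ and the second-term error $E_2$, and your treatment of each piece matches: Hoeffding over $\mathcal{D}_{\pi_{BC}}$ plus \Cref{lemma:conversion} for (A), and for (B) a variance-type bound on the $D_1$ fluctuation whose second moment is controlled by the exit probability $\mathbb{P}_{\expert}[\exists\,h:\ s_h\notin\mathcal{K}]$, followed by a union bound over the geometric trajectory length. The only slip is the per-state disagreement rate: the paper invokes \citep[Theorem~7]{rajaraman2021value}, which gives $\mathbb{P}[s_h\notin\mathcal{K}]=\mathcal{O}(d^{3/2}\log d/\abs{D_0})$ rather than your $\widetilde{\mathcal{O}}(d/\abs{D_0})$; it is this $d^{3/2}$, combined with the extra factor $\sqrt{d/\abs{D_1}}$ from the $\infty$-norm concentration and the $(1-\gamma)^{-1}$ expected trajectory length inside the square root, that produces exactly the $d^{5/4}$ and $(1-\gamma)^{-3/2}$ exponents.
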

\begin{remark}
    The Lemma above follows the construction in \cite{rajaraman2021value} to show that there exists one example under which ILARL used with estimator $\underline{\phim^\trans d^{\expert}}$ requires only $\widetilde{\mathcal{O}}(d^{5/4}\epsilon^{-1} (1 - \gamma)^{-3/2})$ expert trajectories. However, it remains open to prove that the same holds true for general expert in Linear MDPs without further assumptions on the features and expert occupancy measure.
\end{remark}
\begin{proof}
The error can be controlled as follow 
\begin{align*}
    E \triangleq \mathbb{E}_{\tau\sim\expert}\bs{\sum^{\infty}_{h=1} \gamma^h \phi(s^\tau_h,a^\tau_h) \mathds{1}\bs{s_t \in \mathcal{K} \quad \forall s_t\in\tau}} - \mathbb{E}_{\tau\sim\mathrm{Unif}(\mathcal{D}_{\pi_{BC}})}\bs{\sum^{\infty}_{h=1} \gamma^h \phi(s^\tau_h,a^\tau_h) \mathds{1}\bs{s_t \in \mathcal{K} \quad \forall s_t \in \tau}} 
\end{align*}
so denoting $X(\tau) \triangleq \sum^{\infty}_{h=1} \gamma^h \phi(s^\tau_h,a^\tau_h) \mathds{1}\bs{s_t \in \mathcal{K} \quad \forall s_t \in \tau}$ and noticing that by definition of $\mathcal{K}$ we have that
\begin{equation*}
 \mathbb{E}_{\tau\sim\mathrm{Unif}(\mathcal{D}_{\pi_{BC}})}\bs{\sum^{\infty}_{h=1} \gamma^h \phi(s^\tau_h,a^\tau_h) \mathds{1}\bs{s_t \in \mathcal{K} \quad \forall s_t \in \tau}} = \mathbb{E}_{\tau\sim\mathrm{Unif}(\mathcal{D}_{\expert})}\bs{\sum^{\infty}_{h=1} \gamma^h \phi(s^\tau_h,a^\tau_h) \mathds{1}\bs{s_t \in \mathcal{K} \quad \forall s_t \in \tau}},
\end{equation*}
we can rewrite $E$ as a martingale difference sequence
\begin{align*}
    \norm{E}_{\infty} &= \norm{\mathbb{E}_{\tau \sim \expert}\bs{X(\tau)} - \frac{1}{\abs{\mathcal{D}_{\pi_{BC}}}}
    \sum_{\tau\in \mathcal{D}_{\pi_{BC}}} X(\tau)}_{\infty} \\ &= \norm{\frac{1}{\abs{\mathcal{D}_{\pi_{BC}}}}
    \sum_{\tau\in \mathcal{D}_{\pi_{BC}}} \br{X(\tau) - \mathbb{E}_{\tau \sim \expert}\bs{X(\tau)}}}_{\infty}
    \\ &\leq \frac{1}{1-\gamma}\sqrt{\frac{\log (d/\delta)}{2\abs{\mathcal{D}_{\pi_{BC}}}}} \quad \text{w.p.} \quad 1 - \delta
\end{align*}
Therefore choosing $\abs{\mathcal{D}_{\pi_{BC}}} = \frac{\log(d/\delta)}{2 \epsilon^2 (1 - \gamma)^2}$ ensures $E \leq \epsilon$ with probability at least $1 - \delta$. Therefore by \Cref{lemma:conversion},

\begin{equation*}
    \mathbb{E}\norm{E}_{\infty} \leq \frac{1}{1-\gamma}\sqrt{\frac{\log (d/\delta)}{2\abs{\mathcal{D}_{\pi_{BC}}}}} + \frac{\delta}{1 - \gamma}
\end{equation*}
These trajectories can be simulated in the MDP therefore the latter it is not a requirement on the expert dataset size.
The number of expert trajectories is crucial to control the error due to the trajectories containing trajectories not in $\mathcal{K}$, i.e. \Cref{eq:alternative_estimator_2}. Denoting this error as $E_2$ we have
\begin{align*}
    &\mathbb{E}_{D_0,D_1}\norm{E_2}_{\infty} \\&= \mathbb{E}_{D_0,D_1} \bigg [ \bigg | \bigg | \mathbb{E}_{\tau \sim \expert}\bs{\sum^{\infty}_{h=1} \gamma^h \phi(s^\tau_h,a^\tau_h) \mathds{1}\bs{\exists s_h \in \tau ~s.t.~s_h \notin \mathcal{K}}} \\&\phantom{=} -\mathbb{E}_{\tau \sim\mathrm{Unif}( D_1)}\bs{\sum^{\infty}_{h=1} \gamma^h \phi(s^\tau_h,a^\tau_h) \mathds{1}\bs{\exists s_h \in \tau ~ s.t.~s_h \notin \mathcal{K}}}\bigg | \bigg |_{\infty} \bigg ]
    \\ &\leq \frac{1}{(1 - \gamma)}\sqrt{\frac{d}{\abs{D_1}} \mathbb{E}_{\tau\sim D_0}\bs{\mathds{1}\bs{\exists s_h \in \tau ~ s.t.~s_h \notin \mathcal{K}}}}
    \\ &= \frac{1}{(1 - \gamma)}\sqrt{\frac{d}{\abs{D_1}} \mathbb{E}_{\mathrm{lenght}(\tau)}\bs{\mathbb{E}_{\tau\sim D_0 | \mathrm{lenght}(\tau)}\bs{\mathds{1}\bs{\exists s_h \in \tau ~ s.t.~s_h \notin \mathcal{K}}}}} \quad \text{Tower Property of Expectation}
    \\ &\leq \frac{1}{(1 - \gamma)}\sqrt{\frac{d}{\abs{D_1}} \mathbb{E}_{\mathrm{lenght}(\tau)}\bs{ \sum^{\mathrm{lenght}(\tau)}_{h=1}\mathbb{E}_{\tau\sim D_0 | \mathrm{lenght}(\tau)}\bs{\mathds{1}\bs{s_h \notin \mathcal{K}}}}} \quad \text{Union Bound}
    \\ & \leq \frac{1}{(1 - \gamma)}\sqrt{\frac{d}{\abs{D_1}} \mathbb{E}_{\mathrm{lenght}(\tau)}\bs{ \sum^{\mathrm{lenght}(\tau)}_{h=1} \mathcal{O}\br{\frac{d^{3/2}\log d}{\abs{D_0}}}}} \quad \text{Thanks to \citep[Theorem 7]{rajaraman2021value}}
    \\ & \leq \mathcal{O}\br{\frac{1}{(1 - \gamma)}\sqrt{\frac{d^{5/2}\log d}{\abs{D_1}^2} \mathbb{E}_{\mathrm{lenght}(\tau)}\bs{ \mathrm{lenght}(\tau) }}} \quad \text{Using that $\abs{D_0} = \abs{D_1}$ by construction}
    \\ & \leq \mathcal{O}\br{\frac{1}{(1 - \gamma)}\sqrt{\frac{d^{5/2}\log d}{\abs{D_1}^2} \frac{1}{1 - \gamma}}}
    \\ & = \mathcal{O}\br{\frac{d^{5/4}\log d}{(1 - \gamma)^{3/2}\abs{D_1}}}
\end{align*}
Where we used \citep[Theorem 7]{rajaraman2021value} to bound
\begin{equation*}
    \mathbb{E}_{\tau\sim D_0|\mathrm{lenght}(\tau)}\bs{\mathds{1}\bs{s_h \notin \mathcal{K}}} \leq \mathcal{O}\br{\frac{d^{3/2}\log d}{\abs{D_0}}}
\end{equation*}
so overall
\begin{equation*}
    \mathbb{E}_{D_0,D_1}\norm{E_2}_{\infty} \leq \mathcal{O}\br{\frac{d^{5/4} \log d }{(1 - \gamma)^{3/2} \abs{D_1}}}
\end{equation*}
\end{proof}
\label{app:better_expert}
\end{document}